\documentclass{article}

\usepackage[preprint]{neurips_2026}

\usepackage[utf8]{inputenc} 
\usepackage[T1]{fontenc}    
\usepackage{hyperref}       
\usepackage{url}            
\usepackage{booktabs}       
\usepackage{amsfonts}       
\usepackage{nicefrac}       
\usepackage{microtype}      
\usepackage{xcolor}         
\usepackage{todonotes}

\usepackage{amsmath, amsthm, thmtools, thm-restate, }

\usepackage{etoolbox} 
\makeatletter
\newcommand{\nonumberthanks}[1]{%
  \begingroup
  \renewcommand{\thefootnote}{}
  \renewcommand{\@makefnmark}{}
  \footnotetext{#1}%
  \endgroup
}
\makeatother

\newtheorem{theorem}{Theorem}[section]

\newtheorem{lemma}[theorem]{Lemma}

\theoremstyle{remark}
\newtheorem{remark}[theorem]{Remark}
\theoremstyle{definition}

\newtheorem{assumption}{Assumption}

\DeclareMathOperator*{\argmax}{arg\,max}

\DeclareMathOperator*{\lin}{lin}

\DeclareMathOperator*{\diag}{diag}

\DeclareMathOperator{\softmax}{smx}
\DeclareMathOperator{\softmaxr}{sxr}
\DeclareMathOperator*{\interior}{int}
\DeclareMathOperator*{\op}{op} 
\DeclareMathOperator*{\const}{const}

\DeclareMathOperator*{\Ent}{H}
\DeclareMathOperator*{\MSE}{MSE}
\DeclareMathOperator*{\CE}{CE}
\DeclareMathOperator*{\CEr}{CEr}
\DeclareMathOperator*{\BS}{BS}
\DeclareMathOperator*{\prob}{p}
\DeclareMathOperator*{\Ens}{Ens}
\DeclareMathOperator*{\Lap}{Lap}

\newcommand{\norm}[1]{\left\lVert#1\right\rVert}

\title{The Neural Tangent Kernel for Classification}

\author{%
Jonathan Plenk$^{1,2,*}$,
 Sergio Calvo-Ordoñez$^{1,2,3,*}$, Álvaro Cartea$^{1,2}$,\\
  \textbf{Yarin Gal$^{3}$, Mark van der Wilk$^{4}$, Kamil Ciosek$^{5}$} \\
  $^1$Mathematical Institute, University of Oxford \\
  $^2$Oxford-Man Institute of Quantitative Finance, University of Oxford \\
  $^3$OATML, University of Oxford\\
  $^4$Department of Computer Science, University of Oxford\\
  $^5$Spotify \\
}

\begin{document}
\nonumberthanks{* Equal contribution.}

\maketitle

\begin{abstract}
In wide neural networks, the Neural Tangent Kernel (NTK) remains approximately constant during training, providing a powerful theoretical tool for studying training dynamics, generalization, and connections to kernel methods. However, this theory is largely restricted to regression losses. It was previously thought that training on a classification loss, or more generally losses involving nonlinear output transformations, breaks this property, leading to divergent logits and a breakdown of the linearization. In this paper, we extend NTK theory to classification by identifying conditions under which wide neural networks remain in the lazy training regime. We show that parameter-space regularization ensures a constant NTK during training for cross-entropy loss, while in the absence of regularization the regime is recovered when targets are non-degenerate, i.e. when all classes have strictly positive probability. Under these conditions, training is well-approximated by the linearized model, yielding an explicit characterization of the solution in terms of the NTK. We further analyze the distribution of trained predictors induced by random initialization and relate this notion of model uncertainty to Bayesian methods.
\end{abstract}

\section{Introduction}

The theoretical understanding of deep learning has significantly advanced through the study of wide neural networks via the neural tangent kernel (NTK) \citep{jacot2018neural}. In the infinite-width limit, the NTK remains approximately constant during training, and the network is well-approximated by its parameter linearization \citep{lee2019wide}. This perspective enables a precise analysis of training dynamics and generalization \citep{jacot2018neural, arora2019exact, du2019gradient}, and establishes a close connection between neural networks, kernel methods \citep{cao2019generalization}, and Gaussian processes \citep{lee2019wide, calvo2025observation}. In particular, for mean-squared error (MSE) loss, the trained network exactly fits the training data and admits a tractable Gaussian distribution on test inputs, determined by the random initialization \citep{he2020bayesian}. While this framework has been extensively developed for regression, its applicability beyond this setting remains unexplored.

For classification objectives such as cross-entropy, the logits that perfectly fit the training data diverge, causing the parameters to leave the lazy training regime \citep{yu2025divergence}. As a result, the NTK is no longer constant and the linearization breaks down, preventing a direct extension of NTK theory to classification. \cite{calvo2025observation} showed for MSE loss that parameter-space regularization around initialization preserves the NTK regime and admits a Bayesian interpretation in terms of observation noise. Whether a similar mechanism can recover NTK behavior for classification losses remains unclear, since cross-entropy training is known to produce divergent logits and a breakdown of linearization. In this work, we show that parameter-space regularization is sufficient to recover lazy training for cross-entropy and more general function-space losses, yielding approximation by the linearized model under mild assumptions.


\citet{liu2020linearity} show that when a nonlinearity is placed in the last layer, the NTK is no longer constant during training, an example being a network with softmax outputs. Our key contribution here is to, instead, treat the softmax as part of the function-space loss, which allows us to recover the NTK regime at the level of the pre-activation outputs. In addition to the regularized setting, we consider the case without parameter-space regularization. We show that when the target probabilities are non-degenerate, i.e. assign strictly positive probability to every class, the pre-activation outputs remain in the NTK regime and are well-approximated by their linearization.

Building on these theoretical results, we study the distribution over trained predictors induced by random initialization. This notion of model uncertainty has been previously analyzed in the NTK regime \citep{he2020bayesian} for MSE with extra label noise, where it is shown to be more conservative than Bayesian last-layer methods, but less conservative than the NTK-GP posterior. For general loss functions, the training dynamics of the linearized network remain nonlinear. Nevertheless, we show that the stationary point admits an explicit characterization via an inverse map in function-space. This characterization enables efficient sampling from the trained network, which we use to approximate the distribution over predictors. Empirically, we find that this closely matches the ensemble distribution obtained by retraining the network from different initializations.


Our main contributions can be summarized as follows:
\begin{itemize}
    \item We show that parameter-space regularization preserves the NTK regime for cross-entropy and more general losses, extending NTK theory beyond regression.
    
    \item We show that, even without regularization, the NTK regime is recovered for classification when target probabilities are non-degenerate, covering settings such as label smoothing.
    
    \item We characterize the distribution over trained predictors induced by random initialization, and relate this notion of model uncertainty to Bayesian methods.
    
    \item We provide empirical validation supporting the theoretical predictions.
\end{itemize}

\section{Preliminaries}
\subsection{Wide neural networks in NTK parametrization}
Consider a standard feedforward neural network $f_{\theta}$ with inputs $x$ in a compact space $M^d\subseteq \mathbb{R}^d$, parameters $\theta \in \mathbb{R}^p$, and outputs $f_{\theta}(x) \in \mathbb{R}^K$. For smooth activations, $\theta\mapsto f_{\theta}(x)$ is twice continuously differentiable. For any points $\mathbf{x}_1,\ldots, \mathbf{x}_N \in M^d$ define $f_{\theta}(\mathbf{x}) \in \mathbb{R}^{NK}$ by element-wise application, and similarly the parameter-Jacobian $J_{\theta}(\mathbf{x}) := \nabla_{\theta} f_{\theta}(\mathbf{x})^{\top} \in \mathbb{R}^{NK\times p}$ and parameter-Hessian $H_{\theta}(\mathbf{x}) := \nabla_{\theta\theta}^2 f_{\theta}(\mathbf{x}) \in \mathbb{R}^{NK \times p \times p}$. If the last layer is linear, \citet{liu2020linearity} show that the parameter-Jacobian is locally bounded and $\mathcal{O}\left((\log n)^c/\sqrt{n}\right)$-Lipschitz for large layer width $n$. We restate this in the following lemma, and provide more explanations in Appendix \ref{app: ReminderNTKTheory}.
\begin{restatable}{lemma}{JacobianboundedLipschitz}
\label{lemma: Jacobian Lipschitz Hessian bounded}
For any $\delta_0>0$ there are $K'_1,K'_2>0$ such that: For every radius $R>0$ there is large enough layer width $n$ such that with probability $1-\delta_0$ over random initialization $\theta_0$: For any input $x \in M^d$:
\begin{equation}
    \forall \theta \in B(\theta_0,R): \left\lVert J_{\theta}(x)\right\rVert_{2,2} \le K'_1,
\end{equation}
\begin{equation}
    \forall \theta \in B(\theta_0,R): \left\lVert H_{\theta}(x) \right\rVert_{\op,2} 
    \le \frac{(\log n)^c}{\sqrt{n}} K'_2.
\end{equation}
Recall that $J_{\theta}(x) \in \mathbb{R}^{K\times p}$ and $H_{\theta}(x) \in \mathbb{R}^{K\times p\times p}$. We are using the Euclidean norm on the $K$-dimensional output dimension, after applying the Euclidean and spectral norm on the parameter dimensions $p$ and $p\times p$ respectively.
The constants $K_1', K_2' > 0$ do not depend on $R$. The constant $c > 0$ only depends on the network architecture.
\end{restatable}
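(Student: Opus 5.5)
The plan is to reduce the statement to the Hessian-norm estimates of \citet{liu2020linearity} and then control the Jacobian by integrating the Hessian along segments. Liu et al.\ show, for a feedforward network in NTK parametrization with smooth activations and a linear last layer, the following. For any fixed $R>0$ and $\delta_0>0$, with probability at least $1-\delta_0$ over $\theta_0$ and for $n$ large enough, every $\theta\in B(\theta_0,R)$ satisfies
\[
\|\nabla^2_\theta f^{(k)}_\theta(x)\|_{\op}\le C\,\frac{(\log n)^c}{\sqrt n}
\]
for each output coordinate $k$. Their argument bounds the Hessian by products of $(2,2,1)$-norms of weight-derivative tensors and $\infty$-norms of the vectors $\partial f/\partial \alpha^{(l)}$. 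The $\log n$ factor comes from a union bound over $n$ neurons of sub-Gaussian entries.

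I would first check three points in their proof.
\begin{itemize}
\item Uniformity in $x\in M^d$. Their bounds depend on $x$ only through $\|x\|$, which is bounded on the compact set $M^d$. Any net argument over inputs would enter only logarithmically, and can be absorbed by enlarging $c$ or $K_2'$.
\item The dependence on $R$. The bound on $\|W^{(l)}\|$ inside $B(\theta_0,R)$ is $\|W^{(l)}_0\|/\sqrt n+R/\sqrt n\le C_0+R/\sqrt n$. All $R$-dependence therefore enters as $R/\sqrt n$ or $R(\log n)^c/\sqrt n$. Choosing $n\ge n(R)$ makes these terms at most $1$, so the constants depend only on $\delta_0$ and the architecture, as the statement requires.
\item Passing from coordinates to the norm in the statement. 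Summing over the $K$ outputs, $\|H_\theta(x)\|_{\op,2}\le\sqrt K\max_k\|\nabla^2 f^{(k)}\|_{\op}$, which gives the second inequality with $K_2'=\sqrt K C$.
\end{itemize}

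For the Jacobian bound, I would first control $J$ at initialization. Each entry of $J_{\theta_0}(x)J_{\theta_0}(x)^\top$ is the empirical NTK, which converges in probability to the deterministic limiting NTK $\Theta(x,x)$ \citep{jacot2018neural}. The limit is continuous and hence bounded on the compact $M^d$. To make this hold uniformly in $x$, I would use the same layerwise norm bounds as above: $\|J_{\theta_0}(x)\|$ is bounded by products of $\|\alpha^{(l)}\|/\sqrt n$ and $\|W^{(l)}\|/\sqrt n$. These are $O(1)$ with high probability, uniformly over bounded $x$, via Lipschitz activations and operator-norm concentration of Gaussian matrices. Call this bound $K_0$.

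Then, for $\theta\in B(\theta_0,R)$, the mean value inequality along the segment from $\theta_0$ to $\theta$ gives
\[
\|J_\theta(x)\|\le K_0+R\sup_{\theta'\in B(\theta_0,R)}\|H_{\theta'}(x)\|\le K_0+R K_2'\frac{(\log n)^c}{\sqrt n}.
\]
Taking $n$ large enough that the second term is at most $1$ yields $K_1'=K_0+1$, which is independent of $R$. The failure probabilities of the two events are each allotted $\delta_0/2$.

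The main obstacle is the uniformity of the Hessian bound over the whole ball, with constants independent of $R$. In Liu et al.\ this requires tracking the $\infty$-norm bounds of the back-propagated vectors $\partial f/\partial\alpha^{(l)}$ throughout $B(\theta_0,R)$, not just at $\theta_0$. I would follow their inductive argument layer by layer. At each step I would verify that perturbations of size $R$ contribute only terms carrying an extra $1/\sqrt n$ factor, and hence vanish for large $n$.
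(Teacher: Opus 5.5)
Your overall route is the paper's. The paper gives no proof of this lemma beyond deferring to \citet{liu2020linearity} and illustrating the scaling at initialization with the one-hidden-layer example. Your mean-value argument for the Jacobian is correct and is a genuine addition: you bound $J_{\theta_0}(x)$ uniformly in $x$ via operator norms of the Gaussian weight matrices, then integrate the Hessian along the segment. It is also the part that matters downstream. In Appendices~\ref{appendix: exponentialdecayregularizer} and~\ref{appendix: exponentialdecayPL} the radius $R$ is defined in terms of $K_1'$, so $K_1'$ must not depend on $R$. Your argument delivers this even if the Hessian constant depends on $R$, because $n$ is chosen after $R$.

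The step that would fail is your justification that $K_2'$ is independent of $R$. Perturbations of size $R$ do \emph{not} carry an extra $1/\sqrt n$ factor in the $\infty$-norm estimates, and those estimates are exactly where the $\log n$ comes from. A Euclidean perturbation of size $R$ may concentrate on one coordinate. For instance, the last-layer weights only satisfy $\|v\|_\infty\le\|v_0\|_\infty+R$, so $R$ enters at the same order as the $\sqrt{\log n}$ term.

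The paper's toy model makes this explicit. For $a=a_0+Re_j\in B(\theta_0,R)$, the $ww$-block of the Hessian has operator norm $\frac{x^2}{\sqrt n}\max_i|a_i\phi''(w_ix)|$. Choosing $j$ with $|\phi''(w_{0,j}x)|$ bounded away from zero, this is at least of order $R/\sqrt n$. Hence $\sup_{B(\theta_0,R)}\|H_\theta(x)\|$ is of order $(\sqrt{\log n}+R)/\sqrt n$, not $(\sqrt{\log n}+R/\sqrt n)/\sqrt n$. This is why the Hessian bound of Liu et al.\ carries a polynomial factor in $R$, and your step of choosing $n$ so that the $R$-terms are at most $1$ does not apply.

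The claim is still recoverable by a different mechanism. The $R$-dependence is polynomial, possibly times polylog factors, and $(\log n)^c\to\infty$. So taking $n\ge\exp(P(R))$ for a suitable polynomial $P$ gives, e.g., $\sqrt{\log n}+R\le 2\sqrt{\log n}$, or $P(R)(\log n)^{c_0}\le(\log n)^{c_0+1}$. Thus $K_2'$ can be made independent of $R$. The price is an $n(R)$ that is exponential in a power of $R$, and possibly a larger, still architecture-dependent, exponent $c$.
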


\citet{lee2018dnnsgps, matthews2018gaussian} show that the network output at initialization converges in distribution for infinite width:
\begin{restatable}{lemma}{NNGPconvergence}
\label{lemma: NNGPconvergence}
Consider random initialization $\theta_0$. Then $f_{\theta_0}(\cdot)$ converges in distribution to a Gaussian process with zero mean and covariance given by the NNGP Kernel $\mathcal{K}$: For inputs $\mathbf{x}_1,\ldots,\mathbf{x}_N \in M^d$,
\begin{equation}
    f_{\theta_0}(\mathbf{x}) \overset{d}{\longrightarrow} \mathcal{N}(0,\mathcal{K}(\mathbf{x}, \mathbf{x})).
\end{equation}
\end{restatable}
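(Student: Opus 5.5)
We would prove Lemma \ref{lemma: NNGPconvergence} by induction over the layers, following the classical argument of \citet{lee2018dnnsgps, matthews2018gaussian}. Since convergence in distribution of the finite-dimensional vector $f_{\theta_0}(\mathbf{x})\in\mathbb{R}^{NK}$ is what is claimed, it suffices to fix the inputs $\mathbf{x}_1,\ldots,\mathbf{x}_N$. We take all hidden widths equal to $n\to\infty$ (or let them grow sequentially, which is the simpler route) and use the NTK parametrization. Write the preactivations as $h^{(\ell+1)}(x) = \frac{\sigma_w}{\sqrt{n}} W^{(\ell+1)} \phi(h^{(\ell)}(x)) + \sigma_b b^{(\ell+1)}$, with i.i.d. standard Gaussian entries. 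We define the NNGP kernel recursively by
\begin{equation*}
\mathcal{K}^{(\ell+1)}(x,x') = \sigma_w^2\, \mathbb{E}_{(u,v)\sim \mathcal{N}(0,\mathcal{K}^{(\ell)}|_{\{x,x'\}})}\big[\phi(u)\phi(v)\big] + \sigma_b^2 .
\end{equation*}
The base case is the input layer, where $\mathcal{K}^{(1)}(x,x')=\sigma_w^2 x^\top x'/d+\sigma_b^2$. The output $f_{\theta_0}$ is the last preactivation, and $\mathcal{K}(\mathbf{x},\mathbf{x})=\mathcal{K}^{(L)}\otimes I_K$, because distinct output units are independent.

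For the inductive step, we condition on the previous layer's parameters. Given $h^{(\ell)}(\mathbf{x})$, each coordinate $h^{(\ell+1)}_i(\mathbf{x})$ is exactly Gaussian with mean zero. Its covariance is the empirical kernel $\hat{\mathcal{K}}^{(\ell+1)}(x,x') = \frac{\sigma_w^2}{n}\sum_{j=1}^n \phi(h^{(\ell)}_j(x))\phi(h^{(\ell)}_j(x'))+\sigma_b^2$. Different coordinates $i$ are conditionally independent. It therefore suffices to show $\hat{\mathcal{K}}^{(\ell+1)} \to \mathcal{K}^{(\ell+1)}$ in probability on the finite set $\mathbf{x}$. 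Given that, the characteristic function of any finite collection of coordinates is $\mathbb{E}\exp(-\tfrac12 t^\top \hat{\mathcal{K}} t)$, which converges by bounded convergence to the Gaussian one.

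The convergence of the empirical kernel is the main obstacle. Under sequential limits it is just a law of large numbers. By the inductive hypothesis the $h^{(\ell)}_j(\mathbf{x})$, $j=1,\dots,n$, are i.i.d. $\mathcal{N}(0,\mathcal{K}^{(\ell)})$ in the limit. The summands $\phi(h_j(x))\phi(h_j(x'))$ are integrable provided $\phi$ is polynomially bounded, which holds for the smooth activations assumed here. For simultaneous widths, the $h^{(\ell)}_j$ are only exchangeable, not i.i.d., at finite $n$. We would then follow \citet{matthews2018gaussian}: bound the second moment of $\hat{\mathcal{K}}^{(\ell+1)}-\mathbb{E}\hat{\mathcal{K}}^{(\ell+1)}$ using exchangeability and the uniform moment bounds implied by polynomial growth of $\phi$, so that the variance decays as $\mathcal{O}(1/n)$. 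We would also show that $\mathbb{E}\hat{\mathcal{K}}^{(\ell+1)}\to\mathcal{K}^{(\ell+1)}$, using the inductive convergence in distribution together with uniform integrability. Chebyshev's inequality then closes the induction, and applying the argument at the output layer $\ell+1=L$ yields $f_{\theta_0}(\mathbf{x}) \overset{d}{\longrightarrow} \mathcal{N}(0,\mathcal{K}(\mathbf{x},\mathbf{x}))$.
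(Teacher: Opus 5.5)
Your overall strategy matches the classical argument of \citet{lee2018dnnsgps, matthews2018gaussian}. You condition on the layer below, note that each next-layer unit is then exactly Gaussian with covariance $\hat{\mathcal{K}}^{(\ell+1)}$, reduce to $\hat{\mathcal{K}}^{(\ell+1)}\to\mathcal{K}^{(\ell+1)}$ in probability, and finish with characteristic functions and bounded convergence. That is all the paper relies on here: it does not reprove the lemma, it cites it. However, the step you flag as the main obstacle does not go through as written in the simultaneous-width case. For the exchangeable summands $Y_j := \phi(h^{(\ell)}_j(x))\phi(h^{(\ell)}_j(x'))$,
\begin{equation*}
\mathrm{Var}\Big(\frac{1}{n}\sum_{j=1}^n Y_j\Big) = \frac{1}{n}\mathrm{Var}(Y_1) + \Big(1-\frac{1}{n}\Big)\mathrm{Cov}(Y_1,Y_2),
\end{equation*}
and exchangeability together with uniform moment bounds controls only the first term. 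The covariance between two distinct units carries no factor $1/n$, and exchangeability does not make it vanish. For example, a de Finetti mixture $Y_j = Z+\epsilon_j$ with a shared random $Z$ is exchangeable, but its empirical mean does not concentrate. So your claimed $\mathcal{O}(1/n)$ variance bound does not follow from the tools you list. The missing ingredient is asymptotic decorrelation of distinct units. Apply your inductive hypothesis to the pair $(h^{(\ell)}_1(\mathbf{x}), h^{(\ell)}_2(\mathbf{x}))$, which converges jointly to two independent $\mathcal{N}(0,\mathcal{K}^{(\ell)})$ vectors. Then use uniform integrability of $Y_1Y_2$ to conclude $\mathrm{Cov}(Y_1,Y_2)\to 0$. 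This gives a variance that is $o(1)$, which is all Chebyshev needs; an $\mathcal{O}(1/n)$ rate would require a quantitative decorrelation bound. This asymptotic independence of distinct units is the real content of the argument in \citet{matthews2018gaussian}.

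A cleaner fix uses the conditioning you already set up and avoids exchangeability entirely: make ``$\hat{\mathcal{K}}^{(\ell)}\to\mathcal{K}^{(\ell)}$ in probability'' the inductive hypothesis. Conditional on the parameters of layers $1,\ldots,\ell-1$, the $h^{(\ell)}_j(\mathbf{x})$ are exactly i.i.d. $\mathcal{N}(0,\hat{\mathcal{K}}^{(\ell)})$. Set $F_\phi(\Sigma) := \mathbb{E}_{(u,v)\sim\mathcal{N}(0,\Sigma)}[\phi(u)\phi(v)]$. Then $\hat{\mathcal{K}}^{(\ell+1)}(x,x') - \sigma_w^2 F_\phi(\hat{\mathcal{K}}^{(\ell)}|_{\{x,x'\}}) - \sigma_b^2$ is conditionally mean zero with conditional variance at most $C(1+\lVert\hat{\mathcal{K}}^{(\ell)}\rVert^2)/n$. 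The remaining term $F_\phi(\hat{\mathcal{K}}^{(\ell)}|_{\{x,x'\}}) - F_\phi(\mathcal{K}^{(\ell)}|_{\{x,x'\}})$ vanishes in probability by continuity of $F_\phi$ on positive semidefinite matrices and the continuous mapping theorem.

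Three smaller remarks:
\begin{itemize}
\item The growth bound on $\phi$ comes from the paper's assumption that $\phi$ is globally Lipschitz, hence of linear growth, not from smoothness.
\item In the paper's indexing ($L$ hidden layers plus a linear readout $W^{(L+1)}$), the output is the preactivation of layer $L+1$, so the limit covariance is $\mathcal{K}^{(L+1)}$ in your notation.
\item The sequential-limit route only yields an iterated limit, whereas the paper uses one common width $n$ for all layers. Only the simultaneous-width argument gives the lemma in the form it is actually used.
\end{itemize}
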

Our results apply to any parametric model with the properties of these two lemmas, which have been proven for more general architectures in \citep{yang2020tensorprograms2}.
\subsection{The training loss in function-space}
Consider training points $\mathbf{x}_1,\ldots,\mathbf{x}_N \in M^d$. Consider a function-space loss
\begin{equation}
    \mathcal{C}: \mathbb{R}^{NK} \to \mathbb{R}, 
    \quad \mathbf{z} \mapsto \mathcal{C}(\mathbf{z})
    ,
\end{equation}
which we will evaluate in the function outputs
\begin{equation}
    \mathbf{z} 
    := \mathbf{f}
    := f(\mathbf{x})
    := (f(\mathbf{x}_1), \ldots, f(\mathbf{x}_N)) \in \mathbb{R}^{NK}.
\end{equation}
We assume $\mathcal{C}$ is twice continuously differentiable, and denote the function-space gradient by $
\nabla_{\mathbf{z}}\mathcal{C}(\mathbf{z}) 
\in \mathbb{R}^{NK}$ and the function-space Hessian by $\nabla^2_{\mathbf{z}\mathbf{z}}\mathcal{C}(\mathbf{z}) \in \mathbb{R}^{NK\times NK}$.



Consider a classification task with $K$ classes. Denote the target probabilities for $N$ training points by $\mathbf{p}_1,\ldots,\mathbf{p}_N \in \Delta^{K-1}$. One-hot encodings correspond to $\mathbf{p}_i = e_{\mathbf{y}_i}$ for training labels $\mathbf{y}_i \in \{1,\ldots, K \}$.
The function-space categorical cross-entropy loss (Appendix \ref{appx: cce}) is
\begin{equation}
    \mathcal{C}_{\CE}(\mathbf{z}) 
    := \sum_{i=1}^N\CE[\mathbf{p}_i, \softmax(\mathbf{z}_i)]
    = -\sum_{i=1}^N \sum_{k=1}^{K} \mathbf{p}_{ik} \log\softmax(\mathbf{z}_i)_k
    .
\end{equation}
Here $\softmax:\mathbb{R}^K\to\interior(\Delta^{K-1})$ denotes the $\mathrm{softmax}$-function. To avoid redundancies from $\softmax(z) = \softmax(z + \alpha 1)$ we can alternatively use $\mathrm{softmax}$ with a fixed reference class (Appendix \ref{appx: cerc}).

\subsection{Regularized gradient flow}
Consider a regularization strength $\beta \ge 0$.  Define the regularized parameter-space loss
\begin{equation}
    \mathcal{L}^{\beta}: \mathbb{R}^p \to \mathbb{R},
    \quad
    \mathcal{L}^{\beta}(\theta) := \mathcal{C}(f_{\theta}(\mathbf{x})) + \frac{1}{2}\beta \left\lVert \theta - \theta_0 \right\rVert_2^2.
\end{equation}
If $\beta=0$ we assume $\mathcal{C}$ is lower bounded, i.e. $\inf\mathcal{C}>-\infty$.
Starting from random initial parameters $\theta_0\in\mathbb{R}^p$ this objective is minimized using the gradient flow\footnote{Picard-Lindelöf gives the existence of a unique solution on $[0,T_{\max})$ for some $T_{\max}>0$ with $\lim_{t\uparrow T_{\max}} \lVert \theta_t \rVert_2 = \infty$ if $T_{\max}<\infty$. We will prove $T_{\max}=\infty$. With $t\ge 0$ we refer to $t\in [0,T_{\max})$.}
\begin{equation}
    \frac{d}{dt}\theta_t
    = -\eta_0 \nabla_{\theta} \mathcal{L}^{\beta}(\theta_t)
    = -\eta_0  \left( \nabla_{\theta} \mathcal{C}(\mathbf{f}_{\theta_t})+ \beta (\theta_t- \theta_0) \right).
\end{equation}
By the chain rule,
\begin{equation}
    \nabla_{\theta} \mathcal{C}(f_{\theta}(\mathbf{x})) 
    = J_{\theta}(\mathbf{x})^{\top} \nabla_{\mathbf{z}}\mathcal{C}(f_{\theta}(\mathbf{x}))
    \in \mathbb{R}^p.
\end{equation}
\begin{restatable}{lemma}{boundinitialloss}
    \label{lemma: boundinitialloss}
Let $\delta_0>0$ be arbitrarily small.
There is $K_0>0$, such that for large enough layer width $n$, with probability $1-\delta_0$ over random initialization $\theta_0$:
\begin{equation}
    \mathcal{C}(f_{\theta_t}(\mathbf{x}))
    \le  \mathcal{L}^{\beta}(\theta_t)
    \le \mathcal{L}^{\beta}(\theta_0)
    = \mathcal{C}(f_{\theta_0}(\mathbf{x})) \le K_0
    \quad
    \text{for all $t\ge 0$}.
\end{equation}
\end{restatable}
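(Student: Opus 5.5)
The chain of inequalities splits into three independent pieces, and I would prove them in that order: the two equalities/inequalities that hold deterministically, and then the final bound, which is probabilistic.

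\emph{First inequality.} Since $\beta \ge 0$, the regularization term $\frac12\beta\lVert\theta_t-\theta_0\rVert_2^2$ is nonnegative. Dropping it gives $\mathcal{C}(f_{\theta_t}(\mathbf{x}))\le\mathcal{L}^{\beta}(\theta_t)$ directly from the definition of $\mathcal{L}^{\beta}$.

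\emph{Second inequality.} The loss decreases along gradient flow. Because $\theta\mapsto f_\theta(\mathbf{x})$ and $\mathcal{C}$ are $C^2$, $\mathcal{L}^\beta$ is $C^1$, and the chain rule gives
\begin{equation*}
\frac{d}{dt}\mathcal{L}^{\beta}(\theta_t)=\nabla_\theta\mathcal{L}^\beta(\theta_t)^\top\frac{d}{dt}\theta_t=-\eta_0\lVert\nabla_\theta\mathcal{L}^\beta(\theta_t)\rVert_2^2\le 0
\end{equation*}
on the whole existence interval $[0,T_{\max})$. Integrating from $0$ to $t$ yields $\mathcal{L}^\beta(\theta_t)\le\mathcal{L}^\beta(\theta_0)$. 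At $t=0$ the regularizer vanishes, so $\mathcal{L}^\beta(\theta_0)=\mathcal{C}(f_{\theta_0}(\mathbf{x}))$.

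\emph{Final bound.} This is the only step involving randomness and width, and it is the main point. By Lemma~\ref{lemma: NNGPconvergence}, $f_{\theta_0}(\mathbf{x})$ converges in distribution to $\mathcal{N}(0,\mathcal{K}(\mathbf{x},\mathbf{x}))$. A convergent sequence of random vectors is tight, so there is $R_0$ with $\mathbb{P}(\lVert f_{\theta_0}(\mathbf{x})\rVert_2\le R_0)\ge 1-\delta_0$ for all large $n$. Concretely, I would choose $R_0$ such that the Gaussian limit puts mass greater than $1-\delta_0/2$ on the open ball of radius $R_0$. Portmanteau's theorem for open sets then gives $\liminf_n\mathbb{P}(\lVert f_{\theta_0}(\mathbf{x})\rVert_2<R_0)>1-\delta_0/2$, hence probability at least $1-\delta_0$ for all $n$ beyond some threshold.

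Now $\mathcal{C}$ is continuous, so it is bounded on the compact ball $\bar B(0,R_0)\subset\mathbb{R}^{NK}$. Setting $K_0:=\max_{\lVert\mathbf{z}\rVert_2\le R_0}\mathcal{C}(\mathbf{z})$ gives $\mathcal{C}(f_{\theta_0}(\mathbf{x}))\le K_0$ on this event. The constant $K_0$ depends only on $\delta_0$, $\mathcal{C}$ and the limiting kernel, not on $n$.

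I expect no real obstacle. The only care needed is that "for large enough $n$" is handled through weak convergence via tightness, rather than through any almost-sure statement. The condition $\inf\mathcal{C}>-\infty$ for $\beta=0$ is not needed for this lemma; it is relevant only later, for showing that $T_{\max}=\infty$.
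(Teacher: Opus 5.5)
Your proof is correct and follows the paper's argument: monotone decrease of $\mathcal{L}^{\beta}$ along the gradient flow, nonnegativity of the regularizer, and a high-probability bound on $\mathcal{C}(f_{\theta_0}(\mathbf{x}))$ coming from the NNGP limit. The only difference is in the last step. The paper cites Lemma~\ref{lemma: initial function in compact set} (stated uniformly over $M^d$) and leaves continuity of $\mathcal{C}$ on the resulting ball implicit, whereas you derive the bound directly at the finitely many training points via tightness and Portmanteau, which is all this lemma needs; if the maximum of $\mathcal{C}$ on the ball happens to be nonpositive, simply enlarge $K_0$ so that $K_0>0$.
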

The proof is in Appendix \ref{appendix: functionspaceloss}. Given $K_0$, we define the function-space loss sublevel set
\begin{equation}
    \mathcal{S}_0 
    := \{\mathbf{z}\in\mathbb{R}^{NK}: \mathcal{C}(\mathbf{z}) \le K_0 \}.
\end{equation}

\subsection{Parameter-linearization of a neural network}

The parameter-linearization of a neural network around its initialization $\theta_0\in\mathbb{R}^p$ is
\begin{equation}
    f_{\theta}^{\lin}(x) := f_{\theta_0}(x) + J_{\theta_0}(x)(\theta - \theta_0).
\end{equation}
This model remains nonlinear in the input $x$, but is linear in the parameters $\theta$, which simplifies the gradient flow dynamics
\begin{equation}
    \frac{d}{dt}\theta_t^{\lin}
    = -\eta_0 \left(J_{\theta_t^{\lin}}^{\lin}(\mathbf{x})^{\top} \nabla_{\mathbf{z}} \mathcal{C}\left(f^{\lin}_{\theta_t^{\lin}}(\mathbf{x})\right)+ \beta (\theta_t^{\lin} - \theta_0)\right),
\end{equation}
as the parameter-Jacobian remains constant: $J_{\theta_t^{\lin}}^{\lin}(\mathbf{x})= J_{\theta_0}(\mathbf{x})$.
For any input $x \in M^d$, multiplying both sides with $J_{\theta_0}(x)$ gives the function-space ODE of the linearized network
\begin{equation}
    \frac{d}{dt} f^{\lin}_{\theta_t^{\lin}}(x)
    = -\eta_0 \left( J_{\theta_0}(x)J_{\theta_0}(\mathbf{x})^{\top}  \nabla_{\mathbf{z}} \mathcal{C}\left(f^{\lin}_{\theta_t^{\lin}}(\mathbf{x})\right) + \beta\left( f^{\lin}_{\theta^{\lin}_t}(x) - f_{\theta_0}(x) \right)\right).
\end{equation}
We see that the dynamics of the linearized network only depend on the parameters $\theta_t^{\lin}$ through its function values $\hat{g}_t := f_{\theta_t^{\lin}}^{\lin}$. They are driven by the empirical NTK at initialization $\hat{\Theta}_{\theta_0}(x,\mathbf{x}) = J_{\theta_0}(x)J_{\theta_0}(\mathbf{x})^{\top} \in \mathbb{R}^{K\times NK}$. We define the function-space values $g$ driven by the analytical NTK $\Theta$ through the ODEs
\begin{equation}
\label{eq: fctspaceodean}
    \frac{d}{dt} g_t(x) = -\eta_0 \left(\Theta_{x,\mathbf{x}} \nabla_{\mathbf{z}} \mathcal{C}\left( g_t(\mathbf{x}) \right) + \beta\left( g_t(x) - g_0(x) \right)\right),
    \quad
    g_0(x) = f_{\theta_0}(x).
\end{equation}

\section{Theoretical results for general loss functions}\label{sec: theory-general-loss}

\subsection{Approximation by parameter-linearization}
In this section, we establish conditions under which wide neural networks can be approximated by their parameter linearization along the training path. The approximation error vanishes as the width increases, at a rate of order $O((\log n)^c / \sqrt{n})$. We will prove this as a consequence of lazy training.
\begin{theorem}
\label{thm: beta=0 lazy training}
Let $\delta_0>0$ be arbitrarily small.
For $\beta>0$ assume that $\mathcal{C}$ is convex, and that there is $K_1>0$ such that
\begin{equation}
    \forall \mathbf{z}\in\mathcal{S}_{0}:
    \quad
    \lVert \nabla_{\mathbf{z}}\mathcal{C}(\mathbf{z}) \rVert_2 \le K_1.
\end{equation}
For $\beta=0$ assume that the Hessian of $\mathcal{C}$ is strictly positive definite
and that there are $K_2, \mu_{\mathcal{C}}>0$, such that
\begin{equation}
\forall \mathbf{z}\in\mathcal{S}_{0}:
\quad
2\mu_{\mathcal{C}} \left(\mathcal{C}(\mathbf{z}) - \inf \mathcal{C} \right)
\le 
    \left\lVert \nabla_{\mathbf{z}} \mathcal{C}(\mathbf{z}) \right\rVert_2^2
    \le 2K_2 \left(\mathcal{C}(\mathbf{z}) - \inf \mathcal{C} \right).
\end{equation}
Then we can uniformly approximate the network by its linearization during training. This means that there is $C_2>0$ such that for large enough layer width $n$, with probability $1-\delta_0$ over random initialization $\theta_0$,
\begin{equation}
\label{eq: linearapprox}
    \forall x\in M^d, t\ge 0: 
    \quad
    \left\lVert f_{\theta_t}(x) - f_{\theta_t^{\lin}}^{\lin}(x)\right\rVert_2 
    \le C_2 \frac{(\log n)^c}{\sqrt{n}}.
\end{equation}
\end{theorem}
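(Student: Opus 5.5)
The proof has two stages. First we show lazy training: the parameter trajectories $\theta_t$ and $\theta_t^{\lin}$ stay in a ball $B(\theta_0,R)$ whose radius $R$ does not depend on $n$. Then, inside that ball, we compare the two flows using the Hessian bound of Lemma \ref{lemma: Jacobian Lipschitz Hessian bounded}. Throughout we fix $\delta_0$ and work on the intersection of the high-probability events of Lemmas \ref{lemma: Jacobian Lipschitz Hessian bounded} and \ref{lemma: boundinitialloss}, each taken at level $\delta_0/2$. We also use the analogous loss bound for the linearized flow, which holds because it too is a gradient flow of $\mathcal{L}^\beta$ composed with $f^{\lin}$. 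By Lemma \ref{lemma: boundinitialloss}, $f_{\theta_t}(\mathbf{x})\in\mathcal{S}_0$ for all $t$, and the same holds for $f^{\lin}_{\theta^{\lin}_t}(\mathbf{x})$.

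\textbf{Case $\beta>0$.} Write $\theta_t-\theta_0=-\eta_0\int_0^t e^{-\eta_0\beta(t-s)}J_{\theta_s}(\mathbf{x})^\top\nabla_{\mathbf{z}}\mathcal{C}(f_{\theta_s}(\mathbf{x}))\,ds$, which is variation of constants applied to the linear term $\beta(\theta-\theta_0)$. Let $\tau$ be the exit time from $B(\theta_0,R)$. For $t<\tau$ the Jacobian bound $K_1'$ and the gradient bound $K_1$ on $\mathcal{S}_0$ give $\lVert\theta_t-\theta_0\rVert_2\le \sqrt{N}K_1'K_1/\beta$. Choosing $R$ larger than this, a continuity argument gives $\tau=\infty$ and hence $T_{\max}=\infty$. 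The same bound holds for $\theta^{\lin}_t$.

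\textbf{Case $\beta=0$.} The PL-type inequality with the lower bound $\mu_{\mathcal{C}}$ is used as follows. While $\theta_t\in B(\theta_0,R)$, we need the NTK to stay well conditioned. Lemma \ref{lemma: Jacobian Lipschitz Hessian bounded} shows $J_\theta$ moves by $O((\log n)^c R/\sqrt n)$, and convergence of the empirical NTK to the positive definite analytical $\Theta$ gives $\lambda_{\min}(J_\theta J_\theta^\top)\ge\lambda_0/2$ for large $n$. Then
\[
\tfrac{d}{dt}\bigl(\mathcal{C}(f_{\theta_t}(\mathbf{x}))-\inf\mathcal{C}\bigr)\le -\eta_0\tfrac{\lambda_0}{2}\lVert\nabla_{\mathbf{z}}\mathcal{C}\rVert_2^2\le -\eta_0\lambda_0\mu_{\mathcal{C}}\bigl(\mathcal{C}-\inf\mathcal{C}\bigr),
\]
so the excess loss decays exponentially. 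Using the upper inequality with $K_2$, $\lVert\dot\theta_t\rVert_2\le\eta_0K_1'\sqrt{2K_2(\mathcal{C}-\inf\mathcal{C})}$, which decays like $e^{-\eta_0\lambda_0\mu_{\mathcal{C}}t/2}$. Its integral is bounded by a constant $R_0$ independent of $n$. Taking $R>R_0$ closes the continuity argument. This is the standard argument of \citet{liu2020linearity}, and it is the main obstacle. Care is needed to choose $R$ before $n$, consistent with the quantifier order of Lemma \ref{lemma: Jacobian Lipschitz Hessian bounded}, and to justify the NTK eigenvalue lower bound on the ball. The strict positive definiteness of the Hessian of $\mathcal{C}$ is used for the comparison step below.

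\textbf{Comparison of the two flows.} Set $\Delta_t=\theta_t-\theta_t^{\lin}$. Inside the ball, a Taylor expansion gives $\lVert f_\theta(x)-f^{\lin}_\theta(x)\rVert_2\le\tfrac12 K_2'R^2(\log n)^c/\sqrt n$ and $\lVert J_\theta-J_{\theta_0}\rVert\le K_2'R(\log n)^c/\sqrt n$. Therefore
\[
\lVert f_{\theta_t}(x)-f^{\lin}_{\theta^{\lin}_t}(x)\rVert_2\le K_1'\lVert\Delta_t\rVert_2+O\bigl((\log n)^c/\sqrt n\bigr),
\]
and it remains to bound $\Delta_t$ uniformly in $t$. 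Differentiating, $\dot\Delta_t=-\eta_0\bigl(J_{\theta_0}^\top(\nabla\mathcal{C}(f_{\theta_t})-\nabla\mathcal{C}(f^{\lin}_{\theta^{\lin}_t}))+\beta\Delta_t\bigr)+\epsilon_t$, where $\lVert\epsilon_t\rVert=O((\log n)^c/\sqrt n)$.

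For $\beta>0$, convexity makes the first term monotone along the function-space difference. Hence $\tfrac{d}{dt}\lVert\Delta_t\rVert\le-\eta_0\beta\lVert\Delta_t\rVert+O((\log n)^c/\sqrt n)$ up to an error of the same order, and Grönwall gives a uniform bound of order $(\log n)^c/(\beta\sqrt n)$.

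For $\beta=0$, uniformity in $t$ instead comes from both flows converging exponentially. We bound $\Delta_t$ by Grönwall on a finite horizon $[0,T]$, at the cost of a factor $e^{LT}$ using the Lipschitz constant of $\nabla\mathcal{C}$ on the compact relevant set. For $t>T$ we use the fact that both $\dot\theta_t$ and $\dot\theta^{\lin}_t$ have exponentially small integrals. Strict positive definiteness of $\nabla^2\mathcal{C}$ makes the function-space flow contractive near the minimizer set, which lets the tails be controlled without the $e^{LT}$ blow-up.
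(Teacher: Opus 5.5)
Your $\beta>0$ case is correct. Deriving laziness from variation of constants, $\lVert\theta_t-\theta_0\rVert_2\le\sqrt{N}K_1'K_1/\beta$, is a simpler route than the paper's. The paper lower-bounds $\lambda_{\min}(\nabla^2_{\theta\theta}\mathcal{L}^{\beta})\ge\beta/2$, using convexity and the parameter-Hessian bound, and then gets exponential decay of $\lVert\nabla_\theta\mathcal{L}^{\beta}(\theta_t)\rVert_2$ from Gr\"onwall. Your version needs no convexity for this step, and since you bound $f_\theta-f^{\lin}_\theta$ by a Taylor estimate inside the ball, you never need decay of $\dot\theta_t$. Your $\beta>0$ comparison step and your $\beta=0$ laziness argument are the paper's.

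\textbf{The gap is the $\beta=0$ comparison.} Without damping, the linear part of the parameter-space error equation is $-\eta_0\mathbf{J}_{\theta_0}^{\top}\overline{\mathbf{H}}_t\mathbf{J}_{\theta_0}$. Its kernel has dimension at least $p-NK$, so $\lVert\theta_t-\theta_t^{\lin}\rVert_2$ is not contracted. Your splitting does not recover the rate. Gr\"onwall on $[0,T]$ costs $e^{LT}(\log n)^c/\sqrt{n}$, while the tail $\int_T^\infty(\lVert\dot\theta_s\rVert_2+\lVert\dot\theta^{\lin}_s\rVert_2)\,ds=O(e^{-\kappa T})$ does not shrink with $n$. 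Optimizing $T$ yields only $((\log n)^c/\sqrt{n})^{\kappa/(\kappa+L)}$. Exponential convergence of each flow separately says nothing about whether their limits agree to order $(\log n)^c/\sqrt{n}$ at test points. Your appeal to contraction near the minimizer is exactly the missing step. In the Euclidean norm it is not even true: $\hat{\mathbf{\Theta}}_0\overline{\mathbf{H}}_t$ is not symmetric, so $\mathbf{e}^{\top}\hat{\mathbf{\Theta}}_0\overline{\mathbf{H}}_t\mathbf{e}$ can be negative.

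\textbf{What is needed} is a weighted Lyapunov function in function space, plus separate control of the parameter directions the training points do not see.

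Let $\mathbf{e}_t:=\mathbf{J}_{\theta_0}(\theta_t-\theta_t^{\lin})=\mathbf{f}^{\lin}_{\theta_t}-\mathbf{f}^{\lin}_{\theta_t^{\lin}}$. Then $\dot{\mathbf{e}}_t=-\eta_0\mathbf{J}_{\theta_0}r_t-\eta_0\hat{\mathbf{\Theta}}_0\overline{\mathbf{H}}_t\mathbf{e}_t$. Here $\overline{\mathbf{H}}_t$ is the Hessian of $\mathcal{C}$ averaged over the segment from $\mathbf{f}^{\lin}_{\theta_t^{\lin}}$ to $\mathbf{f}^{\lin}_{\theta_t}$. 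The residual $r_t=(\mathbf{J}_{\theta_t}-\mathbf{J}_{\theta_0})^{\top}\nabla_{\mathbf{z}}\mathcal{C}(\mathbf{f}_{\theta_t})+\mathbf{J}_{\theta_0}^{\top}(\nabla_{\mathbf{z}}\mathcal{C}(\mathbf{f}_{\theta_t})-\nabla_{\mathbf{z}}\mathcal{C}(\mathbf{f}^{\lin}_{\theta_t}))$ has norm $O((\log n)^c/\sqrt{n})$.

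In the $\hat{\mathbf{\Theta}}_0^{-1}$-norm, $\mathbf{e}^{\top}\hat{\mathbf{\Theta}}_0^{-1}\hat{\mathbf{\Theta}}_0\overline{\mathbf{H}}_t\mathbf{e}=\mathbf{e}^{\top}\overline{\mathbf{H}}_t\mathbf{e}\ge a_{\mathcal{C}}\lVert\mathbf{e}\rVert_2^2$. Here $a_{\mathcal{C}}>0$ is the minimal Hessian eigenvalue on the compact ball containing all trajectories. This is where strict positive definiteness enters. It gives contraction at a rate proportional to $a_{\mathcal{C}}\lambda_{\min}(\Theta_{\mathbf{x},\mathbf{x}})$ for all $t\ge 0$, not only near the minimizer. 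Gr\"onwall then gives a uniform $O((\log n)^c/\sqrt{n})$ bound on $\mathbf{e}_t$ with no horizon splitting; this is the paper's argument.

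To reach test points, write $J_{\theta_0}(x)(\theta_t-\theta_t^{\lin})=\hat{\Theta}_{\theta_0}(x,\mathbf{x})\hat{\mathbf{\Theta}}_0^{-1}\mathbf{e}_t+J_{\theta_0}(x)(I-\Pi)(\theta_t-\theta_t^{\lin})$, where $\Pi$ is the orthogonal projector onto $\mathrm{range}(\mathbf{J}_{\theta_0}^{\top})$. The first term is controlled by the bound on $\mathbf{e}_t$. For the second, note $\theta_t^{\lin}-\theta_0\in\mathrm{range}(\mathbf{J}_{\theta_0}^{\top})$, so $\frac{d}{dt}(I-\Pi)(\theta_t-\theta_t^{\lin})=-\eta_0(I-\Pi)(\mathbf{J}_{\theta_t}-\mathbf{J}_{\theta_0})^{\top}\nabla_{\mathbf{z}}\mathcal{C}(\mathbf{f}_{\theta_t})$. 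By the PL decay of the loss this is $O((\log n)^c/\sqrt{n})\,e^{-c_0\eta_0 t}$, so it integrates to the right order.
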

We discuss the assumptions in Appendix \ref{appendix: functionspaceloss}, and prove them for various function-space losses in Appendix \ref{appendix: functionspacelossexamples}.

The proof for $\beta>0$ is in Appendix \ref{appendix: exponentialdecayregularizer}. The assumptions are satisfied by the categorical cross-entropy loss $\mathcal{C}_{\CE}$ (with and without a reference class, see Appendix \ref{appx: cce} and \ref{appx: cerc}) for general target distributions $\mathbf{p}_i$, including one-hot targets. This contrasts with the unregularized case, where divergence of the logits prevents the network from remaining in the NTK regime \citep{yu2025divergence}. We will see in the experimental section how even a small parameter regularizer ensures the NTK regime.

The proof for $\beta=0$ is in Appendix \ref{appendix: exponentialdecayPL}. When target distributions $\mathbf{p}_i$ have full support, the assumptions of the theorem are satisfied by the categorical cross-entropy loss $\mathcal{C}_{\CEr}$ with a reference class (see Appendix \ref{appx: cerc}). This corresponds to label smoothing \citep{szegedy2016rethinking, muller2019does}.

The (standard) categorical cross-entropy loss $\mathcal{C}_{\CE}$ without a reference class is convex, but its Hessian is nowhere strictly positive definite 
as it is invariant with respect to shifts. In Appendix \ref{appendix: extensionsoftmaxnoreference} we extend Theorem \ref{thm: beta=0 lazy training} for $\beta=0$ to show that still (when target distributions have full support), the centered version of the network\footnote{Let $P := I_K - \frac{1}{K}1_K1_K^{\top} \in \mathbb{R}^{K\times K}$ denote the orthogonal projection onto $1_K^{\perp}$. The centered network output is obtained by applying $P$ to the logits, i.e. $P f_\theta(x)$.} is well approximated by the centered version of its linearization during training. This means that
\begin{equation}
    \forall x\in M^d, t\ge 0: 
    \quad
    \left\lVert P \left(f_{\theta_t}(x) - f_{\theta_t^{\lin}}^{\lin}(x) \right) \right\rVert_2 
    \le C_2 \frac{(\log n)^c}{\sqrt{n}}.
\end{equation}
Due to $\softmax(z) = \softmax(Pz)$, this implies that 
\begin{equation}
\label{eq: linearizedpreact}
    \forall x \in M^d,t\ge 0:
    \quad
    \left\lVert \softmax\left( f_{\theta_t}(x) \right) - \softmax\left( f_{\theta_t^{\lin}}^{\lin}(x) \right) \right\rVert_2 \le \frac{1}{2} C_2 \frac{(\log n)^c}{\sqrt{n}}.
\end{equation}

The closeness to the linearization is particularly novel in this setting. \citet{liu2020linearity} studied networks with a nonlinear last layer,
\begin{equation}
    \tilde{f}_{\theta}(x) := \softmax(f_{\theta}(x)),
\end{equation}
and show that the NTK of $\tilde{f}$ is no longer constant during training, preventing approximation by direct linearization of $\tilde{f}$. In contrast, we treat the last nonlinearity as part of the loss, while keeping the network architecture with a linear last layer. Equation \ref{eq: linearizedpreact} shows that $\tilde{f}$ can be approximated by applying the last nonlinearity to the linearized pre-activation.

\subsection{The function-space training dynamics of the linearized network}

In the previous section we characterized the conditions under which wide neural networks are well approximated by their linearized version during training. We now study the gradient flow dynamics of the linearized network, whose stationary point provides an explicit characterization of the trained predictor in function-space.

Consider the kernel regression problem
\begin{equation}
    \min_{g\in \mathcal{H}_{\Theta}} \mathcal{C}(g(\mathbf{x})) + \frac{1}{2}\beta \left\lVert g - f_{\theta_0} \right\rVert_{\Theta^{-1}}^2
    \quad
    \text{or}
    \quad
    \min_{\mathbf{z}\in\mathbb{R}^{NK}} \mathcal{C}(\mathbf{z}) + \frac{1}{2}\beta (\mathbf{z}-\mathbf{f}_{\theta_0})^{\top} \Theta_{\mathbf{x},\mathbf{x}}^{-1} (\mathbf{z}-\mathbf{f}_{\theta_0}),
\end{equation}
via the kernel representation $g(x) = f_{\theta_0}(x) + \Theta_{x,\mathbf{x}}\Theta_{\mathbf{x},\mathbf{x}}^{-1} (\mathbf{z} - \mathbf{f}_{\theta_0})$. The optimal function is equal to the stationary point $g_{\infty}$ of the function-space ODE (\ref{eq: fctspaceodean}). It satisfies
\begin{equation}
\label{eq: stationarypoint}
   \Theta_{x,\mathbf{x}} \nabla_{\mathbf{z}} \mathcal{C}\left(g_{\infty}(\mathbf{x})\right) + \beta g_{\infty}(x)
    = \beta f_{\theta_0}(x).
\end{equation}
Restricting to the training points $x = \mathbf{x}_1,\ldots,\mathbf{x}_N$, define the function-space operator
\begin{equation}
    \Phi: \mathbb{R}^{NK} \to \mathbb{R}^{NK},
    \quad
    \Phi(\mathbf{z}) 
    := \Theta_{\mathbf{x},\mathbf{x}} \nabla_{\mathbf{z}}\mathcal{C}(\mathbf{z}) + \beta \mathbf{z}.
\end{equation}
The stationary point equation (\ref{eq: stationarypoint}) implies:
\begin{lemma}
The predictions of the linearized network in the training points $\mathbf{x}_1,\ldots,\mathbf{x}_N$ are
\begin{equation}
    g_{\infty}(\mathbf{x})
    = \Phi^{-1}(\beta f_{\theta_0}(\mathbf{x})).
\end{equation}
\end{lemma}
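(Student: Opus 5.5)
Restrict the stationary point equation (\ref{eq: stationarypoint}) to the training inputs: for $x=\mathbf{x}_i$, $i=1,\ldots,N$, stack the $N$ equations. This gives
\begin{equation}
\Theta_{\mathbf{x},\mathbf{x}} \nabla_{\mathbf{z}} \mathcal{C}\left(g_{\infty}(\mathbf{x})\right) + \beta g_{\infty}(\mathbf{x}) = \beta f_{\theta_0}(\mathbf{x}),
\end{equation}
that is, $\Phi(g_\infty(\mathbf{x})) = \beta f_{\theta_0}(\mathbf{x})$. The claim then reduces to showing that $\Phi$ is invertible, or at least injective with $\beta f_{\theta_0}(\mathbf{x})$ in its range. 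We already know the range condition holds, since $g_\infty(\mathbf{x})$ is a preimage. So the substance is injectivity, so that $\Phi^{-1}$ is well defined.

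For $\beta>0$ with $\mathcal{C}$ convex and $\Theta_{\mathbf{x},\mathbf{x}}$ positive definite, I would substitute $\mathbf{u}=\Theta_{\mathbf{x},\mathbf{x}}^{-1/2}\mathbf{z}$. Then
\begin{equation}
\Theta_{\mathbf{x},\mathbf{x}}^{-1/2}\Phi\left(\Theta_{\mathbf{x},\mathbf{x}}^{1/2}\mathbf{u}\right)=\nabla_{\mathbf{u}}\left[\mathcal{C}\left(\Theta_{\mathbf{x},\mathbf{x}}^{1/2}\mathbf{u}\right)+\tfrac{\beta}{2}\lVert\mathbf{u}\rVert_2^2\right],
\end{equation}
which is the gradient of a $\beta$-strongly convex $C^1$ function. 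Such a gradient is strongly monotone: $\langle \nabla F(\mathbf{u})-\nabla F(\mathbf{v}),\mathbf{u}-\mathbf{v}\rangle\ge\beta\lVert \mathbf{u}-\mathbf{v}\rVert^2$. It is therefore injective, and by Minty/Browder (or by directly minimizing $F(\mathbf{u})-\langle \mathbf{w},\mathbf{u}\rangle$, which is coercive) it is surjective, hence a bijection. Undoing the linear change of variables shows $\Phi$ is a bijection of $\mathbb{R}^{NK}$. This matches the kernel regression formulation: $\Phi(\mathbf{z})=\beta\mathbf{f}_{\theta_0}$ is exactly the first-order condition of the strictly convex objective $\mathcal{C}(\mathbf{z})+\frac{\beta}{2}(\mathbf{z}-\mathbf{f}_{\theta_0})^\top\Theta_{\mathbf{x},\mathbf{x}}^{-1}(\mathbf{z}-\mathbf{f}_{\theta_0})$, after multiplying by $\Theta_{\mathbf{x},\mathbf{x}}$. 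Its unique minimizer equals $g_\infty(\mathbf{x})$.

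For $\beta=0$ the statement degenerates to $\Theta_{\mathbf{x},\mathbf{x}}\nabla\mathcal{C}(g_\infty(\mathbf{x}))=0$, i.e.\ $\nabla\mathcal{C}(g_\infty(\mathbf{x}))=0$. With a strictly positive definite Hessian the minimizer is unique, so $\Phi^{-1}(0)$ is interpreted as this unique critical point. Injectivity of $\Phi=\Theta\nabla\mathcal{C}$ follows from strict monotonicity of $\nabla\mathcal{C}$ together with invertibility of $\Theta$.

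The main obstacle is the implicit assumption that $g_\infty$ exists, i.e.\ that the ODE (\ref{eq: fctspaceodean}) converges. If needed, I would handle this with the Lyapunov function $F$ above. Along the flow restricted to training points, the dynamics in $\mathbf{u}$-coordinates read $\dot{\mathbf{u}}=-\eta_0\nabla F(\mathbf{u})$ up to the constant shift by $\mathbf{f}_{\theta_0}$. Strong convexity then gives exponential convergence to the unique critical point, so the stationary point exists and is characterized as above.
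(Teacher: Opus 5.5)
Your proposal is correct and follows the paper's argument, which consists only of restricting the stationary-point equation (\ref{eq: stationarypoint}) to the training inputs to obtain $\Phi(g_\infty(\mathbf{x}))=\beta f_{\theta_0}(\mathbf{x})$. Your additional steps justify the well-definedness of $\Phi^{-1}$ and the existence of $g_\infty$, which the paper takes for granted: bijectivity of $\Phi$ for $\beta>0$ via the $\Theta_{\mathbf{x},\mathbf{x}}^{-1/2}$ change of variables and strong monotonicity, injectivity for $\beta=0$ from the strictly positive definite Hessian, and convergence of the flow (note your strong-convexity convergence argument covers only $\beta>0$; for $\beta=0$ you would instead use the PL and gradient-growth bounds, as in the proof of Theorem \ref{thm: expdecaybetazero}).
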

Using this, we can compute the trained values at any test point $x$:
\begin{restatable}{lemma}{linearizedpredinvertibleNTK}
\label{lemma: linearizedpredinvertibleNTK}
Let $\beta \ge 0$ and assume $\Theta_{\mathbf{x},\mathbf{x}}$ is invertible. Then the prediction of the linearized network at any test point $x$ is
\begin{equation}
    g_{\infty}(x)
    = f_{\theta_0}(x) + \Theta_{x,\mathbf{x}} \Theta_{\mathbf{x},\mathbf{x}}^{-1} \left(\Phi^{-1}(\beta f_{\theta_0}(\mathbf{x})) - f_{\theta_0}(\mathbf{x}) \right).
\end{equation}
Moreover for $\beta>0$ the prediction can also be written as
\begin{equation}
    g_{\infty}(x)
    = f_{\theta_0}(x) - \frac{1}{\beta} \Theta_{x,\mathbf{x}} \nabla_{\mathbf{z}} \mathcal{C}\left(\Phi^{-1}(\beta f_{\theta_0}(\mathbf{x}))\right).
\end{equation}
\end{restatable}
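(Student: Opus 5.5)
We prove both formulas from the stationary point equation (\ref{eq: stationarypoint}) together with the preceding lemma, which gives $g_{\infty}(\mathbf{x}) = \Phi^{-1}(\beta f_{\theta_0}(\mathbf{x}))$. Write $\mathbf{z}^* := \Phi^{-1}(\beta f_{\theta_0}(\mathbf{x}))$ and $\mathbf{v} := \nabla_{\mathbf{z}}\mathcal{C}(\mathbf{z}^*)$. The key observation is that, by (\ref{eq: fctspaceodean}), the trajectory $g_t(x)-g_0(x)$ for every input $x$ is driven only through the matrix $\Theta_{x,\mathbf{x}}$ applied to some vector depending on the training outputs. The goal is to show that $g_\infty(x) - f_{\theta_0}(x) = \Theta_{x,\mathbf{x}} \mathbf{a}$ for one fixed coefficient vector $\mathbf{a}\in\mathbb{R}^{NK}$ that does not depend on $x$, and then to identify $\mathbf{a}$.

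\textbf{Step 1 (representer form).} For $\beta>0$, evaluate (\ref{eq: stationarypoint}) at a general $x$ and solve directly:
\[
g_\infty(x) = f_{\theta_0}(x) - \tfrac{1}{\beta}\Theta_{x,\mathbf{x}}\mathbf{v}.
\]
This is the second formula, with $\mathbf{a} = -\mathbf{v}/\beta$. For $\beta = 0$ the stationary equation gives no information off the training set, so we integrate the ODE instead. Define
\[
\mathbf{a}_t := -\eta_0\int_0^t e^{-\eta_0\beta (t-s)}\nabla_{\mathbf{z}}\mathcal{C}(g_s(\mathbf{x}))\,ds .
\]
Variation of constants then gives $g_t(x) = f_{\theta_0}(x) + \Theta_{x,\mathbf{x}}\mathbf{a}_t$ for all $x$ and all $\beta\ge0$.

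\textbf{Step 2 (identify the coefficients).} Evaluate at the training inputs: $g_t(\mathbf{x}) - f_{\theta_0}(\mathbf{x}) = \Theta_{\mathbf{x},\mathbf{x}}\mathbf{a}_t$. Since $\Theta_{\mathbf{x},\mathbf{x}}$ is invertible, this gives $\mathbf{a}_t = \Theta_{\mathbf{x},\mathbf{x}}^{-1}(g_t(\mathbf{x}) - f_{\theta_0}(\mathbf{x}))$. Substituting back, for every $t$,
\[
g_t(x) = f_{\theta_0}(x) + \Theta_{x,\mathbf{x}}\Theta_{\mathbf{x},\mathbf{x}}^{-1}\bigl(g_t(\mathbf{x}) - f_{\theta_0}(\mathbf{x})\bigr).
\]
Pass to $t\to\infty$ using $g_t(\mathbf{x})\to g_\infty(\mathbf{x}) = \mathbf{z}^*$, which is the convergence implicit in the preceding lemma. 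The map $\mathbf{z}\mapsto f_{\theta_0}(x)+\Theta_{x,\mathbf{x}}\Theta_{\mathbf{x},\mathbf{x}}^{-1}(\mathbf{z}-f_{\theta_0}(\mathbf{x}))$ is continuous (indeed affine), so the limit gives the first formula. For $\beta>0$ this is consistent with Step 1: restricting the second formula to the training points gives $\mathbf{z}^* - f_{\theta_0}(\mathbf{x}) = -\tfrac1\beta\Theta_{\mathbf{x},\mathbf{x}}\mathbf{v}$, so $-\tfrac1\beta\mathbf{v} = \Theta_{\mathbf{x},\mathbf{x}}^{-1}(\mathbf{z}^*-f_{\theta_0}(\mathbf{x}))$, and the two formulas coincide.

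\textbf{Main obstacle.} The delicate point is the $\beta=0$ case. There $\Phi(\mathbf{z}) = \Theta_{\mathbf{x},\mathbf{x}}\nabla\mathcal{C}(\mathbf{z})$, so $\Phi^{-1}(0)$ is the minimizer of $\mathcal{C}$, and we need that $g_t(\mathbf{x})$ actually converges to it. I would take this from the preceding lemma and the standing assumptions of Theorem \ref{thm: beta=0 lazy training}. Under the Polyak--Łojasiewicz inequality there, the training loss decays exponentially, and the same bound controls the path length $\int_0^\infty \|\dot g_s(\mathbf{x})\|\,ds$, so $g_t(\mathbf{x})$ converges. Strict convexity, which follows from the positive definite Hessian, makes the minimizer unique, so the limit is $\Phi^{-1}(0)$.

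The $t\to\infty$ limit also has to be justified at the test point $x$. The route through Step 2 handles this cleanly: it only requires convergence at the training points and avoids any direct argument about $\mathbf{a}_t$.
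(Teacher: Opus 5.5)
Your proof is correct and follows essentially the paper's route. Both arguments establish the trajectory-wide identity $g_t(x)-g_0(x)=\Theta_{x,\mathbf{x}}\Theta_{\mathbf{x},\mathbf{x}}^{-1}(\mathbf{g}_t-\mathbf{g}_0)$ and pass to $t\to\infty$, and both get the $\beta>0$ formula by rearranging the stationary-point equation at a test point. The differences are minor: the paper first eliminates $\nabla_{\mathbf{z}}\mathcal{C}(\mathbf{g}_t)$ via $\Theta_{\mathbf{x},\mathbf{x}}^{-1}$ and then integrates with the factor $e^{\eta_0\beta t}$, whereas you write the variation-of-constants formula with a shared coefficient $\mathbf{a}_t$ and identify it at the training points; you also justify the convergence $\mathbf{g}_t\to\Phi^{-1}(\beta f_{\theta_0}(\mathbf{x}))$, which the paper takes for granted.
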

The proof is in Appendix \ref{appendix: proofslinearizeddynamics}.

These expressions allow us to sample from the trained neural network $f_{\theta_{\infty}}(x)\approx g_{\infty}(x)$ by sampling from the random initialization $f_{\theta_0}((x,\mathbf{x})) \sim \mathcal{N}(0,\mathcal{K}_{(x,\mathbf{x}),(x,\mathbf{x})})$. This provides the distribution of an infinite-width ensemble of predictors induced by initialization, see Figure \ref{fig:toy_ensembles}. We analyze this notion of model uncertainty in the next section.
\begin{figure}[hbt!]
    \centering
    \includegraphics[width=\linewidth]{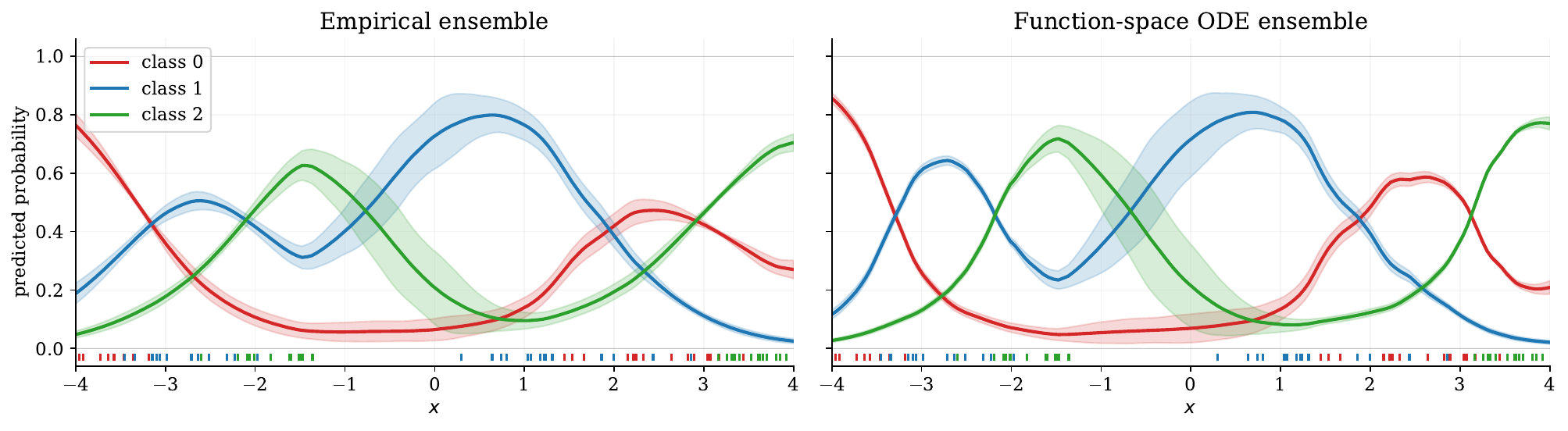}
    \caption{%
    $1$d-classification with $3$ classes.
         Left: An ensemble over wide networks, starting from different parameter initializations. Right: The infinite-width limit of the ensemble, using the function-space ODE.
    }
\label{fig:toy_ensembles}
\end{figure}


\section{Connection of the infinite-width ensemble to Bayesian methods}

The previous section characterized the trained linearized predictor through the inverse map $\Phi^{-1}$. We now use this characterization to study the distribution over trained predictors induced by random initialization. 

\subsection{Connection to MAP inference for classification}

Consider parameter regularizer $\beta>0$
and target probabilities $\mathbf{p}_1,\ldots,\mathbf{p}_N \in \Delta^{K-1}$. In this section we focus on one-hot targets $\mathbf{p}_i = e_{\mathbf{y}_{i}}$ for labels $\mathbf{y}_i\in \{1,\ldots,K \}$.

For categorical cross-entropy loss $\mathcal{C}_{\CE}$, the function-space operator $\Phi: \mathbb{R}^{NK} \to \mathbb{R}^{NK}$ is given by
\begin{equation}
    \Phi(\mathbf{z}) = \sum_{i=1}^{N} \Theta_{\mathbf{x},\mathbf{x}_i}  \left(\softmax(\mathbf{z}_i) - \mathbf{p}_i \right)  + \beta \mathbf{z}.
\end{equation}
For large layer width, Lemma \ref{lemma: linearizedpredinvertibleNTK} (or solving the function-space ODE (\ref{eq: fctspaceodean})) allows us to sample from $f_{\theta_{\infty}}$ by sampling from $f_{\theta_0}$. The resulting random variable $\softmax(f_{\theta_{\infty}}(x))$ is a sample probability vector from the simplex of discrete probability distributions on $\{1,\ldots,K \}$. This represents epistemic uncertainty for classification coming from initialization variance.  While this differs from Bayesian uncertainty (see Section \ref{sec: connectionlaplace}), we can still draw parallels by viewing each sample as a MAP estimate for a different prior mean.
Consider the probabilistic likelihood model
\begin{equation}
    y(x) = \argmax_{k=1,\ldots,K} \left( F(x)_k + \tau \epsilon_k \right),
    \quad
    \epsilon_k \sim \mathrm{Gumbel}(0,1).
\end{equation}
This has conditional likelihood \citep{maddison2014sampling}
\begin{equation}
    \prob\left(y(x)=k|F(x)\right) = \softmax\left(\frac{1}{\tau}F(x)\right)_k
    \propto
    \exp\left(-\frac{1}{\tau} \CE(e_k, F(x)) \right).
\end{equation}
Moreover, place the GP prior $F(x)\sim \mathcal N(F_0(x), \Theta_{x,x})$. The MAP in this model minimizes
\begin{equation}
    F\mapsto -\log \prob(F|(\mathbf{x}_i, \mathbf{y}_i)_{i=1}^N)
    = -\sum_{i=1}^N \log \softmax\left(\frac{1}{\tau}F(\mathbf{x}_i) \right)_{\mathbf{y}_i}+ \frac{1}{2}\left\lVert F - F_0\right\rVert_{\Theta^{-1}}^2
    + \const.
\end{equation}
Reparameterizing $F=\tau g$, $F_0=\tau f_{\theta_0}$, $\beta=\tau^2$, recovers the objective
\begin{equation}
    g
    \mapsto -\sum_{i=1}^N \log \softmax\left(g(\mathbf{x}_i) \right)_{\mathbf{y}_i}+ \frac{1}{2}\beta \left\lVert g - f_{\theta_0}\right\rVert_{\Theta^{-1}}^2
    + \const.
\end{equation}
This shows that in the infinite-width limit, the neural network solution converges to the temperature-rescaled MAP estimate in a probabilistic model whose observation noise is determined by $\beta$. The equivalence to the NTK-GP has been analyzed in \citet{calvo2025observation} for MSE loss corresponding to regression tasks. Our results allow the use of their tools for classification NTK-GP inference under arbitrary prior means via a single neural network pass.

\subsection{Connection of the ensemble variance to the Laplace approximation}
\label{sec: connectionlaplace}
The infinite-width ensemble distribution over $f_{\theta_{\infty}}(x)$ is intractable for general function-space losses $\mathcal{C}$. By linearizing $\nabla_{\mathbf{z}}\mathcal{C}(\mathbf{z})$ we can approximate it with a normal distribution: Let $\mathbf{g}^* := \Phi^{-1}(\mathbf{0})$ denote the MAP in the training points for prior mean $f_{\theta_0}=0$, i.e. the solution of 
\begin{equation}
\Theta_{\mathbf{x},\mathbf{x}} \nabla_{\mathbf{z}} \mathcal{C}(\mathbf{g}^*) + \beta \mathbf{g}^* = 0.
\end{equation}

Linearize the function-space gradient around $\mathbf{g}^*$:
\begin{equation}
\nabla_{\mathbf{z}} \mathcal{C}(\mathbf{z})
\approx \nabla_{\mathbf{z}} C(\mathbf{g}^*) + \nabla_{\mathbf{z}}^2 \mathcal{C}(\mathbf{g}^*)(\mathbf{z}-\mathbf{g}^*) 
= -\beta \Theta_{\mathbf{x},\mathbf{x}}^{-1} \mathbf{g}^* + \nabla_\mathbf{z}^2 \mathcal{C}(\mathbf{g}^*)(\mathbf{z} - \mathbf{g}^*).
\end{equation}
Define $\mathbf{H}^* = \nabla_{\mathbf{z}}^2 \mathcal{C}(\mathbf{g}^*)$ and $\mathbf{A} := \Theta_{\mathbf{x},\mathbf{x}}\mathbf{H}^* + \beta I_{NK}$. While $\mathbf{A}$ is not symmetric, $\mathbf{H}^* \mathbf{A}^{-1}$ is symmetric.
The linearization gives the approximate stationary point in the training points $\mathbf{x}$:
\begin{equation}
    g_{\infty}(\mathbf{x})
    \approx \mathbf{g}^* + \mathbf{A}^{-1} \beta f_{\theta_0}(\mathbf{x}).
\end{equation}
In the test points $\mathbf{x}'$:
\begin{equation}
    g_{\infty}(\mathbf{x}') 
    \approx f_{\theta_0}(\mathbf{x}')  + \Theta_{\mathbf{x}',\mathbf{x}} \Theta_{\mathbf{x},\mathbf{x}}^{-1} \mathbf{g}^* - \Theta_{\mathbf{x}',\mathbf{x}} \mathbf{H}^* \mathbf{A}^{-1} f_{\theta_0}(\mathbf{x}).
\end{equation}
This is linear in the initial function values $f_{\theta_0}((\mathbf{x}',\mathbf{x})) \sim \mathcal{N}(0, \mathcal{K}_{(\mathbf{x}',\mathbf{x}),(\mathbf{x}',\mathbf{x})})$, and allows us to approximate the infinite-width ensemble distribution by a Gaussian with mean and covariance
\begin{equation}
    \mu_{\Ens}(\mathbf{x}') 
    = \Theta_{\mathbf{x}',\mathbf{x}}\,\Theta_{\mathbf{x},\mathbf{x}}^{-1} \mathbf{g}^*,
\end{equation}
\begin{align}
\Sigma_{\Ens}(\mathbf{x}',\mathbf{x}') 
=& \mathcal{K}_{\mathbf{x}',\mathbf{x}'} - K_{\mathbf{x}',\mathbf{x}}  \mathbf{A}^{-\top} \mathbf{H}^* \Theta_{\mathbf{x},\mathbf{x}'} - \Theta_{\mathbf{x}',\mathbf{x}} \mathbf{H}^* \mathbf{A}^{-1} \mathcal{K}_{\mathbf{x},\mathbf{x}'} \\
&+ \Theta_{\mathbf{x}',\mathbf{x}}  \mathbf{A}^{-\top} \mathbf{H}^* \mathcal{K}_{\mathbf{x},\mathbf{x}}    \mathbf{H}^* \mathbf{A}^{-1} \Theta_{\mathbf{x},\mathbf{x}'}.
\end{align}
For $\mathcal{K}=\Theta$ the (approximate) ensemble covariance simplifies to
\begin{equation}
    \Sigma_{\Ens}(\mathbf{x}', \mathbf{x}') 
    = \Theta_{\mathbf{x}',\mathbf{x}'} 
    - \Theta_{\mathbf{x}',\mathbf{x}}\left( \mathbf{H}^* \mathbf{A}^{-1}  +\beta \mathbf{A}^{-\top} \mathbf{H}^* \mathbf{A}^{-1} \right)\Theta_{\mathbf{x},\mathbf{x}'}
\end{equation}
which is more uncertain than the $\Sigma_{\Ens}$ resulting from $\mathcal{K} \preceq \Theta$.

The Laplace approximation \citep{rasmussen2005gaussian} is a standard tool for approximate Bayesian inference in classification GPs, in which model uncertainty is captured by a Bayesian posterior, rather than an ensemble covariance. The Laplace posterior mean is the same as our approximate ensemble mean. The Laplace posterior covariance is
\begin{equation}
    \Sigma_{\Lap}(\mathbf{x}',\mathbf{x}') 
    = \Theta_{\mathbf{x}',\mathbf{x}'} - \Theta_{\mathbf{x}',\mathbf{x}} \mathbf{H}^{*} \mathbf{A}^{-1}   \Theta_{\mathbf{x},\mathbf{x}'}.
\end{equation}
For $\mathcal{K}=\Theta$ the gap to our approximate ensemble covariance is equal to
\begin{equation}
    \Sigma_{\Lap}(\mathbf{x}',\mathbf{x}') - \Sigma_{\Ens}(\mathbf{x}',\mathbf{x}')
    = \beta \Theta_{\mathbf{x}',\mathbf{x}} \mathbf{A}^{-\top} \mathbf{H}^* \mathbf{A}^{-1} \Theta_{\mathbf{x},\mathbf{x}'}
    \succeq 0.
\end{equation}
Hence the Laplace approximate NTK-GP-posterior covariance is always more uncertain than our approximate ensemble covariance.
This generalizes the analysis of \citet{he2020bayesian}, who consider MSE loss where the ensemble covariance and NTK-GP posterior covariance are available in closed form.


\section{Experiments}
\subsection{Classification toy dataset}
Following \citet{liu2020linearity} we consider a one-dimensional toy dataset with $3$ classes. As discussed in Section \ref{sec: theory-general-loss}, they show that when the last layer of the network is defined as $\mathrm{softmax}$, the NTK does not remain constant, even when applying label smoothing or parameter regularization. In Figure \ref{fig:toy_ntk_prepost} we show that the NTK of the pre-$\mathrm{softmax}$ layer does remain constant, allowing the linearization discussed in equation (\ref{eq: linearizedpreact}).

\begin{figure}[hbt!]
    \centering
    \includegraphics[width=\linewidth]{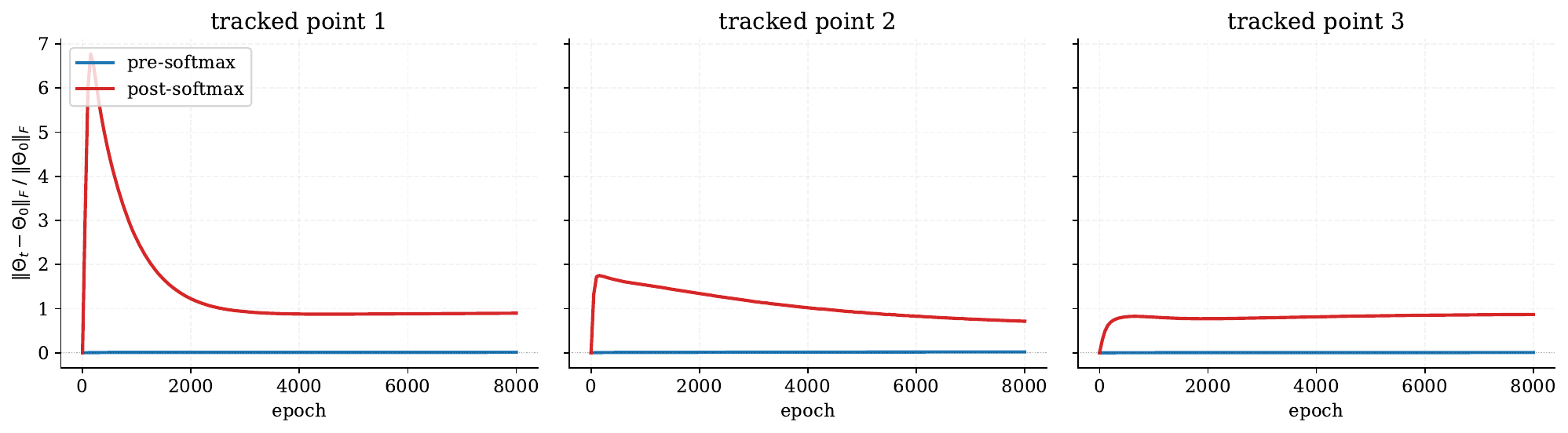}
    \caption{%
    Blue: Pre-$\mathrm{softmax}$ NTK (constant). Red: Post-$\mathrm{softmax}$ NTK (not constant).
    }
\label{fig:toy_ntk_prepost}
\end{figure}


\subsection{MNIST Classification}
Following \citet{yu2025divergence} we train a four-layer fully connected neural network on MNIST \citep{lecun2002gradient} using $2$ classes (odd or even). By using $\mathrm{softmax}$ with a reference class, the logit dimension is $1$ and thus the kernel is scalar-valued. We plot the evolution of the empirical NTK $t\mapsto \hat{\Theta}_{\theta_t}(x,x)$ during training in three input points $x\in \mathbb{R}^{784}$ in Figure \ref{fig:mnist-tracking}. In line with \citet{yu2025divergence} we see that the empirical NTK diverges for the classification task with one-hot encodings. When using label smoothing and zero parameter regularization, it converges but still moves nontrivially from initialization. This behavior can be expected as the required layer width $n$ in Theorem \ref{thm: beta=0 lazy training} grows inversely to the $\mu_{\mathcal{C}}$-constant of the function-space loss. When applying parameter regularization $\beta = 0.01$, the NTK remains almost constant during training, even for relatively low layer width. 

\begin{figure}[hbt!]
    \centering
    \includegraphics[width=\linewidth]{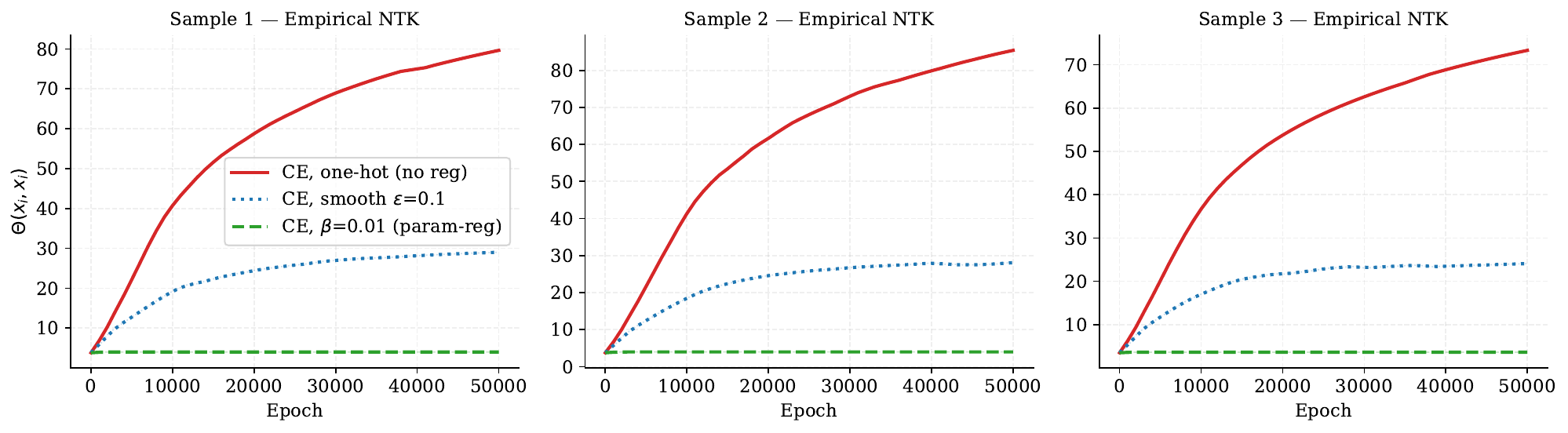}
    \caption{%
       The NTK for MNIST classification does not diverge when using label smoothing or parameter regularization.
    }
\label{fig:mnist-tracking}
\end{figure}

\section{Related Work}

A central obstruction arises when extending Neural Tangent Kernel (NTK) theory to classification. For cross-entropy or logistic losses on separable data, the optimum is attained only in the limit of diverging logits, and gradient descent drives parameters to infinity while converging only in direction \citep{soudry2018implicit, lyu2019gradient, ji2020directional}. This behavior violates the small-perturbation assumptions underlying NTK linearization and leads to a breakdown of kernel constancy. Recent work formalizes this incompatibility by showing that, under cross-entropy training, the empirical NTK can itself diverge over time rather than remain fixed \citep{yu2025divergence}. A related limitation arises when nonlinear output layers are incorporated into the model: even at initialization, such architectures need not admit a constant tangent kernel \citep{liu2020linearity}.

In response to these limitations, prior work has largely moved beyond the NTK regime rather than restoring it. One line of research studies the implicit bias of gradient-based optimization under cross-entropy, characterizing the asymptotic direction of the parameters rather than a finite predictor. This yields max-margin solutions in linear and homogeneous models \citep{soudry2018implicit, lyu2019gradient} and extends to deep settings, where it connects to neural collapse geometry \citep{papyan2020prevalence, ji2021unconstrained, thrampoulidis2022imbalance}. Recent work provides more detailed descriptions of cross-entropy dynamics beyond asymptotics, including tractable path-level analyses in non-convex models \citep{garrod2025diagonalizing}. A complementary line develops alternative infinite-width limits, such as mean-field and feature-learning regimes, which allow representations to evolve and depart from fixed-kernel behavior \citep{mei2018mean, yang2021tensor}. While these approaches capture phenomena beyond the lazy regime, they do not yield a finite-predictor NTK description of classification. More recently, NTK-based constructions have been used for uncertainty estimation in classification by leveraging empirical NTK features within scalable approximations \citep{calvo2026richer}, but without characterizing classification training dynamics within the NTK regime.

The present work addresses this gap by identifying conditions under which classification remains in a valid NTK regime. Rather than replacing cross-entropy or analyzing asymptotic directions, we show that the failure modes of NTK classification can be avoided by modifying the optimization landscape. In particular, regularization around initialization controls parameter drift, while full-support target distributions—such as those induced by label smoothing \citep{szegedy2016rethinking, muller2019does}—eliminate the infinite-logit optimum and yield finite solutions. Under these conditions, the relevant object is the pre-activation predictor, whose tangent kernel remains effectively constant throughout training. This yields an NTK characterization of the classification dynamics in function-space.

\section{Conclusion}

We extend the Neural Tangent Kernel (NTK) theory to classification and more general losses by identifying conditions under which wide neural networks remain in the lazy training regime. In particular, we show that parameter-space regularization and full-support target distributions prevent the divergence of logits and ensure that training is well-approximated by the linearized model. This yields an explicit NTK characterization of classification dynamics at the level of finite predictors. We further analyze the distribution over trained networks induced by random initialization, connecting this notion of uncertainty to Bayesian interpretations. Finally, we empirically validate our theoretical predictions through controlled experimental setups.
\bibliographystyle{plainnat}
\bibliography{references}

\appendix

\section{Recap of NNGP and NTK theory for wide neural networks}
\label{app: ReminderNTKTheory}
\subsection{Feedforward neural networks in NTK parametrization}

Consider an input $x^0 := x \in \mathbb{R}^d$. A neural network with $L$ hidden layers and linear last layer is defined through
\begin{equation}
    x^{(l)} := \phi_l\left(x^{(l-1)}, \theta^{(l)} \right)
    \quad \text{for} \quad
    l=1,\ldots,L,
\end{equation}
\begin{equation}
    f_{\theta}(x) := \frac{1}{\sqrt{n_{L}}} W^{(L+1)}x^{(L)} + b^{(L+1)}.
\end{equation}
Each hidden layer map $\phi_l: \mathbb{R}^{n_{l-1}} \times \mathbb{R}^{p_l} \to \mathbb{R}^{n_{l}}$ takes the previous layer $x^{(l-1)} \in \mathbb{R}^{n_{l-1}}$ and current parameters $\theta^{(l)} \in \mathbb{R}^{p_l}$ as input. The final layer parameters are $\theta^{L+1} = (W^{(L+1)}, b^{(L+1)}) \in \mathbb{R}^{p_{L+1}}$, where $W^{(L+1)} \in \mathbb{R}^{K\times n_L}$, $b^{(L+1)} \in \mathbb{R}^{K}$, $p_{L+1} = K (n_L +1)$. The full parameter vector is $\theta = \left(\theta^{(l)}\right)_{l=1}^{L+1} \in \mathbb{R}^{p}$ and the total number of parameters is $p = \sum_{l=1}^{L+1} p_l$.

Consider an activation function $\phi:\mathbb{R}\to\mathbb{R}$ that is applied element-wise. A standard feedforward layer in NTK parameterization is defined by
\begin{equation}
    \phi_l(x^{(l-1)}, \theta^{(l)}) = \phi\left(\frac{1}{\sqrt{n_{l-1}}} W^{(l)} x^{(l-1)} + b^{(l)}\right).
\end{equation}
Here, the parameters are $\theta^{(l)} = (W^{(l)}, b^{(l)}) \in \mathbb{R}^{n_{l}(n_{l-1} + 1)}$, where $W^{(l)} \in \mathbb{R}^{n_{l} \times n_{l-1}}$, $b^{(l)} \in \mathbb{R}^{n_{l}}$. They are randomly initialized through $W^{(l)}_{0,ij}\sim \mathcal{N}(0,\sigma^2_{w_l})$, $ b^{(l)}_{0,i} \sim \mathcal{N}(0,\sigma^2_{b_l})$  iid. Similarly one can define a residual layer and a convolutional layer. 

In the following, $f_{\theta}$ is a standard feedforward neural network whose non-polynomial activation function $\phi$ is twice continuously differentiable, and $\phi$ and $\phi'$ are globally Lipschitz.


\subsection{Local boundedness and Lipschitzness of the parameter-Jacobian}
\citet{liu2020linearity} prove for a standard feedforward neural network (as well as ResNets and CNNs with large number of channels):
\JacobianboundedLipschitz*
To gain intuition consider a one-hidden-layer neural network with one-dimensional inputs and outputs and no biases:
\begin{equation}
    f: \mathbb{R}\times \mathbb{R}^{2n} \to \mathbb{R}, 
    \quad
    f_{a,w}(x) = \frac{1}{\sqrt{n}} \sum_{j=1}^n a_j \phi(w_jx).
\end{equation}
As it is linear and therefore easy to analyze in the last-layer parameters $a$, we only look at the derivatives with respect to hidden weights $w$. The Jacobian and Hessian entries are
\begin{equation}
    \frac{\partial f_{\theta}(x)}{\partial w_j }  = \frac{1}{\sqrt{n}} a_j\phi'(w_jx)x, 
    \quad \frac{\partial^2 f_{\theta}(x)}{\partial w_j\partial w_k}  = \frac{1}{\sqrt{n}} a_j\phi''(w_jx)x^2 1_{\{j=k \}}.
\end{equation}
Consider the random initialization $a_j,w_j\sim \mathcal{N}(0,1)$ iid. Recall that $|\phi'|$ and $|\phi''|$ are globally bounded. The Euclidean norm of the Jacobian at initialization is
\begin{equation}
\left\lVert \frac{\partial f_{\theta}(x) }{\partial w} \right\rVert_2 
= \frac{1}{\sqrt{n}} x \sqrt{\sum_{j=1}^n a_j^2 \phi'(w_jx)^2} 
= O(1).
\end{equation} 
The spectral norm of the Hessian at initialization is
\begin{equation}
    \left\lVert \frac{\partial^2 f_{\theta}(x) }{\partial w^2}  \right\rVert_{\op}
    = \frac{1}{\sqrt{n}} x^2 \max_{j=1,\ldots,n} |a_j \phi''(w_jx)|
    = O\left(\frac{\sqrt{\log n}}{\sqrt{n}}\right).
\end{equation}
This shows that the network is $O(1)$-Lipschitz in the parameters, while its parameter-Jacobian is $O((\log n)^c/\sqrt{n})$-Lipschitz. 


\subsection{Convergence of the neural network at initialization}
\citet{lee2018dnnsgps, matthews2018gaussian} prove:
\NNGPconvergence*
This directly implies that the network values are in a compact set at initialization:
\begin{lemma}
\label{lemma: initial function in compact set}
For any $\delta_0>0$, there is $K_0'>0$ such that for large enough layer width $n$, with probability $1-\delta_0$ over random initialization $\theta_0$: For any input $x\in M^d$:
\begin{equation}
    \left\lVert f_{\theta_0}(x) \right\rVert_2 \le K_0'.
\end{equation}
\end{lemma}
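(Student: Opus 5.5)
The statement to prove is Lemma~\ref{lemma: initial function in compact set}: a uniform-in-$x$ bound on $\lVert f_{\theta_0}(x)\rVert_2$ that holds with high probability. Lemma~\ref{lemma: NNGPconvergence} alone only gives convergence of finite-dimensional marginals, so it does not control a supremum over the compact set $M^d$. The plan is to combine it with the Lipschitz control in the input variable provided by Lemma~\ref{lemma: Jacobian Lipschitz Hessian bounded} and the structure of the network, via a finite covering (chaining-free) argument.

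\textbf{Step 1 (finite net).} First fix a finite $\varepsilon$-net $\mathbf{x}_1,\ldots,\mathbf{x}_m$ of the compact set $M^d$. By Lemma~\ref{lemma: NNGPconvergence}, the vector $f_{\theta_0}(\mathbf{x}_{1:m})$ converges in distribution to $\mathcal{N}(0,\mathcal{K}(\mathbf{x}_{1:m},\mathbf{x}_{1:m}))$. This limit is tight, so there is $A>0$ with $\mathbb{P}(\max_j\lVert \mathcal{N}_j\rVert_2> A)\le \delta_0/4$. By the portmanteau theorem applied to the closed set $\{\max_j\lVert z_j\rVert_2\ge A\}$, for large $n$ we get $\mathbb{P}(\max_j\lVert f_{\theta_0}(\mathbf{x}_j)\rVert_2\ge A)\le\delta_0/2$.

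\textbf{Step 2 (input-Lipschitzness, the main obstacle).} Next I need a constant $L$ such that, with probability $1-\delta_0/2$ for large $n$, $x\mapsto f_{\theta_0}(x)$ is $L$-Lipschitz on $M^d$, uniformly in the width. The statement is about inputs, whereas Lemma~\ref{lemma: Jacobian Lipschitz Hessian bounded} is about parameters, so I would transfer it. For the first layer, $x\mapsto \frac{1}{\sqrt{d}}W^{(1)}x$ enters in the same way as $W^{(1)}$. Concretely, $\nabla_x f_{\theta}(x)$ is a contraction of $\partial f/\partial W^{(1)}$ against $W^{(1)}$: by the chain rule $\partial f/\partial x = \sum_i (\partial f/\partial h_i)\,W^{(1)}_{i\cdot}/\sqrt d$ and $\partial f/\partial W^{(1)}_{ik} = (\partial f/\partial h_i)\,x_k/\sqrt d$. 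Hence $\lVert\nabla_x f\rVert\le \lVert W^{(1)}\rVert_{\op}\,\lVert\partial f/\partial h\rVert/\sqrt{d}$.

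Two bounds are then needed. The first is $\lVert \partial f/\partial h\rVert_2=O(1/\sqrt{n})$ in the NTK parametrization. This is the standard backward-signal bound of \citet{liu2020linearity}, which underlies the bound $K_1'$ in Lemma~\ref{lemma: Jacobian Lipschitz Hessian bounded}, and it holds uniformly in $x\in M^d$ on the same high-probability event. The second is $\lVert W^{(1)}\rVert_{\op}=O(\sqrt n)$ with high probability, by the standard Gaussian matrix bound $\lVert W^{(1)}\rVert_{\op}\le \sigma_{w_1}(\sqrt{n}+\sqrt{d}+t)$. Their product gives $L=O(1)$. If the reduction to the backward signal proves delicate, the alternative is a direct layerwise induction: each layer map $x^{(l-1)}\mapsto x^{(l)}$ is $\mathrm{Lip}(\phi)\lVert W^{(l)}\rVert_{\op}/\sqrt{n_{l-1}}=O(1)$-Lipschitz, with $\lVert W^{(l)}\rVert_{\op}=O(\sqrt{n})$ at initialization, and the readout $\frac{1}{\sqrt{n_L}}W^{(L+1)}$ has operator norm $O(1)$. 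This gives $L$ as a product of $O(1)$ factors, using only Gaussian operator-norm concentration and the assumption that $\phi$ is globally Lipschitz. I would adopt this induction as the main route, since it is cleaner.

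\textbf{Step 3 (combine).} On the intersection of the two events, which has probability at least $1-\delta_0$, every $x\in M^d$ has a net point $\mathbf{x}_j$ with $\lVert x-\mathbf{x}_j\rVert\le\varepsilon$. Therefore
\[
\lVert f_{\theta_0}(x)\rVert_2\le A+L\varepsilon =: K_0'.
\]
The net $\varepsilon$ can be fixed (say $\varepsilon=1$) once $L$ is known, because $L$ does not depend on the net. So $A$ depends only on $\delta_0$ and the fixed net, and $K_0'$ is independent of $n$, as required.
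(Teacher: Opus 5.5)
\textbf{Gap in Step 2: the input-Lipschitz constant you obtain grows like $\sqrt{n}$.} You are right that Lemma~\ref{lemma: NNGPconvergence} as stated, with only finite-dimensional marginals, does not control $\sup_{x\in M^d}$. The paper just says the lemma follows ``directly'' from NNGP convergence, which tacitly uses convergence, and hence tightness, of $f_{\theta_0}(\cdot)$ in $C(M^d)$. Your Steps 1 and 3 are fine, but neither route in Step 2 gives a width-independent input-Lipschitz constant.

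In the layerwise induction the first layer has $n_0=d$, not $n$. Its factor is therefore $\mathrm{Lip}(\phi)\lVert W^{(1)}\rVert_{\op}/\sqrt{d}$, which is of order $\sqrt{n/d}$, not $O(1)$. The readout does not compensate: $\frac{1}{\sqrt{n_L}}W^{(L+1)}\in\mathbb{R}^{K\times n_L}$ has operator norm of order one, since its largest singular value is about $\sigma_{w_{L+1}}\sqrt{n_L}$. So the product of operator norms is of order $\sqrt{n}$.

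The backward-signal route has the same defect. $\lVert \partial f/\partial h\rVert_2$ is of order one, not $O(1/\sqrt{n})$. What \citet{liu2020linearity} show is $\tilde O(1/\sqrt{n})$ is the $\ell_\infty$-norm of the backward signal. An order-one $\ell_2$-norm is exactly what is consistent with $\lVert J_{\theta}(x)\rVert_{2,2}\le K_1'$: in the paper's one-hidden-layer example, $\frac{1}{\sqrt{n}}\bigl(\sum_j a_j^2\phi'(w_jx)^2\bigr)^{1/2}=O(1)$. Hence you again get $\lVert\nabla_x f\rVert = O(\sqrt{n})$.

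With $L$ of order $\sqrt{n}$, your $K_0'=A+L\varepsilon$ is not width-independent. Shrinking $\varepsilon$ like $1/\sqrt{n}$ makes the net depend on $n$, which Lemma~\ref{lemma: NNGPconvergence} (a fixed finite set of inputs) no longer covers.

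\textbf{Missing idea and a repair.} Worst-case operator norms throw away the cancellation from the random readout. The true $\sup_x\lVert\nabla_x f_{\theta_0}(x)\rVert$ is $O_p(1)$ only because $W^{(L+1)}$ is Gaussian and independent of the hidden features.

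One repair keeps your induction but applies it to the normalized feature map $x\mapsto x^{(L)}(x)/\sqrt{n_L}$. That map is $O(1)$-Lipschitz with high probability, because the $\sqrt{n}$ from the first layer is cancelled by the normalization. Likewise $\lVert x^{(L)}(x_0)\rVert_2/\sqrt{n_L}=O(1)$ at a base point $x_0$. Conditional on $\theta^{(1)},\ldots,\theta^{(L)}$, each coordinate of $f_{\theta_0}(x)=\frac{1}{\sqrt{n_L}}W^{(L+1)}x^{(L)}(x)+b^{(L+1)}$ is a centered Gaussian process. Its canonical metric is $\sigma_{w_{L+1}}\lVert x^{(L)}(x)-x^{(L)}(x')\rVert_2/\sqrt{n_L}\le L'\lVert x-x'\rVert_2$ with $L'=O(1)$. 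Dudley's entropy bound over the $d$-dimensional compact set $M^d$ then gives $\mathbb{E}\bigl[\sup_{x}\lVert f_{\theta_0}(x)\rVert_2 \mid \theta^{(1)},\ldots,\theta^{(L)}\bigr]=O(1)$ on the operator-norm event. Markov's inequality finishes the proof, and no net or Lemma~\ref{lemma: NNGPconvergence} is needed.

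Alternatively, prove width-uniform moment bounds on the increments $f_{\theta_0}(x)-f_{\theta_0}(x')$ and use Kolmogorov--Chentsov to get tightness in $C(M^d)$. That is the functional NNGP convergence the paper's one-line justification implicitly relies on.
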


\subsection{Convergence of the empirical NTK at initialization}
The following result first appeared in \citet{jacot2018neural}, and was subsequently extended to more general architectures in \citet{yang2020tensorprograms2}. \citet{arora2019exact} show convergence uniformly over input vectors in a compact set, which we restate in the following.
\begin{lemma}
Consider random initialization $\theta_0$. Then, $\hat{\Theta}_{\theta_0}$ converges in probability (uniformly over inputs $x,x' \in M^d$) to a deterministic kernel:
\begin{equation}
    \hat{\Theta}_{\theta_0}(x,x') = J_{\theta_0}(x) J_{\theta_0}(x')^{\top} \overset{p}{\longrightarrow} \Theta(x,x') \in \mathbb{R}^{K\times K}.
\end{equation}
For multidimensional outputs, i.e. $K>1$, $\Theta(x,x')$ is a diagonal matrix. Thus $\Theta(\mathbf{x},\mathbf{x}') \in \mathbb{R}^{NK\times N'K}$ consists of $K$ blocks of dimension $N\times N'$.
\end{lemma}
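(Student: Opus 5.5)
We would prove the statement in two stages: first pointwise convergence for each fixed pair $(x,x')$, then an upgrade to uniform convergence over the compact set $M^d\times M^d$. The diagonal structure for $K>1$ will come out of the pointwise computation in the last layer.

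\textbf{Pointwise convergence.} We follow the inductive argument of \citet{jacot2018neural}, taking the widths $n_1,\ldots,n_L\to\infty$ (sequentially, or jointly via \citet{yang2020tensorprograms2}). Write the Jacobian as a sum over layers, $\hat{\Theta}_{\theta_0}(x,x') = \sum_{l=1}^{L+1} J^{(l)}_{\theta_0}(x)J^{(l)}_{\theta_0}(x')^{\top}$.
\begin{itemize}
\item \emph{Last-layer term.} For $\theta^{(L+1)}=(W^{(L+1)},b^{(L+1)})$ the $(k,k')$ entry is $\delta_{kk'}\bigl(\frac{1}{n_L}\langle x^{(L)}(x),x^{(L)}(x')\rangle + 1\bigr)$. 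This is exactly diagonal. By the NNGP recursion behind Lemma \ref{lemma: NNGPconvergence} and the law of large numbers, it converges in probability to $\delta_{kk'}\bigl(\Sigma^{(L)}(x,x')+1\bigr)$.
\item \emph{Hidden-layer terms.} By the chain rule, the $(k,k')$ entry of a hidden-layer contribution has the form
\[
\frac{1}{n_L}\sum_{j} W^{(L+1)}_{kj}W^{(L+1)}_{k'j}\,\dot\phi_j(x)\dot\phi_j(x')\,\bigl[\text{backpropagated term}_j\bigr].
\]
Here $W^{(L+1)}$ is independent of the earlier layers, and the bracketed term converges to the NTK of a network with one fewer layer (induction hypothesis). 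Hence the LLN gives convergence to $\mathbb{E}[W_{kj}W_{k'j}]\times(\cdots)=\delta_{kk'}\sigma_{w}^2\times(\cdots)$. This yields the usual recursion $\Theta^{(l+1)}=\Theta^{(l)}\dot\Sigma^{(l+1)}+\Sigma^{(l+1)}$.
\end{itemize}
The off-diagonal entries vanish in the limit because distinct rows of the readout weights are independent and have mean zero. So $\Theta(x,x')=\Theta^{\mathrm{scalar}}(x,x')\,I_K$, and consequently $\Theta(\mathbf{x},\mathbf{x}')$ decomposes into $K$ identical $N\times N'$ blocks after permuting coordinates.

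\textbf{Uniform convergence.} Pointwise convergence extends to any finite set of input pairs by a union bound. To pass to all of $M^d$, we would show equicontinuity with high probability. Since $\phi$ and $\phi'$ are globally Lipschitz, the maps $x\mapsto x^{(l)}(x)/\sqrt{n_l}$ and $x\mapsto J_{\theta_0}(x)$ are Lipschitz in $x$, with a constant $L_\delta$ that is bounded with probability $1-\delta$ uniformly in $n$. This follows by induction over layers using $\lVert W^{(l)}\rVert_{\op}/\sqrt{n_{l-1}}=O(1)$ with high probability, a standard Gaussian matrix bound. Combined with the boundedness of $\lVert J_{\theta_0}(x)\rVert$ from Lemma \ref{lemma: Jacobian Lipschitz Hessian bounded}, this makes $(x,x')\mapsto\hat\Theta_{\theta_0}(x,x')$ Lipschitz with a width-independent constant. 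The limit $\Theta$ is continuous, being a composition of continuous Gaussian expectations.

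We then fix a finite $\varepsilon$-net of $M^d$, which exists by compactness, apply pointwise convergence on the net, and use a triangle inequality to obtain
\[
\sup_{x,x'}\bigl\lVert\hat\Theta_{\theta_0}(x,x')-\Theta(x,x')\bigr\rVert\le C\varepsilon+o_p(1).
\]

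\textbf{Main obstacle.} We expect the hardest step to be this uniform-in-width Lipschitz control of the Jacobian in the input $x$. It requires carrying operator-norm bounds on the backpropagated vectors through every layer simultaneously. If this becomes cumbersome, we would instead invoke the uniform statement of \citet{arora2019exact} directly, and verify only the diagonal structure ourselves.
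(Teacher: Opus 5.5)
\paragraph{What is correct.} Your pointwise argument and your derivation of $\Theta(x,x')=\Theta^{\mathrm{scalar}}(x,x')\,I_K$ are the standard ones from \citet{jacot2018neural}. One caveat: your single-sum form of the hidden-layer contribution is only valid after the inner widths have been sent to infinity, so the single-width version has to come from \citet{yang2020tensorprograms2}, as you note. The paper itself gives no proof here; it only cites these works and attributes uniformity over $M^d$ to \citet{arora2019exact}.

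\paragraph{The step that fails.} You claim that $x\mapsto J_{\theta_0}(x)$ is Lipschitz with a width-independent constant ``by induction using $\lVert W^{(l)}\rVert_{\op}/\sqrt{n_{l-1}}=O(1)$''. This does not go through as written. Write $h^{(l)}$ for the preactivations. Operator-norm bounds do control the forward pass, $\lVert h^{(l)}(x)-h^{(l)}(y)\rVert_2\le C\sqrt{n_l}\,\lVert x-y\rVert_2$. The backward pass, however, consists of Hadamard products. Already at the top hidden layer, $\delta_k(x):=\partial f_k/\partial h^{(L)}=n_L^{-1/2}\,W^{(L+1)}_k\odot\phi'(h^{(L)}(x))$, so
\[
\lVert\delta_k(x)-\delta_k(y)\rVert_2
= n_L^{-1/2}\bigl\lVert W^{(L+1)}_k\odot\bigl(\phi'(h^{(L)}(x))-\phi'(h^{(L)}(y))\bigr)\bigr\rVert_2
\le C\max_j\lvert W^{(L+1)}_{kj}\rvert\,\lVert x-y\rVert_2 .
\]
Here $\max_j\lvert W^{(L+1)}_{kj}\rvert$ grows like $\sqrt{\log n}$, and with only $\ell_2$ control of the preactivation difference this is the best the operator-norm route gives. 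In lower layers the corresponding term is $\bigl(\phi'(h^{(l)}(x))-\phi'(h^{(l)}(y))\bigr)\odot g^{(l)}(x)$ with $g^{(l)}=n_l^{-1/2}W^{(l+1)\top}\delta^{(l+1)}$. Bounding it requires control of $\sup_{x\in M^d}\lVert g^{(l)}(x)\rVert_\infty$, i.e.\ delocalization of the backpropagated vectors uniformly in the input. That is a genuinely new ingredient, and known bounds of this type lose $(\log n)^c$ factors (compare Lemma~\ref{lemma: Jacobian Lipschitz Hessian bounded}).

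\paragraph{Why it matters and how to repair it.} If the Lipschitz constant grows like $\mathrm{polylog}(n)$, your estimate $C\varepsilon+o_p(1)$ over a fixed $\varepsilon$-net does not close, because $C$ depends on $n$. There are two ways to repair this.
\begin{enumerate}
\item Prove the uniform delocalization bounds. At the top layer the $\max_j$ disappears if you estimate differences of $\hat\Theta$ rather than of $J$:
\[
\lvert\langle\delta_k(x)-\delta_k(y),\delta_{k'}(x')\rangle\rvert\le C\,n_L^{-1}\lVert W^{(L+1)}_k\odot W^{(L+1)}_{k'}\rVert_2\,\lVert h^{(L)}(x)-h^{(L)}(y)\rVert_2 ,
\]
and $n_L^{-1/2}\lVert W^{(L+1)}_k\odot W^{(L+1)}_{k'}\rVert_2=O_p(1)$ by the law of large numbers, independently of $x$. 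The lower layers still need the delocalization bounds.
\item Let the mesh $\varepsilon_n\to0$ polylogarithmically, and pair it with a non-asymptotic pointwise bound at width $n$ whose failure probability is $o(\varepsilon_n^{2d})$. Jacot's sequential-limit argument supplies no such rate.
\end{enumerate}

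\paragraph{Your fallback.} Citing \citet{arora2019exact} may not give you what you expect. As far as I can tell, their bound is stated for a fixed pair of inputs and for ReLU. It therefore provides the kind of quantitative pointwise concentration the second repair needs, but not uniformity over $M^d$ by itself.
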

\citet{carvalho2025positivity} show that the NTK is strictly positive definite for all non-polynomial activations, which applies to our setting. We will use this for zero parameter regularizer $\beta=0$. For positive parameter regularizer $\beta>0$ this is not required, which would allow extending our analysis to more general loss functionals that involve infinitely many training points (e.g. PINNs).
\section{Properties of the function-space loss}
\label{appendix: functionspaceloss}

In this section we introduce various assumptions on the function-space loss and discuss its properties. Table \ref{table: assumptions} provides an overview.

\begin{table}[hbt!]
\caption{Summary of assumptions on the function-space loss $\mathcal{C}:\mathbb{R}^{NK}\to\mathbb{R}$.}
\label{table: assumptions}
\centering
\small
\renewcommand{\arraystretch}{1.4}
\begin{tabular}{@{}clp{0.5\linewidth}p{0.2\linewidth}@{}}
\toprule
\# & Description & Summary \\
\midrule
(\ref{ass: Frechet bounded}) &
Bounded function-space gradient &
$\forall \mathbf{z}\in \mathcal{S}_0:
\quad \norm{\nabla_{\mathbf{z}}\mathcal{C}(\mathbf{z})}_2 \le K_1$. 
 \\

(\ref{assumption: upperboundedfunctionspacehessian}) &
Bounded gradient growth &
$\forall\mathbf{z}\in\mathcal{S}_0:
\quad \norm{\nabla_{\mathbf{z}}\mathcal{C}(\mathbf{z})}_{2}^2 \le 2K_2 \left(\mathcal{C}(\mathbf{z}) - \inf\mathcal{C}\right)$. 
 \\

(\ref{assumption: functionspaceplcondition}) &
Function-space PL &
$\forall \mathbf{z}\in\mathcal{S}_0:
\quad \norm{\nabla_{\mathbf{z}}\mathcal{C}(\mathbf{z})}_2^2 \ge 2\mu_{\mathcal{C}}\left(\mathcal{C}(\mathbf{z}) - \inf\mathcal{C}\right)$. 
 \\

\bottomrule
\end{tabular}
\end{table}

We write for $\mathbf{x}_1,\ldots,\mathbf{x}_N \in M^d$:
\begin{equation}
    \mathbf{f}_{\theta} 
    := f_{\theta}(\mathbf{x})
    := (f_{\theta}(\mathbf{x}_i))_i \in \mathbb{R}^{NK}
\end{equation}
and for the parameter-Jacobian and Hessian of the network,
\begin{equation}
    \mathbf{J}_{\theta} 
    := \frac{\partial}{\partial \theta} f_{\theta}(\mathbf{x}) \in \mathbb{R}^{NK\times p}
    \quad
    \text{and}
    \quad
    \mathbf{H}_{\theta} 
    := \frac{\partial^2}{\partial \theta^2} f_{\theta}(\mathbf{x}) \in \mathbb{R}^{NK\times p \times p}.
\end{equation}
The function-space loss gradient and Hessian are
\begin{equation}
    \nabla_{\mathbf{z}} \mathcal{C}(\mathbf{f}_{\theta})
    := \left(\nabla_{\mathbf{z}_i} \mathcal{C}(\mathbf{f}_{\theta}) \right)_{i} \in \mathbb{R}^{NK}
    \quad
    \text{and}
    \quad
    \nabla_{\mathbf{z} \mathbf{z}}^2 \mathcal{C}(\mathbf{f}_{\theta})
    := \left(\nabla^2_{\mathbf{z}_i \mathbf{z}_j} \mathcal{C}(\mathbf{f}_{\theta}) \right)_{ij} \in \mathbb{R}^{NK\times NK}.
\end{equation}
For losses summing over observations (such as MSE and categorical cross-entropy) the function-space Hessian will be a block-diagonal consisting of $N$ blocks of shape $K\times K$.
The chain rule gives the parameter-space loss gradient and Hessian
\begin{equation}
    \nabla_{\theta} \mathcal{C}(\mathbf{f}_{\theta}) 
    = \mathbf{J}_{\theta}^{\top} \nabla_{\mathbf{z}}\mathcal{C}(\mathbf{f}_{\theta}) 
    \in \mathbb{R}^p
    \quad
    \text{and}
    \quad
    \nabla_{\theta\theta}^2 \mathcal{C}(\mathbf{f}_{\theta}) 
    = \nabla_{\mathbf{z}}\mathcal{C}(\mathbf{f}_{\theta})^{\top} \mathbf{H}_{\theta}
    + \mathbf{J}_{\theta}^{\top} \nabla_{\mathbf{z}\mathbf{z}}^2\mathcal{C}(\mathbf{f}_{\theta}) \mathbf{J}_{\theta}
    \in \mathbb{R}^{p\times p}.
\end{equation}
For $\beta=0$ we assume that the function-space loss $\mathcal{C}$ is lower bounded, which implies that $\lim_{t\to T_{\max}} \mathcal{C}(\mathbf{f}_{\theta_t})$ exists and is finite.

For general $\beta\ge 0$, we have the following high-probability bound on the loss at initialization and during training:
\boundinitialloss*
\begin{proof}
Using the chain rule and the definition of the gradient flow, we have
\begin{equation}
    \frac{d}{dt} \mathcal{L}^{\beta}(\theta_t)
    = \nabla_{\theta}\mathcal{L}^{\beta}(\theta_t) ^{\top} \frac{d\theta_t}{dt}
    =-\eta_0 \nabla_{\theta}\mathcal{L}^{\beta}(\theta_t) ^{\top} \nabla_{\theta}\mathcal{L}^{\beta}(\theta_t)
    =-\eta_0 \left\lVert \nabla_{\theta}\mathcal{L}^{\beta}(\theta_t) \right\rVert_2^2 \le 0.
\end{equation}
Thus $\mathcal{L}^{\beta}(\theta_t) 
    \le \mathcal{L}^{\beta}(\theta_0) $ and therefore
\begin{equation}
    \mathcal{C}(\mathbf{f}_{\theta_t})
    \le \mathcal{C}(\mathbf{f}_{\theta_t}) + \frac{1}{2} \beta \left\lVert \theta_t - \theta_0 \right\rVert_2^2
    = \mathcal{L}^{\beta}(\theta_t) 
    \le \mathcal{L}^{\beta}(\theta_0) 
    = \mathcal{C}(\mathbf{f}_{\theta_0})
    \le K_0.
\end{equation}
The last inequality follows by Lemma \ref{lemma: initial function in compact set}.
\end{proof}
\begin{remark}
This Lemma does not apply to the weight-decay regularizer $\frac{1}{2}\left\lVert \theta \right\rVert_2^2$ analyzed in \citet{lewkowycz2020training}: In general, $\frac{1}{2}\left\lVert \theta_0 \right\rVert_2^2 \neq 0$, and thus for $\beta>0$ we cannot use $\mathcal{L}^{\beta}(\theta_0) = \mathcal{C}(f_{\theta_0})$. They show that using gradient flow with this regularizer leads to $f_{\theta_{\infty}}(\cdot)=0$ in the infinite-width limit, and we are thus not in the NTK regime.
\end{remark}

\subsection{Bounded function-space loss gradient}
\begin{assumption}[Bounded function-space loss gradient]
\label{ass: Frechet bounded}
There is $K_1>0$ such that the norm of the gradient of the function-space loss is bounded on the loss sublevel set:
\begin{equation}
    \left\lVert \nabla_{\mathbf{z}}\mathcal{C}(\mathbf{z}) \right\rVert_2 \le K_1
    \quad
    \text{for all $\mathbf{z}\in\mathcal{S}_0$.}
\end{equation}
\end{assumption}

\subsection{Bounded gradient growth}
\begin{assumption}[Bounded gradient growth]
\label{assumption: upperboundedfunctionspacehessian}
There is $K_2>0$ such that
\begin{equation}
    \lVert \nabla_{\mathbf{z}} \mathcal{C}(\mathbf{z}) \rVert_2^2
    \le 2 K_2 \left(\mathcal{C}(\mathbf{z}) - \inf \mathcal{C} \right)
    \quad
    \text{for all $\mathbf{z}\in\mathcal{S}_0$}.
\end{equation}
\end{assumption}
This assumption is satisfied if the eigenvalues of the function-space Hessian are globally bounded, as the following lemma shows.
\begin{lemma}
\label{lemma: functionspacebetasmoothness}
Assume that $\left\lVert \nabla_{\mathbf{z}\mathbf{z}}^2\mathcal{C}(\mathbf{z}) \right\rVert_{\op} \le K_2$ for all $\mathbf{z}\in\mathbb{R}^{NK}$. Then,
\begin{equation}
    \lVert \nabla_{\mathbf{z}} \mathcal{C}(\mathbf{z}) \rVert_2^2
    \le 2 K_2 \left(\mathcal{C}(\mathbf{z}) - \inf \mathcal{C} \right)
    \quad
    \text{for all $\mathbf{z}\in\mathbb{R}^{NK}$}.
\end{equation}
\end{lemma}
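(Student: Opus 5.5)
The plan is the standard descent-lemma argument for $L$-smooth functions, applied globally in function space. Fix $\mathbf{z}\in\mathbb{R}^{NK}$. If $\nabla_{\mathbf{z}}\mathcal{C}(\mathbf{z})=0$ there is nothing to prove, since $\mathcal{C}(\mathbf{z})-\inf\mathcal{C}\ge 0$; this uses that $\mathcal{C}$ is lower bounded, and if $\inf\mathcal{C}=-\infty$ the right-hand side is $+\infty$ and the claim is trivial. Otherwise I will compare $\mathcal{C}(\mathbf{z})$ with the value of $\mathcal{C}$ at a single gradient-descent step $\mathbf{z}' := \mathbf{z} - \frac{1}{K_2}\nabla_{\mathbf{z}}\mathcal{C}(\mathbf{z})$.

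The first step is the quadratic upper bound. By Taylor's theorem with integral remainder along the segment from $\mathbf{z}$ to $\mathbf{z}+\mathbf{h}$, which is valid because $\mathcal{C}$ is $C^2$,
\begin{equation}
\mathcal{C}(\mathbf{z}+\mathbf{h}) = \mathcal{C}(\mathbf{z}) + \nabla_{\mathbf{z}}\mathcal{C}(\mathbf{z})^{\top}\mathbf{h} + \int_0^1 (1-s)\,\mathbf{h}^{\top}\nabla^2_{\mathbf{z}\mathbf{z}}\mathcal{C}(\mathbf{z}+s\mathbf{h})\,\mathbf{h}\,ds.
\end{equation}
The global operator-norm bound on the Hessian bounds the integrand by $K_2\lVert\mathbf{h}\rVert_2^2$. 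This gives
\begin{equation}
\mathcal{C}(\mathbf{z}+\mathbf{h}) \le \mathcal{C}(\mathbf{z}) + \nabla_{\mathbf{z}}\mathcal{C}(\mathbf{z})^{\top}\mathbf{h} + \tfrac{K_2}{2}\lVert\mathbf{h}\rVert_2^2 .
\end{equation}

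The second step plugs in $\mathbf{h} = -\frac{1}{K_2}\nabla_{\mathbf{z}}\mathcal{C}(\mathbf{z})$, which minimizes the right-hand side over $\mathbf{h}$. The result is $\mathcal{C}(\mathbf{z}') \le \mathcal{C}(\mathbf{z}) - \frac{1}{2K_2}\lVert\nabla_{\mathbf{z}}\mathcal{C}(\mathbf{z})\rVert_2^2$. Using $\inf\mathcal{C}\le\mathcal{C}(\mathbf{z}')$ and rearranging yields exactly
\begin{equation}
\lVert\nabla_{\mathbf{z}}\mathcal{C}(\mathbf{z})\rVert_2^2 \le 2K_2\bigl(\mathcal{C}(\mathbf{z})-\inf\mathcal{C}\bigr).
\end{equation}

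There is no real obstacle. The only point needing care is that the Hessian bound must hold on the whole segment to $\mathbf{z}'$, which may leave the sublevel set $\mathcal{S}_0$. This is why the lemma assumes the bound globally on $\mathbb{R}^{NK}$, and why the conclusion holds on all of $\mathbb{R}^{NK}$, in particular on $\mathcal{S}_0$, giving Assumption \ref{assumption: upperboundedfunctionspacehessian}.
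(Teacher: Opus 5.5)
Your proposal is correct and follows essentially the paper's proof: a second-order Taylor expansion with integral remainder, bounded via the global Hessian estimate by $\frac{K_2}{2}\lVert \mathbf{y}-\mathbf{z}\rVert_2^2$, then evaluated at the gradient step $\mathbf{y} = \mathbf{z} - \frac{1}{K_2}\nabla_{\mathbf{z}}\mathcal{C}(\mathbf{z})$ together with $\inf\mathcal{C}\le\mathcal{C}(\mathbf{y})$. Your remark that the bound must hold globally, because the step to $\mathbf{z}'$ may leave $\mathcal{S}_0$, is correct and explains why the lemma is stated that way.
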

\begin{proof}
For $\mathbf{y},\mathbf{z}\in\mathbb{R}^{NK}$,
\begin{align}
    \inf C
    &\le \mathcal{C}(\mathbf{y}) \\
   & = \mathcal{C}(\mathbf{z}) 
    + \nabla_{\mathbf{z}} \mathcal{C}(\mathbf{z})^{\top} (\mathbf{y} - \mathbf{z} )
    + \int_0^1 (1-s) (\mathbf{y}-\mathbf{z})^{\top} \nabla^2_{\mathbf{z}\mathbf{z}} \mathcal{C}\left(\mathbf{z} +s(\mathbf{y}-\mathbf{z})\right) (\mathbf{y}-\mathbf{z}) ds \\
    &\le \mathcal{C}(\mathbf{z}) 
    + \nabla_{\mathbf{z}} \mathcal{C}(\mathbf{z})^{\top} (\mathbf{y} - \mathbf{z} ) 
    + \frac{K_2}{2} \lVert \mathbf{y} - \mathbf{z}  \rVert_2^2.
\end{align}
Thus using $\mathbf{y} = \mathbf{z} - \frac{1}{K_2} \nabla_{\mathbf{z}} \mathcal{C}(\mathbf{z})$,
\begin{equation}
    \inf \mathcal{C} 
    \le \mathcal{C}(\mathbf{z}) - \frac{1}{2K_2} \lVert \nabla_{\mathbf{z}} \mathcal{C}(\mathbf{z}) \rVert_2^2.
\end{equation}

\end{proof}

\subsection{Function-space PL-condition}
This section is only relevant for zero parameter regularizer $\beta=0$.

\begin{assumption}[Function-space PL]
\label{assumption: functionspaceplcondition}
There is $\mu_{\mathcal{C}}>0$, such that
\begin{equation}
    \left\lVert \nabla_{\mathbf{z}} \mathcal{C}(\mathbf{z}) \right\rVert_2^2
    \ge 2\mu_{\mathcal{C}} \left(\mathcal{C}(\mathbf{z}) - \inf \mathcal{C}  \right)
    \quad
    \text{for all $\mathbf{z}\in\mathcal{S}_0$.}
\end{equation}
\end{assumption}
\citet{oymak2019overparameterized, liu2022loss} instead defined the PL-condition in parameter-space for all $\theta$ in a ball around $\theta_0$, whose radius is required to be large enough depending on $\mu_{\mathcal{L}}$. Along the trajectory, this means that there is $\mu_{\mathcal{L}}>0$ such that 
\begin{equation}
    \left\lVert \nabla_{\theta} \mathcal{C}(\mathbf{f}_{\theta_t}) \right\rVert_2^2
    \ge 2\mu_{\mathcal{L}} \left(\mathcal{C}(\mathbf{f}_{\theta_t}) - \inf \mathcal{C}\right)
    \quad \text{for all $t\ge 0$}.
\end{equation}
As we can lower bound the parameter-space loss-gradient norm by
\begin{equation}
    \left\lVert \nabla_{\theta} \mathcal{C}(\mathbf{f}_{\theta_t}) \right\rVert_2^2
    = \left\lVert \mathbf{J}_{\theta_t}^{\top} \nabla_{\mathbf{z}} \mathcal{C}(\mathbf{f}_{\theta_t})  \right\rVert_2^2
    \ge \lambda_{\min}(\hat{\Theta}_{\theta_t}(\mathbf{x},\mathbf{x})) \left\lVert \nabla_{\mathbf{z}} \mathcal{C}(\mathbf{f}_{\theta_t}) \right\rVert_2^2,
\end{equation}
it is natural to only assume the $\mu_{\mathcal{C}}$-PL-condition in function-space along the training trajectory if one can additionally show that the empirical neural tangent kernel is sufficiently well-conditioned during training.

A sufficient and easy-to-verify condition that implies the function-space PL condition is strong convexity. It is however not a necessary condition, as e.g. the cross-entropy loss (without a reference class) satisfies the PL-condition but is not strongly convex. For completeness, we prove this in the following lemma, under a slightly more general condition than strong convexity.
\begin{lemma}
\label{lemma: SCimpliesPL}
Assume that $\mathcal{S}_0$ is convex, and there is $\mu_{\mathcal{C}}>0$ such that
\begin{equation}
    \lambda_{\min}( \nabla_{\mathbf{z}\mathbf{z}}^2 \mathcal{C}(\mathbf{z}) )
    \ge \mu_{\mathcal{C}} > 0
    \quad
    \text{for all $\mathbf{z}\in\mathcal{S}_0$}.
\end{equation}
Then it satisfies the function-space PL-condition (Assumption \ref{assumption: functionspaceplcondition}).
\end{lemma}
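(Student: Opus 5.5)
Fix $\mathbf{z}\in\mathcal{S}_0$. The plan is to compare $\mathcal{C}$ along the segment from $\mathbf{z}$ to any other point $\mathbf{y}\in\mathcal{S}_0$, use the Hessian lower bound to get a quadratic lower bound on $\mathcal{C}(\mathbf{y})$, and then minimize over $\mathbf{y}$ exactly as in the proof of Lemma \ref{lemma: functionspacebetasmoothness}, with the inequality reversed.

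\textbf{Step 1: quadratic lower bound.} Let $\mathbf{y}\in\mathcal{S}_0$. Since $\mathcal{S}_0$ is convex, the segment $\mathbf{z}+s(\mathbf{y}-\mathbf{z})$, $s\in[0,1]$, lies in $\mathcal{S}_0$. Taylor's formula with integral remainder, together with $\lambda_{\min}\ge\mu_{\mathcal{C}}$ on $\mathcal{S}_0$, gives
\begin{equation}
    \mathcal{C}(\mathbf{y}) \ge \mathcal{C}(\mathbf{z}) + \nabla_{\mathbf{z}}\mathcal{C}(\mathbf{z})^{\top}(\mathbf{y}-\mathbf{z}) + \frac{\mu_{\mathcal{C}}}{2}\lVert \mathbf{y}-\mathbf{z}\rVert_2^2.
\end{equation}

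\textbf{Step 2: minimize the right-hand side.} Over all of $\mathbb{R}^{NK}$, the right-hand side is bounded below by $\mathcal{C}(\mathbf{z}) - \frac{1}{2\mu_{\mathcal{C}}}\lVert\nabla_{\mathbf{z}}\mathcal{C}(\mathbf{z})\rVert_2^2$, with the minimum attained at $\mathbf{y}=\mathbf{z}-\mu_{\mathcal{C}}^{-1}\nabla_{\mathbf{z}}\mathcal{C}(\mathbf{z})$. Hence for every $\mathbf{y}\in\mathcal{S}_0$,
\begin{equation}
    \mathcal{C}(\mathbf{y})\ge \mathcal{C}(\mathbf{z})-\frac{1}{2\mu_{\mathcal{C}}}\lVert\nabla_{\mathbf{z}}\mathcal{C}(\mathbf{z})\rVert_2^2 .
\end{equation}

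\textbf{Step 3: pass to the infimum.} Taking the infimum over $\mathbf{y}\in\mathcal{S}_0$ yields the claim, provided that
\begin{equation}
    \inf_{\mathcal{S}_0}\mathcal{C}=\inf\mathcal{C}.
\end{equation}
This holds whenever $\mathcal{S}_0$ is nonempty: any $\mathbf{y}\notin\mathcal{S}_0$ has $\mathcal{C}(\mathbf{y})>K_0\ge\mathcal{C}(\mathbf{z})$, so such points never lower the infimum. Rearranging then gives $\lVert\nabla_{\mathbf{z}}\mathcal{C}(\mathbf{z})\rVert_2^2\ge 2\mu_{\mathcal{C}}(\mathcal{C}(\mathbf{z})-\inf\mathcal{C})$, which is Assumption \ref{assumption: functionspaceplcondition}.

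\textbf{Main obstacle.} The delicate point is that the Hessian bound holds only on $\mathcal{S}_0$, so Step 1 cannot be applied to arbitrary $\mathbf{y}$, and the minimizer from Step 2 may lie outside $\mathcal{S}_0$. The argument above avoids this by keeping $\mathbf{y}$ inside $\mathcal{S}_0$: it only uses the pointwise lower bound from Step 2, never membership of the minimizer, so convexity of $\mathcal{S}_0$ suffices.
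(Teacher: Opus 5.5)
Your proposal is correct and follows essentially the same route as the paper: a Taylor expansion with integral remainder along the segment inside the convex set $\mathcal{S}_0$, then minimizing the quadratic lower bound over $\mathbf{y}$, then using $\inf_{\mathcal{S}_0}\mathcal{C}=\inf\mathcal{C}$. Your Step 3 also explains why that identity holds: points outside the sublevel set have value exceeding $K_0\ge\mathcal{C}(\mathbf{z})$. The paper only asserts this identity.
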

\begin{proof}
We can write for $\mathbf{y},\mathbf{z} \in \mathcal{S}_0$,
\begin{align}
    \mathcal{C}(\mathbf{y})
   & = \mathcal{C}(\mathbf{z}) 
    + \nabla_{\mathbf{z}} \mathcal{C}(\mathbf{z})^{\top} (\mathbf{y} - \mathbf{z} )
    + \int_0^1 (1-s) (\mathbf{y}-\mathbf{z})^{\top} \nabla^2_{\mathbf{z}\mathbf{z}} \mathcal{C}\left(\mathbf{z} +s(\mathbf{y}-\mathbf{z})\right) (\mathbf{y}-\mathbf{z}) ds \\
    &\ge \mathcal{C}(\mathbf{z}) 
    + \nabla_{\mathbf{z}} \mathcal{C}(\mathbf{z})^{\top} (\mathbf{y} - \mathbf{z} ) 
    + \frac{\mu_{\mathcal{C}}}{2} \lVert \mathbf{y} - \mathbf{z}  \rVert_2^2 \\
    &\ge \mathcal{C}(\mathbf{z}) - \frac{1}{2\mu_{\mathcal{C}}} \lVert \nabla_{\mathbf{z}} \mathcal{C}(\mathbf{z}) \rVert_2^2.
\end{align}
The function-space PL-condition follows by using  $\inf\mathcal{C} = \inf_{\mathbf{y}\in\mathcal{S}_0} \mathcal{C}(\mathbf{y})$ and taking $\mathbf{z} = \mathbf{f}_{\theta_t}$.
\end{proof}

\subsection{Positive (semi-)definite function-space Hessian}
For $\beta>0$, we need to assume that the function-space loss $\mathcal{C}:\mathbb{R}^{NK}\to\mathbb{R}$ is convex, as shown by the following example. Consider two classes and a single training point $(x,y)$ with target probability $(0.5,0.5)$. The regularized function-space objective using the Brier score with a reference class is
\begin{equation}
    \mathcal{C}_{\BS}^{\beta}(z) = \frac{1}{2} \left(\frac{e^z}{1+e^z} -y\right)^2 + \frac{1}{2}\frac{\beta}{\Theta_{x,x}} (z-z_0)^2.
\end{equation}
Take $\Theta_{x,x}=1$ and $y=0.5$. Then, for $\beta = 0.01$, and $z_0=5.5$, $\mathcal{C}_{\BS}^{\beta}$ has 3 stationary points (see Figure \ref{fig:cz-multiple-minima}): A local minimum at $z\approx 0.98$, a local maximum at $z\approx 2.47$, and a global minimum at $z\approx 5.24$.
\begin{figure}[hbt!]
    \centering
    \includegraphics[width=\linewidth]{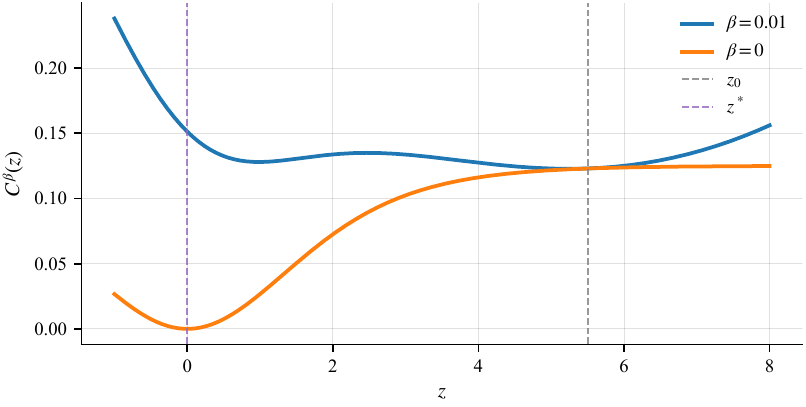}
    \caption{%
        The function-space Brier score with a regularizer can have multiple stationary points.
    }
\label{fig:cz-multiple-minima}
\end{figure}

This is fundamentally different from $\beta=0$, where the function-space loss is nonconvex but has a unique stationary point $z^* = \softmaxr^{-1}(0.5)=0$ (around which it is strictly convex).

For $\beta=0$, we do not need convexity to establish lazy training/exponential convergence, but can instead rely on the weaker function-space PL condition (Assumption \ref{assumption: functionspaceplcondition}). However, to show the closeness to the linearized training dynamics, we need to assume that the Hessian is strictly positive definite. We will relax this for the cross-entropy loss in Appendix \ref{appendix: extensionsoftmaxnoreference}.

\section{Examples of function-space losses}
\label{appendix: functionspacelossexamples}
\subsection{MSE loss}
Consider the function-space MSE loss
\begin{equation}
    \mathcal{C}_{\MSE}: \mathbb{R}^{NK} \to \mathbb{R},
    \quad
    \mathcal{C}_{\MSE}(\mathbf{z}) 
    :=\sum_{i=1}^N \frac{1}{2} \left\lVert \mathbf{z}_i - \mathbf{y}_i \right\rVert_2^2.
\end{equation}
It is twice continuously differentiable with
\begin{equation}
    \nabla_{\mathbf{z}}\mathcal{C}_{\MSE}(\mathbf{z}) = \mathbf{z} - \mathbf{y} \in \mathbb{R}^{NK}
    \quad 
    \text{and}
    \quad
    \nabla_{\mathbf{z}\mathbf{z}}^2 \mathcal{C}_{\MSE}(\mathbf{z}) = I_{NK}.
\end{equation}
\begin{lemma}
$\mathcal{C}_{\MSE}$ satisfies Assumption \ref{ass: Frechet bounded}.
\end{lemma}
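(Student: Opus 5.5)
The statement to prove is that $\mathcal{C}_{\MSE}$ satisfies Assumption \ref{ass: Frechet bounded}: the gradient norm is uniformly bounded on the sublevel set $\mathcal{S}_0=\{\mathbf{z}:\mathcal{C}_{\MSE}(\mathbf{z})\le K_0\}$. The plan is to use the closed form of the gradient, $\nabla_{\mathbf{z}}\mathcal{C}_{\MSE}(\mathbf{z})=\mathbf{z}-\mathbf{y}$, and notice that its norm is controlled exactly by the loss value itself.

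The key identity is
\begin{equation}
\left\lVert \nabla_{\mathbf{z}}\mathcal{C}_{\MSE}(\mathbf{z})\right\rVert_2^2=\sum_{i=1}^N\lVert \mathbf{z}_i-\mathbf{y}_i\rVert_2^2=2\,\mathcal{C}_{\MSE}(\mathbf{z}).
\end{equation}
For $\mathbf{z}\in\mathcal{S}_0$ this gives $\lVert \nabla_{\mathbf{z}}\mathcal{C}_{\MSE}(\mathbf{z})\rVert_2\le\sqrt{2K_0}$. We can therefore take $K_1:=\sqrt{2K_0}$.

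The only point needing care is that $K_0$ is a genuine finite constant. This is supplied by Lemma \ref{lemma: boundinitialloss}, via Lemma \ref{lemma: initial function in compact set}. Concretely, with probability $1-\delta_0$ we have $\lVert f_{\theta_0}(\mathbf{x}_i)\rVert_2\le K_0'$, so
\begin{equation}
\mathcal{C}_{\MSE}(\mathbf{f}_{\theta_0})\le \tfrac12\sum_{i=1}^N\bigl(K_0'+\lVert\mathbf{y}_i\rVert_2\bigr)^2,
\end{equation}
and this finite quantity serves as $K_0$. There is no real obstacle here. The same identity also shows, as a byproduct, that Assumption \ref{assumption: upperboundedfunctionspacehessian} holds with $K_2=1$ (since $\inf\mathcal{C}_{\MSE}=0$), and that Assumption \ref{assumption: functionspaceplcondition} holds with $\mu_{\mathcal{C}}=1$.
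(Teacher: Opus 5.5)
Your proof is correct and is the paper's argument: both rest on the identity $\lVert \nabla_{\mathbf{z}}\mathcal{C}_{\MSE}(\mathbf{z})\rVert_2^2 = 2\,\mathcal{C}_{\MSE}(\mathbf{z}) \le 2K_0$ on $\mathcal{S}_0$, so $K_1=\sqrt{2K_0}$ works. Your extra remarks are also correct: $K_0$ is finite via the bound on $f_{\theta_0}$, and Assumptions \ref{assumption: upperboundedfunctionspacehessian} and \ref{assumption: functionspaceplcondition} hold with $K_2=\mu_{\mathcal{C}}=1$ because $\inf\mathcal{C}_{\MSE}=0$.
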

\begin{proof}
For MSE loss we conveniently have
\begin{equation}
    \left\lVert \nabla_{\mathbf{z}}\mathcal{C}_{\MSE}(\mathbf{z}) \right\rVert_2^2
    = \sum_{i=1}^N \left\lVert \mathbf{z}_i - \mathbf{y}_i \right\rVert_2^2
    = 2\mathcal{C}_{\MSE}(\mathbf{z})
    \le 2K_0.
\end{equation}
\end{proof}
As $\nabla_{\mathbf{z}\mathbf{z}}^2\mathcal{C}_{\MSE}(\mathbf{z}) = I_{NK}$, the MSE loss satisfies Assumptions \ref{assumption: upperboundedfunctionspacehessian}, \ref{assumption: functionspaceplcondition} and its Hessian is strictly positive definite.

\subsection{Categorical cross-entropy loss}
\label{appx: cce}
Consider a classification task with $K$ classes. The probability simplex in $\mathbb{R}^K$ is
\begin{equation}
    \Delta^{K-1} 
    := \left\{p\in [0,1]^{K}: \sum_{k=1}^{K} p_k = 1 \right\}
    \subseteq \mathbb{R}^{K}.
\end{equation}
The $\mathrm{softmax}$-function is defined as
\begin{equation}
    \softmax: \mathbb{R}^K \to \interior(\Delta^{K-1}),
    \quad
    \softmax(z)_k
    := \frac{e^{z_k}}{\sum_{k'=1}^K e^{z_{k'}}}.
\end{equation}
For $N$ training points consider target probabilities $\mathbf{p}_i \in \Delta^{K-1}$. The function-space categorical cross-entropy loss is
\begin{equation}
    \mathcal{C}_{\CE}:\mathbb{R}^{NK} \to \mathbb{R},
    \quad
   \mathcal{C}_{\CE}(\mathbf{z})
   := - \sum_{i=1}^N \sum_{k=1}^{K} \mathbf{p}_{ik} \log(\softmax(\mathbf{z}_i)_k).
\end{equation}
This simplifies to
\begin{equation}
\mathcal{C}_{\CE}(\mathbf{z})
    = \sum_{i=1}^N \left( -\sum_{k=1}^{K} \mathbf{p}_{ik} \mathbf{z}_{ik} + \log\left(\sum_{k=1}^{K} e^{\mathbf{z}_{ik}} \right) \right).
\end{equation}
$\mathcal{C}_{\CE}$ is twice continuously differentiable with 
\begin{equation}
    \nabla_{\mathbf{z}_i} \mathcal{C}_{\CE}(\mathbf{z}) 
    = \softmax(\mathbf{z}_i) - \mathbf{p}_i
    \in \mathbb{R}^{K}, 
\end{equation}
\begin{equation}
    \nabla_{\mathbf{z}_i\mathbf{z}_i}^2 \mathcal{C}_{\CE}(\mathbf{z}) 
    = \diag\left(\softmax(\mathbf{z}_i) \right) - \softmax(\mathbf{z}_i)\softmax(\mathbf{z}_i)^{\top}
    \in \mathbb{R}^{K\times K}.
\end{equation}

\begin{lemma}
$\mathcal{C}_{\CE}$ satisfies Assumption \ref{ass: Frechet bounded}.
\end{lemma}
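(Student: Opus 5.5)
The plan is to bound the gradient globally on all of $\mathbb{R}^{NK}$, which is stronger than the sublevel-set bound required by Assumption \ref{ass: Frechet bounded}. The loss is a sum over training points, so the gradient splits into blocks. Using the explicit formula $\nabla_{\mathbf{z}_i}\mathcal{C}_{\CE}(\mathbf{z}) = \softmax(\mathbf{z}_i) - \mathbf{p}_i$ derived above, we get
\begin{equation}
    \left\lVert \nabla_{\mathbf{z}}\mathcal{C}_{\CE}(\mathbf{z}) \right\rVert_2^2
    = \sum_{i=1}^N \left\lVert \softmax(\mathbf{z}_i) - \mathbf{p}_i \right\rVert_2^2 .
\end{equation}
The task therefore reduces to bounding each summand uniformly.

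For each $i$, both $\softmax(\mathbf{z}_i)$ and $\mathbf{p}_i$ lie in the simplex $\Delta^{K-1}$. The Euclidean diameter of the simplex is $\sqrt{2}$, attained between two distinct vertices. To see this, write $q,r\in\Delta^{K-1}$. Then
\begin{equation}
    \lVert q-r\rVert_2^2 \le \lVert q-r\rVert_1 \max_k |q_k-r_k| \le 2 \cdot 1,
\end{equation}
since $\lVert q-r\rVert_1\le \lVert q\rVert_1+\lVert r\rVert_1 = 2$ and $|q_k-r_k|\le 1$. Alternatively, one can expand $\lVert q-r\rVert_2^2 = \lVert q\rVert_2^2+\lVert r\rVert_2^2-2q^{\top}r \le 1+1-0$ and use $q^\top r \ge 0$.

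Summing over the $N$ blocks then gives $\lVert \nabla_{\mathbf{z}}\mathcal{C}_{\CE}(\mathbf{z})\rVert_2 \le \sqrt{2N} =: K_1$ for every $\mathbf{z}\in\mathbb{R}^{NK}$, and in particular on $\mathcal{S}_0$. Note that $K_1$ does not depend on $K_0$ or on the targets $\mathbf{p}_i$, so one-hot targets are covered.

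No step is a real obstacle. The only point needing care is the elementary simplex-diameter bound, and the crude bound $\lVert q-r\rVert_2 \le \lVert q\rVert_2+\lVert r\rVert_2\le 2$ would also suffice, giving $K_1 = 2\sqrt{N}$.
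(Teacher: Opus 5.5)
Your proposal is correct and follows essentially the same route as the paper, which likewise bounds the gradient globally by writing $\lVert \nabla_{\mathbf{z}}\mathcal{C}_{\CE}(\mathbf{z})\rVert_2^2 = \sum_{i=1}^N \lVert \softmax(\mathbf{z}_i) - \mathbf{p}_i\rVert_2^2 \le 2N$. Your version adds an explicit justification of the simplex-diameter bound $\lVert q-r\rVert_2^2 \le 2$, which the paper leaves implicit.
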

\begin{proof}
For any $\mathbf{z}\in\mathbb{R}^{NK}$,
\begin{equation}
    \left\lVert \nabla_{\mathbf{z}}\mathcal{C}_{\CE}(\mathbf{z}) \right\rVert_2^2 
    = \sum_{i=1}^N \left\lVert \nabla_{\mathbf{z}_i} \mathcal{C}_{\CE}(\mathbf{z}) \right\rVert_2^2
    = \sum_{i=1}^N \left\lVert \softmax(\mathbf{z}_i) -\mathbf{p}_i\right\rVert_2^2
    \le 2N.
\end{equation}
\end{proof}

\begin{lemma}
$\mathcal{C}_{\CE}$ satisfies Assumption \ref{assumption: upperboundedfunctionspacehessian}.
\end{lemma}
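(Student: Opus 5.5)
The statement is that $\mathcal{C}_{\CE}$ satisfies the bounded gradient growth condition (Assumption \ref{assumption: upperboundedfunctionspacehessian}). I will derive it from Lemma \ref{lemma: functionspacebetasmoothness}. It then suffices to show that the function-space Hessian is globally bounded in operator norm, since $\mathcal{C}_{\CE}$ is lower bounded ($\mathcal{C}_{\CE}\ge 0$, as each summand is a cross-entropy with $\softmax(\mathbf{z}_i)_k\in(0,1)$).

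The Hessian is block diagonal with blocks $\nabla^2_{\mathbf{z}_i\mathbf{z}_i}\mathcal{C}_{\CE}(\mathbf{z}) = \diag(s) - ss^{\top}$, where $s=\softmax(\mathbf{z}_i)\in\interior(\Delta^{K-1})$. Its operator norm is therefore the maximum over $i$ of the block operator norms. Each block is symmetric positive semidefinite: for $v\in\mathbb{R}^K$, the quadratic form $v^{\top}(\diag(s)-ss^{\top})v = \sum_k s_k v_k^2 - (\sum_k s_k v_k)^2$ is the variance of $v$ under the distribution $s$, which is nonnegative. The same expression gives the upper bound $v^{\top}(\diag(s)-ss^{\top})v\le \sum_k s_k v_k^2 \le \max_k s_k \lVert v\rVert_2^2\le \lVert v\rVert_2^2$. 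Hence $\lVert\nabla^2_{\mathbf{z}\mathbf{z}}\mathcal{C}_{\CE}(\mathbf{z})\rVert_{\op}\le 1$ for all $\mathbf{z}$, and we can take $K_2=1$; a sharper constant such as $1/2$ is possible but not needed.

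Lemma \ref{lemma: functionspacebetasmoothness} then gives $\lVert\nabla_{\mathbf{z}}\mathcal{C}_{\CE}(\mathbf{z})\rVert_2^2\le 2(\mathcal{C}_{\CE}(\mathbf{z})-\inf\mathcal{C}_{\CE})$ on all of $\mathbb{R}^{NK}$, in particular on $\mathcal{S}_0$. The proof of that lemma only uses $\inf\mathcal{C}\le\mathcal{C}(\mathbf{y})$ for the specific choice $\mathbf{y}=\mathbf{z}-\frac{1}{K_2}\nabla_{\mathbf{z}}\mathcal{C}(\mathbf{z})$. So no attainment of the infimum is required, which matters because for one-hot targets $\inf\mathcal{C}_{\CE}=0$ is not attained. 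There is no real obstacle; the only point needing care is this non-attainment, and the lemma's proof already handles it.
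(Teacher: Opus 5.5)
Your proof is correct and takes the paper's route: bound the operator norm of the block-diagonal Hessian with blocks $\diag(s)-ss^{\top}$ uniformly in $\mathbf{z}$, then invoke Lemma \ref{lemma: functionspacebetasmoothness}. The only difference is that your variance argument gives $K_2=1$, whereas the paper states the sharper (also correct) bound $\frac{1}{2}$; this does not matter for the assumption.
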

\begin{proof}
For any $\mathbf{z}\in\mathbb{R}^{NK}$ the function-space Hessian $\nabla_{\mathbf{z}\mathbf{z}}^2 \mathcal{C}_{\CE}(\mathbf{z}) \in \mathbb{R}^{NK\times NK}$ is a block-diagonal matrix with blocks $\nabla_{\mathbf{z}_i \mathbf{z}_i}^2 \mathcal{C}_{\CE}(\mathbf{z}) \in \mathbb{R}^{K\times K}$. Thus its spectral norm is
\begin{align}
    \left\lVert \nabla_{\mathbf{z}\mathbf{z}}^2 \mathcal{C}_{\CE}(\mathbf{z}) \right\rVert_{\op}
    &= \max_{i=1,\ldots,N} \left\lVert \nabla_{\mathbf{z}_i\mathbf{z}_i}^2 \mathcal{C}_{\CE}(\mathbf{z}) \right\rVert_{\op} \\
    &= \max_{i=1,\ldots,N} \left\lVert \diag(\softmax(\mathbf{z}_i)) - \softmax(\mathbf{z}_i)\softmax(\mathbf{z}_i)^{\top} \right\rVert_{\op}
    \le \frac{1}{2}.
\end{align}
\end{proof}

\begin{lemma}
Assume the targets are a discrete probability distribution with full support, i.e. $\mathbf{p}_{ik}\in (0,1)$. Then $\mathcal{C}_{\CE}$ satisfies Assumption \ref{assumption: functionspaceplcondition}.
\end{lemma}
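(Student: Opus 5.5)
The loss separates over training points, $\mathcal{C}_{\CE}(\mathbf{z})=\sum_i c_i(\mathbf{z}_i)$, and the gradient is blockwise. So $\lVert\nabla_{\mathbf{z}}\mathcal{C}_{\CE}(\mathbf{z})\rVert_2^2=\sum_i\lVert \softmax(\mathbf{z}_i)-\mathbf{p}_i\rVert_2^2$. With full support, $\inf\mathcal{C}_{\CE}=\sum_i \Ent(\mathbf{p}_i)$; this value is attained at $\mathbf{z}_i=\log\mathbf{p}_i$. Hence $c_i(\mathbf{z}_i)-\inf c_i=\mathrm{KL}(\mathbf{p}_i\,\|\,\mathbf{q}_i)$ with $\mathbf{q}_i:=\softmax(\mathbf{z}_i)$. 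It therefore suffices to find $\mu_{\mathcal{C}}>0$ such that, for every $i$ and every $\mathbf{z}\in\mathcal{S}_0$,
\begin{equation}
    \mathrm{KL}(\mathbf{p}_i\,\|\,\mathbf{q}_i)\le \frac{1}{2\mu_{\mathcal{C}}}\lVert \mathbf{q}_i-\mathbf{p}_i\rVert_2^2 ,
\end{equation}
after which we sum over $i$.

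To prove this reverse-Pinsker-type bound, I will use $\mathrm{KL}\le\chi^2$, i.e. $\mathrm{KL}(\mathbf{p}\|\mathbf{q})\le\log(1+\chi^2(\mathbf{p}\|\mathbf{q}))\le\sum_k(\mathbf{p}_k-\mathbf{q}_k)^2/\mathbf{q}_k$. This follows from Jensen's inequality applied to $\log$ of $\sum_k\mathbf{p}_k\cdot\mathbf{p}_k/\mathbf{q}_k$. It gives
\begin{equation}
    \mathrm{KL}(\mathbf{p}_i\|\mathbf{q}_i)\le \frac{1}{\min_k \mathbf{q}_{ik}}\lVert\mathbf{q}_i-\mathbf{p}_i\rVert_2^2 .
\end{equation}
So the whole problem reduces to a uniform lower bound $\mathbf{q}_{ik}\ge q_{\min}>0$ on the sublevel set $\mathcal{S}_0$. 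Once we have it, we may take $\mu_{\mathcal{C}}=q_{\min}/2$.

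The main obstacle is this lower bound on the predicted probabilities. Note that $\mathbf{q}_{ik}$ can go to zero globally, and the PL inequality genuinely fails far out, since there the gradient stays bounded while the loss grows. Here full support is what saves us. On $\mathcal{S}_0$, every term satisfies $-\mathbf{p}_{ik}\log\mathbf{q}_{ik}\le c_i(\mathbf{z}_i)\le \mathcal{C}_{\CE}(\mathbf{z})\le K_0$, because all summands are nonnegative. This yields $\mathbf{q}_{ik}\ge \exp(-K_0/\mathbf{p}_{ik})\ge \exp(-K_0/p_{\min})=:q_{\min}$, where $p_{\min}:=\min_{i,k}\mathbf{p}_{ik}>0$. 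Combining this with the previous step gives the assumption with $\mu_{\mathcal{C}}=\tfrac12\exp(-K_0/p_{\min})$. This constant degrades as the targets approach one-hot, which is consistent with the remark in the experiments section about the required width.
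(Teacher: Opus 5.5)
Your proposal is correct and follows essentially the same route as the paper. The paper also bounds $\softmax(\mathbf{z}_i)_k \ge e^{-K_0/\mathbf{p}_{ik}}$ on $\mathcal{S}_0$ using nonnegativity of the per-sample cross-entropy terms. It then applies the KL--$\chi^2$ inequality and obtains the same constant $2\mu_{\mathcal{C}} = \min_{i,k} e^{-K_0/\mathbf{p}_{ik}}$. The only difference is that you also justify $\mathrm{KL}\le\chi^2$ via Jensen, where the paper simply cites it.
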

\begin{proof}
Let $\mathbf{z} \in \mathcal{S}_0$.
As cross-entropy is nonnegative, we have for $i=1,\ldots,N$,
\begin{equation}
    -\sum_{k=1}^K \mathbf{p}_{ik} \log(\softmax(\mathbf{z}_i)_k)
    \le -\sum_{j=1}^N \sum_{k=1}^K \mathbf{p}_{jk} \log(\softmax(\mathbf{z}_j)_k)
    = \mathcal{C}_{\CE}(\mathbf{z}) 
    \le K_0.
\end{equation}
This gives a lower bound on the probits on the loss sublevel set:
\begin{equation}
    \softmax(\mathbf{z}_i)_k 
    \ge e^{-K_0/\mathbf{p}_{ik}} 
    \ge \min_{i,k} e^{-K_0/\mathbf{p}_{ik}} 
    =: 2\mu_{\mathcal{C}}>0.
\end{equation}
Using the $KL-\chi^2$ inequality it follows that
\begin{align}
    \mathcal{C}_{\CE}(\mathbf{z}) - \inf \mathcal{C}_{\CE}
    &=
    \mathcal{C}_{\CE}(\mathbf{z}) - \sum_{i=1}^N \Ent(\mathbf{p}_i) \\
    &= \sum_{i=1}^N\sum_{k=1}^K \mathbf{p}_{ik} \log\left(\frac{\mathbf{p}_{ik}}{\softmax(\mathbf{z}_i)_k}\right) \\
    &\le \sum_{i=1}^N\sum_{k=1}^K \frac{(\mathbf{p}_{ik} - \softmax(\mathbf{z}_i)_k)^2}{\softmax(\mathbf{z}_i)_k} \\
    &\le \frac{1}{2\mu_{\mathcal{C}}} \sum_{i=1}^N\sum_{k=1}^K (\mathbf{p}_{ik} - \softmax(\mathbf{z}_i)_k)^2 
    = \frac{1}{2\mu_{\mathcal{C}}} \left\lVert \nabla_{\mathbf{z}} \mathcal{C}_{\CE}(\mathbf{z}) \right\rVert_2^2.
\end{align}
\end{proof}
$\mathcal{C}_{\CE}$ is convex, but its Hessian has eigenvalue zero due to the redundancy $\softmax(z) = \softmax(z + \alpha 1)$.

\subsection{Categorical cross-entropy loss with a reference class}
\label{appx: cerc}
Consider $K+1$ classes with target probabilities $(\mathbf{p}_{i0}, \mathbf{p}_i) \in \Delta^K$.\footnote{We consider $K+1$ classes when using $\softmax$ with a reference class to achieve notational consistency with the function-space loss input dimension $NK$.} We use the $0$-th class with target probabilities $\mathbf{p}_{i0} = 1- \sum_{k=1}^K \mathbf{p}_{ik}$ as a reference class, by setting $z_0 = 0$.
We define the last $K$ coordinates of the $\mathrm{softmax}$-function with a reference class by
\begin{equation}
    \softmaxr: \mathbb{R}^K \to \mathbb{R}^K,
    \quad
    \softmaxr(z)_k 
    := 
    \frac{e^{z_k}}{1 + \sum_{k'=1}^K e^{z_{k'}}}.
\end{equation}
The probability of the reference class is $\softmaxr(z)_0 = (1 + \sum_{k=1}^K e^{z_{k}})^{-1}$.
The resulting $z \mapsto (\softmaxr(z)_0, \softmaxr(z))$ is a bijection between $K$-dimensional vectors and discrete probability distributions on $ \{0,\ldots,K \}$ with full support, i.e. $\interior(\Delta^{K})$.

The function-space categorical cross-entropy loss with a reference class is
\begin{equation}
    \mathcal{C}_{\CEr}:\mathbb{R}^{NK} \to \mathbb{R},
    \quad
   \mathcal{C}_{\CEr}(\mathbf{z})
   := - \sum_{i=1}^N \sum_{k=0}^{K} \mathbf{p}_{ik} \log(\softmaxr(\mathbf{z}_i)_k).
\end{equation}
This is equal to
\begin{equation}
\mathcal{C}_{\CEr}(\mathbf{z})
    = \sum_{i=1}^N \left( -\sum_{k=1}^{K} \mathbf{p}_{ik} \mathbf{z}_{ik} + \log\left(1 + \sum_{k=1}^{K} e^{\mathbf{z}_{ik}} \right) \right).
\end{equation}
For 2 classes (logit dimension $K=1$) we get the logit loss, which for one-hot labels $\mathbf{p}_{i1} \in \{0,1\}$ and $\mathbf{y}_i := 1_{\{\mathbf{p}_{i1}=1\}} - 1_{\{\mathbf{p}_{i1}=0\}}$ simplifies to
\begin{equation}
    \mathcal{C}_{CEr}(\mathbf{z}) 
    = \sum_{i=1}^N \left(- \mathbf{p}_{i1} \mathbf{z}_i + \log\left(1+e^{\mathbf{z}_i} \right) \right)
    = \sum_{i=1}^N \log\left(1 + e^{-\mathbf{y}_i\mathbf{z}_i} \right).
\end{equation}

$\mathcal{C}_{\CEr}$ is twice continuously differentiable with 
\begin{equation}
    \nabla_{\mathbf{z}_i} \mathcal{C}_{\CEr}(\mathbf{z}) 
    = \softmaxr(\mathbf{z}_i) - \mathbf{p}_i
    \in \mathbb{R}^{K}, 
\end{equation}
\begin{equation}
    \nabla_{\mathbf{z}_i\mathbf{z}_i}^2 \mathcal{C}_{\CEr}(\mathbf{z}) 
    = \diag\left(\softmaxr(\mathbf{z}_i) \right) - \softmaxr(\mathbf{z}_i)\softmaxr(\mathbf{z}_i)^{\top}
    \in \mathbb{R}^{K\times K}.
\end{equation}

We can show that the categorical cross-entropy loss with a reference class satisfies Assumptions \ref{ass: Frechet bounded}, \ref{assumption: upperboundedfunctionspacehessian}, \ref{assumption: functionspaceplcondition} just like for the standard categorical cross-entropy loss. Moreover, removing the redundancy along $\alpha\mapsto \mathbf{z}+\alpha 1$ makes its Hessian strictly positive definite.
\begin{lemma}
The Hessian of $\mathcal{C}_{\CEr}$ is strictly positive definite.
\end{lemma}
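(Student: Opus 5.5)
The plan is to use the block-diagonal structure of the Hessian and show that each $K\times K$ block
\[
\nabla_{\mathbf{z}_i\mathbf{z}_i}^2 \mathcal{C}_{\CEr}(\mathbf{z}) = \diag(q) - qq^{\top}, \qquad q := \softmaxr(\mathbf{z}_i) \in \mathbb{R}^K,
\]
is strictly positive definite. A block-diagonal symmetric matrix is strictly positive definite exactly when every block is. The key facts about $q$ are that $q_k>0$ for all $k=1,\ldots,K$ and that $s:=\sum_{k=1}^K q_k = 1 - \softmaxr(\mathbf{z}_i)_0 < 1$, since the reference-class probability $(1+\sum_k e^{\mathbf{z}_{ik}})^{-1}$ is strictly positive.

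Fix a nonzero $v\in\mathbb{R}^K$ and compute
\[
v^{\top}(\diag(q)-qq^{\top})v = \sum_k q_k v_k^2 - \Bigl(\sum_k q_k v_k\Bigr)^2 .
\]
By Cauchy--Schwarz with weights $q_k$,
\[
\Bigl(\sum_k q_k v_k\Bigr)^2 = \Bigl(\sum_k \sqrt{q_k}\,\sqrt{q_k} v_k\Bigr)^2 \le s \sum_k q_k v_k^2 .
\]
Hence the quadratic form is at least $(1-s)\sum_k q_k v_k^2$. Because $1-s = \softmaxr(\mathbf{z}_i)_0 > 0$ and every $q_k>0$, this lower bound is strictly positive whenever $v\neq 0$. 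So each block is strictly positive definite, and so is the full Hessian.

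An alternative view is that the block is the covariance of the one-hot vector on $\{1,\ldots,K\}$ under the distribution that places mass $q_0>0$ on the reference class (where the vector is $0$). Strict positivity then follows because no nonzero linear functional of this vector is almost surely constant: class $0$ gives the value $0$, while class $k$ gives $v_k$.

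The argument is essentially routine. The only point needing care is the strictness of $s<1$, which is exactly where the reference class enters. Without it, $s=1$, and $v=1_K$ gives equality in Cauchy--Schwarz, reproducing the zero eigenvalue of $\mathcal{C}_{\CE}$. I would remark that the bound $\lambda_{\min}\ge \softmaxr(\mathbf{z}_i)_0\min_k \softmaxr(\mathbf{z}_i)_k$ is pointwise only, not uniform over $\mathbb{R}^{NK}$. That suffices for this lemma, and on $\mathcal{S}_0$ with full-support targets it becomes uniform via the probit lower bound from the earlier PL lemma.
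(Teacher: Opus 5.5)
Your proposal is correct and is essentially the paper's proof: reduce to the $K\times K$ blocks $\diag(\mathbf{s}_i)-\mathbf{s}_i\mathbf{s}_i^{\top}$, then use weighted Cauchy--Schwarz to obtain the lower bound $\bigl(1-\sum_{k}\mathbf{s}_{ik}\bigr)\sum_k \mathbf{s}_{ik}v_k^2>0$, where strictness comes from the positive reference-class probability. Your extra remarks, the covariance interpretation and the pointwise (non-uniform) eigenvalue bound that becomes uniform on $\mathcal{S}_0$ for full-support targets, are also correct but go beyond what the lemma requires.
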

\begin{proof}
Write $\mathbf{s}_i := \softmaxr(\mathbf{z}_i)$, then $\nabla^2_{\mathbf{z}_i\mathbf{z}_i} \mathcal{C}_{\CEr}(\mathbf{z}) = \diag(\mathbf{s}_i) - \mathbf{s}_i\mathbf{s}_i^{\top}$. For any $v\in\mathbb{R}^{K}$ with $v\neq 0$ the Cauchy-Schwarz inequality gives
\begin{align}
    v^T \nabla^2_{\mathbf{z}_i\mathbf{z}_i} \mathcal{C}_{\CEr}(\mathbf{z}) v
    &= \sum_{k=1}^{K} \mathbf{s}_{ik} v_k^2 - \left(\sum_{k=1}^{K} \mathbf{s}_{ik} v_k\right)^2 \\
    &\ge \sum_{k=1}^{K} \mathbf{s}_{ik} v_k^2 - 
    \left(\sum_{k=1}^{K}\mathbf{s}_{ik} \right)\left(\sum_{k=1}^{K}\mathbf{s}_{ik}v_k^2 \right) \\ 
    &= \left(1- \sum_{k=1}^{K}\mathbf{s}_{ik}\right)\left(\sum_{k=1}^{K}\mathbf{s}_{ik}v_k^2 \right) 
    > 0.
\end{align}
\end{proof}

\section{Proofs for positive parameter regularizer}
The proof consists of two steps. First, we prove that the network is in the lazy training regime. Then, we prove that it can be approximated by its linearization during training.
\label{appendix: exponentialdecayregularizer}
\subsection{Exponential decay of the loss-gradient norm}
\begin{theorem}
Let $\beta>0$. Assume that Assumption \ref{ass: Frechet bounded} holds, and that the function-space loss is convex. Then there are constants $R$, $c_{\beta}>0$ such that for large enough layer width $n$: With probability $1-\delta_0$ over random initialization $\theta_0$, for all $t\ge 0$,
\begin{equation}
\label{eq: lazytraining}
    \left\lVert \frac{d}{dt}\theta_t \right\rVert_2
    \le \eta_0 c_{\beta}R e^{-c_{\beta}\eta_0t}
    \quad 
    \text{and thus} 
    \quad
    \left\lVert \theta_t - \theta_0 \right\rVert_2 \le R.
\end{equation}
\end{theorem}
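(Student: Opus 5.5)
We will argue by a continuity/bootstrap argument on the exit time from the ball. Fix $R>0$ to be chosen below (independent of $n$), and apply Lemma \ref{lemma: Jacobian Lipschitz Hessian bounded} and Lemma \ref{lemma: boundinitialloss} on a joint event of probability $1-\delta_0$. Let $T:=\inf\{t\ge 0: \lVert\theta_t-\theta_0\rVert_2> R\}$. Because $\theta_t$ is continuous, it suffices to prove the derivative bound on $[0,T)$. Integrating that bound then gives $\lVert\theta_t-\theta_0\rVert_2\le R(1-e^{-c_\beta\eta_0 t})<R$, which contradicts $T<T_{\max}$, and boundedness of $\theta_t$ also yields $T_{\max}=\infty$.

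The key quantity is the parameter-space gradient $v_t:=\nabla_\theta\mathcal{L}^\beta(\theta_t)=\mathbf{J}_{\theta_t}^\top\nabla_{\mathbf z}\mathcal{C}(\mathbf f_{\theta_t})+\beta(\theta_t-\theta_0)$, so that $\frac{d}{dt}\theta_t=-\eta_0 v_t$. Differentiating along the flow gives
\begin{equation}
\frac{d}{dt}\tfrac12\lVert v_t\rVert_2^2=-\eta_0\, v_t^\top\nabla^2_{\theta\theta}\mathcal{L}^\beta(\theta_t)\,v_t.
\end{equation}
The Hessian splits as $\beta I+\mathbf{J}^\top\nabla^2_{\mathbf z\mathbf z}\mathcal{C}\,\mathbf{J}+\nabla_{\mathbf z}\mathcal{C}^\top\mathbf{H}_\theta$. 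Convexity of $\mathcal{C}$ makes the middle term positive semidefinite. On $[0,T)$, Lemma \ref{lemma: boundinitialloss} gives $\mathbf f_{\theta_t}\in\mathcal{S}_0$, so Assumption \ref{ass: Frechet bounded} together with the Hessian bound of Lemma \ref{lemma: Jacobian Lipschitz Hessian bounded} yields $\lVert\nabla_{\mathbf z}\mathcal{C}^\top\mathbf{H}_\theta\rVert_{\op}\le K_1 \sqrt{N}K_2'(\log n)^c/\sqrt n$. For $n$ large enough this is at most $\beta/2$. Hence $\frac{d}{dt}\lVert v_t\rVert^2\le-\eta_0\beta\lVert v_t\rVert^2$, and Grönwall gives $\lVert v_t\rVert_2\le \lVert v_0\rVert_2 e^{-\beta\eta_0 t/2}$. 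So we take $c_\beta:=\beta/2$.

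It remains to bound the initial gradient. At $t=0$ we have $v_0=\mathbf{J}_{\theta_0}^\top\nabla_{\mathbf z}\mathcal{C}(\mathbf f_{\theta_0})$, so $\lVert v_0\rVert_2\le \sqrt N K_1' K_1$ by Lemma \ref{lemma: Jacobian Lipschitz Hessian bounded} and Assumption \ref{ass: Frechet bounded}, using $\mathbf f_{\theta_0}\in\mathcal{S}_0$. We choose $R:=2\sqrt N K_1'K_1/\beta$, so that $\lVert v_0\rVert\le c_\beta R$. The constant $R$ depends only on $K_1'$, which is independent of $R$; this is exactly why the order of quantifiers in Lemma \ref{lemma: Jacobian Lipschitz Hessian bounded} makes the choice non-circular, with $n$ then chosen large for this $R$. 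The result is $\lVert\frac{d}{dt}\theta_t\rVert_2=\eta_0\lVert v_t\rVert_2\le \eta_0 c_\beta R e^{-c_\beta\eta_0 t}$.

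The main obstacle is the second step. We need the indefinite term $\nabla_{\mathbf z}\mathcal{C}^\top\mathbf{H}_\theta$ to be dominated by $\beta I$ uniformly along the trajectory. This needs the gradient bound on $\mathcal{S}_0$, which is where Lemma \ref{lemma: boundinitialloss} enters, and the Hessian smallness inside the ball, which is why the bootstrap is required. Convexity is used precisely to discard the Gauss–Newton term without needing any lower bound on its eigenvalues.
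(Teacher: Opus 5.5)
Your proposal is correct and follows the paper's proof essentially step for step. It uses the same Grönwall argument on $\lVert \nabla_{\theta}\mathcal{L}^{\beta}(\theta_t)\rVert_2^2$, with the Hessian split into $\beta I_p$, a positive semidefinite Gauss--Newton term from convexity, and an $O((\log n)^c/\sqrt{n})$ model-curvature term; it has the same constants $c_{\beta}=\beta/2$ and $R=\sqrt{N}K_1'K_1/c_{\beta}$ and the same continuation argument keeping $\theta_t$ in $B(\theta_0,R)$, and your remark that $K_1'$ is independent of $R$ makes explicit a quantifier-order point the paper leaves implicit.
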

\begin{proof}
By Assumption \ref{ass: Frechet bounded} there is $K_1 >0$, such that for $n$ large enough, with probability $1-\frac{1}{2}\delta_0$ over random initialization, $\left\lVert \nabla_{\mathbf{z}}\mathcal{C}(\mathbf{f}_{\theta_t}) \right\rVert_{2} \le K_1$. Further, using Lemma \ref{lemma: Jacobian Lipschitz Hessian bounded}, let $K'_1$ be the constant for local Lipschitzness/Boundedness of the Jacobian with probability $1-\frac{1}{2}\delta_0$ for $n$ large enough.

Define $c_{\beta} := \frac{1}{2}\beta$ and $R:= \frac{K_1 \sqrt{N} K'_1}{c_{\beta}}$. By Picard-Lindelöf the gradient flow ODE has a unique solution as long as $\theta_t \in B(\theta_0,R)$. Consider such $t\ge 0$. Recall that
\begin{equation}
    \frac{d\theta_t}{dt} 
    = -\eta_0 \nabla_{\theta} \mathcal{L}^{\beta}(\theta_t)
    = -\eta_0 \left( \nabla_{\theta} \mathcal{C}(\mathbf{f}_{\theta_t}) + \beta (\theta_t-\theta_0) \right).
\end{equation}
The dynamics of the squared norm of the gradient are
\begin{align}
    \frac{d}{dt} \left\lVert \nabla_{\theta} \mathcal{L}^{\beta}(\theta_t) \right\rVert_2^2
    &= 2 \left( \nabla_{\theta}\mathcal{L}^{\beta}(\theta_t) \right)^{\top} \nabla_{\theta\theta}^2 \mathcal{L}^{\beta}(\theta_t) \frac{d\theta_t}{dt} \\
    &= -2\eta_0 \left( \nabla_{\theta}\mathcal{L}^{\beta}(\theta_t) \right)^{\top} \nabla_{\theta\theta}^2 \mathcal{L}^{\beta}(\theta_t) \left( \nabla_{\theta} \mathcal{L}^{\beta}(\theta_t) \right).
\end{align}
The Hessian of the regularized loss is
\begin{equation}
    \nabla_{\theta\theta}^2 \mathcal{L}^{\beta}(\theta_t)
    = \nabla_{\theta\theta}^2 \mathcal{C}(\mathbf{f}_{\theta_t}) + \beta I_p
    = \nabla_{\mathbf{z}}\mathcal{C}(\mathbf{f}_{\theta_t})^{\top} \mathbf{H}_{\theta_t}
    + \mathbf{J}_{\theta_t}^{\top} \nabla^2_{\mathbf{z}}\mathcal{C}(\mathbf{f}_{\theta_t})\mathbf{J}_{\theta_t} 
    + \beta I_p
    \in \mathbb{R}^{p\times p}.
\end{equation}
We can bound the matrix norm of the model curvature term $\nabla_{\mathbf{z}}\mathcal{C}(\mathbf{f}_{\theta_t})^{\top} \mathbf{H}_{\theta_t}$ using Assumption \ref{ass: Frechet bounded} and the bound of the parameter-Hessian from Lemma \ref{lemma: Jacobian Lipschitz Hessian bounded}:
\begin{equation}
     \left\lVert \nabla_{\mathbf{z}}\mathcal{C}(\mathbf{f}_{\theta_t})^{\top} \mathbf{H}_{\theta_t} \right\rVert_{\op}
    \le \left\lVert \nabla_{\mathbf{z}}\mathcal{C}(\mathbf{f}_{\theta_t}) \right\rVert_{2}
    \left\lVert \mathbf{H}_{\theta_t} \right\rVert_{\op,2}
    \le K_1 \sqrt{N} \frac{(\log n)^c}{\sqrt{n}}K'_2 .
\end{equation}
This implies that its lowest negative eigenvalue is of order $O((\log n)^c/\sqrt{n})$.
The Gauss-Newton $\mathbf{J}_{\theta_t}^{\top} \nabla^2_{\mathbf{z}}\mathcal{C}(\mathbf{f}_{\theta_t})\mathbf{J}_{\theta_t} \in \mathbb{R}^{p\times p}$ is positive semi-definite. Hence for positive $\beta>0$ there is large enough layer width $n$ such that
\begin{equation}
    \lambda_{\min}\left( \nabla_{\theta\theta}^2 \mathcal{L}^{\beta} (\theta_t)\right)
     \ge -\frac{(\log n)^c}{\sqrt{n}} K_1 \sqrt{N} K'_2 + 0+\beta \ge \frac{\beta}{2} =: c_{\beta}.
\end{equation}
Thus,
\begin{equation}
    \frac{d}{dt} \left\lVert \nabla_{\theta} \mathcal{L}^{\beta}(\theta_t) \right\rVert_2^2
    \le -2\eta_0 c_{\beta} \left\lVert  \nabla_{\theta}\mathcal{L}^{\beta}(\theta_t)\right\rVert_2^2.
\end{equation}
By Gronwall's inequality, the norm of the gradient decays exponentially:
\begin{equation}
    \left\lVert \nabla_{\theta} \mathcal{L}^{\beta}(\theta_t)\right\rVert_2^2
    \le e^{-2\eta_0 c_{\beta}t} \left\lVert \nabla_{\theta} \mathcal{L}^{\beta}(\theta_0) \right\rVert_2^2.
\end{equation}
We can bound the norm of the initial gradient by using Assumption \ref{ass: Frechet bounded} and the bound on the parameter-Jacobian from Lemma \ref{lemma: Jacobian Lipschitz Hessian bounded}:
\begin{equation}
    \left\lVert \nabla_{\theta} \mathcal{L}^{\beta}(\theta_0) \right\rVert_2
    = \left\lVert \mathbf{J}_{\theta_0}^{\top}\nabla_{\mathbf{z}}\mathcal{C}(\mathbf{f}_{\theta_0}) \right\rVert_2
    \le \left\lVert  \nabla_{\mathbf{z}}\mathcal{C}(\mathbf{f}_{\theta_0}) \right\rVert_{2} \left\lVert \mathbf{J}_{\theta_0} \right\rVert_{2,2}
    \le K_1 \sqrt{N} K'_1.
\end{equation}
Hence,
\begin{equation}
    \left\lVert \frac{d\theta_t}{dt} \right\rVert_2
    = \eta_0 \left\lVert \nabla_{\theta} \mathcal{L}^{\beta}(\theta_t) \right\rVert_2
    \le \eta_0 e^{-\eta_0 c_{\beta}t} \left\lVert \nabla_{\theta} \mathcal{L}^{\beta}(\theta_0) \right\rVert_2
    \le \eta_0 K_1 \sqrt{N} K'_1  e^{-\eta_0 c_{\beta}t}.
\end{equation}
For the distance of parameters from initialization,
\begin{equation}
    \left\lVert \theta_t - \theta_0 \right\rVert_2 
    \le \int_0^t \left\lVert \frac{d\theta_t}{du} \right\rVert_2 du
    \le \eta_0 K_1 \sqrt{N} K'_1 \int_0^t  e^{-\eta_0 c_{\beta}u} du
    = \frac{K_1 \sqrt{N} K'_1}{c_{\beta}} (1 - e^{-\eta_0 c_{\beta}t}).
\end{equation}
Thus $\left\lVert \theta_t - \theta_0 \right\rVert_2 < R$, and the above holds for all $t\ge 0$.
\end{proof}
\subsection{Closeness to linearization}
\begin{theorem}
\label{thm: closenesslinpositivebeta}
Let $\beta>0$. Consider a function-space loss that satisfies Assumption \ref{ass: Frechet bounded} and is convex.
Then there is a constant $C_2>0$ such that for large enough layer width $n$: With probability $1-\delta_0$ over random initialization $\theta_0$,
\begin{equation}
    \forall x\in M^d, t\ge 0: 
    \quad
    \left\lVert f_{\theta_t}(x) - f_{\theta_t^{\lin}}^{\lin}(x)\right\rVert_2 
    \le C_2 \frac{(\log n)^c}{\sqrt{n}}.
\end{equation}
\end{theorem}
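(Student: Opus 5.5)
The plan is to compare the two parameter trajectories $\theta_t$ and $\theta_t^{\lin}$ directly and show that their gap stays of order $(\log n)^c/\sqrt{n}$. The key tool is that the linearized regularized objective
\begin{equation}
\mathcal{L}^{\beta,\lin}(\theta) := \mathcal{C}(f^{\lin}_{\theta}(\mathbf{x})) + \frac{1}{2}\beta\left\lVert \theta-\theta_0\right\rVert_2^2
\end{equation}
is $\beta$-strongly convex, because $\mathcal{C}$ is convex and $\theta\mapsto f^{\lin}_\theta(\mathbf{x})$ is affine. We then translate the parameter gap into a function-space gap.

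\textbf{Step 1: a priori localization.} We work on the event from the previous theorem (exponential decay of the loss-gradient norm), so that $\theta_t\in B(\theta_0,R)$ for all $t\ge 0$. We rerun the same argument for the linearized flow. In that case the model-curvature term vanishes, the Hessian of $\mathcal{L}^{\beta,\lin}$ is $\mathbf{J}_{\theta_0}^{\top}\nabla^2_{\mathbf{z}\mathbf{z}}\mathcal{C}\,\mathbf{J}_{\theta_0}+\beta I_p\succeq \beta I_p$, and Lemma \ref{lemma: boundinitialloss} applies verbatim to the linearized objective, so $f^{\lin}_{\theta_t^{\lin}}(\mathbf{x})\in\mathcal{S}_0$. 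This gives $\theta^{\lin}_t\in B(\theta_0,R)$ as well.

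By Lemma \ref{lemma: initial function in compact set} and Lemma \ref{lemma: Jacobian Lipschitz Hessian bounded}, all of $f_{\theta_t}(\mathbf{x})$, $f^{\lin}_{\theta_t}(\mathbf{x})$ and $f^{\lin}_{\theta^{\lin}_t}(\mathbf{x})$ lie in a fixed compact set $\mathcal{B}\subseteq\mathbb{R}^{NK}$ of radius roughly $\sqrt{N}(K_0'+K_1'R)$. Since $\mathcal{C}$ is $C^2$, $\nabla_{\mathbf{z}}\mathcal{C}$ is $L$-Lipschitz on the convex hull of $\mathcal{B}$ for some $L$ independent of $n$. Taylor's theorem with the Hessian bound then gives, uniformly in $x$ and for $\theta\in B(\theta_0,R)$,
\begin{equation}
\left\lVert f_\theta(x)-f^{\lin}_\theta(x)\right\rVert_2\le \frac{1}{2}K_2'R^2\frac{(\log n)^c}{\sqrt{n}},
\qquad
\left\lVert J_\theta(x)-J_{\theta_0}(x)\right\rVert_{2,2}\le K_2'R\frac{(\log n)^c}{\sqrt{n}}.
\end{equation}

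\textbf{Step 2: energy estimate for $\Delta_t:=\theta_t-\theta_t^{\lin}$.} We write
\begin{equation}
\frac{d}{dt}\Delta_t = -\eta_0\left(\nabla\mathcal{L}^{\beta,\lin}(\theta_t)-\nabla\mathcal{L}^{\beta,\lin}(\theta^{\lin}_t)\right)-\eta_0\, e_t,
\qquad e_t:=\mathbf{J}_{\theta_t}^{\top}\nabla_{\mathbf{z}}\mathcal{C}(\mathbf{f}_{\theta_t})-\mathbf{J}_{\theta_0}^{\top}\nabla_{\mathbf{z}}\mathcal{C}(f^{\lin}_{\theta_t}(\mathbf{x})).
\end{equation}
Strong convexity gives
\begin{equation}
\frac{d}{dt}\frac{1}{2}\lVert\Delta_t\rVert_2^2\le -\eta_0\beta\lVert\Delta_t\rVert_2^2+\eta_0\lVert\Delta_t\rVert_2\lVert e_t\rVert_2 .
\end{equation}
To bound $e_t$ we split it as
\begin{equation}
e_t=(\mathbf{J}_{\theta_t}-\mathbf{J}_{\theta_0})^{\top}\nabla_{\mathbf{z}}\mathcal{C}(\mathbf{f}_{\theta_t})+\mathbf{J}_{\theta_0}^{\top}\left(\nabla_{\mathbf{z}}\mathcal{C}(\mathbf{f}_{\theta_t})-\nabla_{\mathbf{z}}\mathcal{C}(f^{\lin}_{\theta_t}(\mathbf{x}))\right).
\end{equation}
The first term is at most $K_1\sqrt{N}K_2'R\,(\log n)^c/\sqrt{n}$ by Assumption \ref{ass: Frechet bounded}. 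The second term is at most $\sqrt{N}K_1' L\sqrt{N}\tfrac{1}{2}K_2'R^2(\log n)^c/\sqrt{n}$ by Step 1. Hence $\sup_t\lVert e_t\rVert_2\le\varepsilon_n:=C\,(\log n)^c/\sqrt{n}$. Since $\Delta_0=0$, a Gronwall/comparison argument on $\lVert\Delta_t\rVert_2$ gives $\lVert\Delta_t\rVert_2\le\varepsilon_n/\beta$ for all $t\ge0$.

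\textbf{Step 3: back to function space.} For any $x\in M^d$ we decompose
\begin{equation}
f_{\theta_t}(x)-f^{\lin}_{\theta^{\lin}_t}(x)=\left(f_{\theta_t}(x)-f^{\lin}_{\theta_t}(x)\right)+J_{\theta_0}(x)\Delta_t .
\end{equation}
This is bounded by $\tfrac12K_2'R^2(\log n)^c/\sqrt{n}+K_1'\varepsilon_n/\beta$, which yields the constant $C_2$, uniformly in $x$ and $t$. The probability $1-\delta_0$ comes from splitting $\delta_0$ among Lemmas \ref{lemma: Jacobian Lipschitz Hessian bounded}, \ref{lemma: initial function in compact set} and \ref{lemma: boundinitialloss}.

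I expect the main obstacle to be the Lipschitz control of $\nabla_{\mathbf{z}}\mathcal{C}$ in Step 2. Assumption \ref{ass: Frechet bounded} only bounds the gradient, not its variation. I would resolve this by using the compact set $\mathcal{B}$ from Step 1 together with $\mathcal{C}\in C^2$. A secondary point to check is that the linearized flow also stays in $\mathcal{S}_0$ and in $B(\theta_0,R)$, which is exactly where convexity of $\mathcal{C}$ is used.
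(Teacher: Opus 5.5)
Your proposal is correct and follows the paper's proof essentially step for step: the same split into $f_{\theta_t}(x)-f^{\lin}_{\theta_t}(x)$ and $J_{\theta_0}(x)(\theta_t-\theta_t^{\lin})$, the same residual $e_t$ (the paper's $\Delta_t$) bounded by $O((\log n)^c/\sqrt{n})$ via the Jacobian-Lipschitz bound and local Lipschitzness of $\nabla_{\mathbf{z}}\mathcal{C}$, and a Gronwall argument giving $\lVert\theta_t-\theta_t^{\lin}\rVert_2\le\varepsilon_n/\beta$. The differences are cosmetic: you get the contraction from $\beta$-strong convexity of the linearized objective rather than from $\overline{\mathbf{H}}_t\succeq 0$, and you bound $f_{\theta_t}-f^{\lin}_{\theta_t}$ by a Taylor expansion on $B(\theta_0,R)$ rather than by integrating along the trajectory; your localization of $\theta_t^{\lin}$ is harmless but not needed for $\beta>0$, since convexity of $\mathcal{C}$ is used globally and the residual only involves $\theta_t$.
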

Recall that the linearized network and parameters of training with the linearized network are
\begin{equation}
    f^{\lin}_{\theta}(x) = f_{\theta_0}(x) + J_{\theta_0}(x) (\theta - \theta_0)
    \quad
    \text{and} 
    \quad
    \frac{d}{dt}\theta_t^{\lin} = -\eta_0 \left(\nabla_{\theta}\mathcal{C}(\mathbf{f}^{\lin}_{\theta_t^{\lin}}) + \beta (\theta_t^{\lin} - \theta_0)\right).
\end{equation}
For general function-space losses, $\theta_t^{\lin}$ does not follow a linear ODE. However, its dynamics are much easier to understand.
Using the triangle inequality, we bound
\begin{align}
    \left\lVert f_{\theta_t}(x) - f^{\lin}_{\theta_t^{\lin}}(x) \right\rVert_2
    &\le \left\lVert f_{\theta_t}(x) - f^{\lin}_{\theta_t}(x) \right\rVert_2 
    + \left\lVert f_{\theta_t}^{\lin}(x) - f^{\lin}_{\theta_t^{\lin}}(x) \right\rVert_2 \\
    &= \left\lVert f_{\theta_t}(x) - f^{\lin}_{\theta_t}(x) \right\rVert_2 
    + \left\lVert J_{\theta_0}(x) \left( \theta_t - \theta_t^{\lin} \right) \right\rVert_2
\end{align}
We will show that $\left\lVert f_{\theta_t}(x) - f^{\lin}_{\theta_t}(x) \right\rVert_2$ and $\left\lVert \theta_t - \theta_t^{\lin} \right\rVert_2$ are $O((\log n)^c/\sqrt{n})$.

\begin{lemma}
\label{lemma: trajectoriescompactset}
Let $\delta>0$. The lazy training property (\ref{eq: lazytraining}) implies that there is large enough layer width $n$ such that with probability $1-\delta_0$ over random initialization $\theta_0$, the trajectories of the training outputs of the network and its linearization are in a compact set: 
\begin{equation}
    \mathbf{f}_{\theta_t}, \mathbf{f}^{\lin}_{\theta_t}, \mathbf{f}_{\theta^{\lin}_t}, \mathbf{f}^{\lin}_{\theta^{\lin}_t}
    \in B(0; \sqrt{N}(K_1' R + K_0'))
    \quad \text{for all $t\ge 0$.}
\end{equation}
\end{lemma}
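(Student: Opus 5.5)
The plan is to reduce all four memberships to one estimate. For any parameter $\theta$ with $\lVert\theta-\theta_0\rVert_2\le R$ and any input $x\in M^d$, both $f_\theta(x)$ and $f^{\lin}_\theta(x)$ have norm at most $K_0' + K_1' R$. Applying this to each of the $N$ training points and summing the squares gives the factor $\sqrt{N}$:
\begin{equation*}
\lVert \mathbf{f}\rVert_2 \le \sqrt{N}\,\max_i \lVert f(\mathbf{x}_i)\rVert_2 .
\end{equation*}
We work on the intersection of three high-probability events, choosing each failure probability as a fraction of $\delta_0$ and using a union bound:
\begin{itemize}
\item the event of Lemma \ref{lemma: initial function in compact set}, giving $\lVert f_{\theta_0}(x)\rVert_2\le K_0'$;
\item the event of Lemma \ref{lemma: Jacobian Lipschitz Hessian bounded} on $B(\theta_0,R)$, giving $\lVert J_\theta(x)\rVert_{2,2}\le K_1'$;
\item the event on which the lazy training property (\ref{eq: lazytraining}) holds, giving $\lVert\theta_t-\theta_0\rVert_2\le R$ for all $t\ge0$.
\end{itemize}

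\textbf{Per-point bound.} For the linearization,
\begin{equation*}
\lVert f^{\lin}_\theta(x)\rVert_2 \le \lVert f_{\theta_0}(x)\rVert_2 + \lVert J_{\theta_0}(x)\rVert_{2,2}\,\lVert\theta-\theta_0\rVert_2 \le K_0' + K_1'R .
\end{equation*}
For the network itself, the segment $s\mapsto \theta_0+s(\theta-\theta_0)$, $s\in[0,1]$, lies in the convex ball $B(\theta_0,R)$. The fundamental theorem of calculus gives
\begin{equation*}
f_\theta(x)-f_{\theta_0}(x)=\int_0^1 J_{\theta_0+s(\theta-\theta_0)}(x)(\theta-\theta_0)\,ds .
\end{equation*}
The Jacobian bound on the ball therefore yields the same estimate $K_0'+K_1'R$. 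With $\theta=\theta_t$, this handles $\mathbf{f}_{\theta_t}$ and $\mathbf{f}^{\lin}_{\theta_t}$ directly.

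\textbf{Lazy training for the linearized flow.} The remaining two quantities need $\lVert\theta^{\lin}_t-\theta_0\rVert_2\le R$ for all $t\ge0$. I would rerun the proof of the exponential-decay theorem for the linearized loss
\begin{equation*}
\mathcal{L}^{\beta,\lin}(\theta)=\mathcal{C}(\mathbf{f}^{\lin}_\theta)+\tfrac12\beta\lVert\theta-\theta_0\rVert_2^2 .
\end{equation*}
This case is easier, because the model is linear in $\theta$ and the curvature term $\nabla_{\mathbf{z}}\mathcal{C}^\top\mathbf{H}$ vanishes. The Hessian is $\mathbf{J}_{\theta_0}^\top\nabla^2_{\mathbf{z}\mathbf{z}}\mathcal{C}\,\mathbf{J}_{\theta_0}+\beta I_p\succeq\beta I_p$ by convexity of $\mathcal{C}$. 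Hence $\lVert\nabla\mathcal{L}^{\beta,\lin}(\theta^{\lin}_t)\rVert_2$ decays like $e^{-\eta_0\beta t}$, which is at least as fast as with $c_\beta=\beta/2$.

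For the initial gradient bound $K_1\sqrt N K_1'$ we need Assumption \ref{ass: Frechet bounded} at $\mathbf{f}^{\lin}_{\theta^{\lin}_t}$, that is, $\mathbf{f}^{\lin}_{\theta^{\lin}_t}\in\mathcal{S}_0$. This follows exactly as in Lemma \ref{lemma: boundinitialloss}:
\begin{equation*}
\mathcal{C}(\mathbf{f}^{\lin}_{\theta^{\lin}_t}) \le \mathcal{L}^{\beta,\lin}(\theta^{\lin}_t) \le \mathcal{L}^{\beta,\lin}(\theta_0) = \mathcal{C}(\mathbf{f}_{\theta_0}) \le K_0 ,
\end{equation*}
using that $\mathcal{L}^{\beta,\lin}$ is nonincreasing along its gradient flow. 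Integrating the decaying velocity then gives $\lVert\theta^{\lin}_t-\theta_0\rVert_2\le K_1\sqrt N K_1'/\beta\le R$, with the same $R$ as before. No continuation argument is needed here, since the linear ODE exists globally.

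\textbf{Conclusion.} With $\theta=\theta^{\lin}_t$, the per-point bound covers $\mathbf{f}_{\theta^{\lin}_t}$ and $\mathbf{f}^{\lin}_{\theta^{\lin}_t}$. The step needing most care is this last one: ensuring that the linearized trajectory stays in $B(\theta_0,R)$, since Lemma \ref{lemma: Jacobian Lipschitz Hessian bounded} only controls $J_\theta$ on that ball. The bound for $f_{\theta^{\lin}_t}$ uses the Jacobian at points other than $\theta_0$, so it depends on this containment.
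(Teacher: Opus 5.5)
Your proof is correct. For $\mathbf{f}_{\theta_t}$ it is exactly the paper's argument: the initial bound $K_0'$ plus the Jacobian bound $K_1'$ along the segment from $\theta_0$ to $\theta_t$, summed over the $N$ training points. The case $\mathbf{f}^{\lin}_{\theta_t}$ is the trivial variant with $J_{\theta_0}$.

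Where you differ is that the paper's proof stops after $\mathbf{f}_{\theta_t}$ and tacitly treats the other trajectories as identical. But (\ref{eq: lazytraining}) only controls $\theta_t$, and nothing in the paper shows $\lVert \theta^{\lin}_t - \theta_0\rVert_2 \le R$. You correctly identify this as necessary, both for bounding $\mathbf{f}^{\lin}_{\theta^{\lin}_t}$ and, more importantly, for applying the Jacobian bound to $\mathbf{f}_{\theta^{\lin}_t}$. Your rerun of the decay argument for $\mathcal{L}^{\beta,\lin}$ fills this gap cleanly: the curvature term is absent and the Hessian is $\succeq \beta I_p$ by convexity. It even gives radius $R/2$. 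So your route is slightly longer, but it actually establishes all four memberships.

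Two small corrections.
\begin{itemize}
\item The ODE for $\theta^{\lin}_t$ is not linear unless $\nabla_{\mathbf{z}}\mathcal{C}$ is affine; the paper says so explicitly. Global existence should therefore come from your own a priori bound rather than from linearity: the velocity norm is nonincreasing, which rules out finite-time blow-up.
\item For the initial gradient bound you only need $\mathbf{f}_{\theta_0}\in\mathcal{S}_0$. The lower bound $\beta I_p$ on the Hessian holds globally, so no sublevel-set control along the trajectory is required.
\end{itemize}

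The paper also invokes this lemma for $\beta=0$ (Lemma \ref{lemma: dthetaresgn} and Theorem \ref{thm: closenesslinbetazero}), and there your step for $\theta^{\lin}_t$ does not apply. The analogous step is to rerun the PL argument of Theorem \ref{thm: expdecaybetazero} with the fixed kernel $\hat{\Theta}_{\theta_0}(\mathbf{x},\mathbf{x})$. Its smallest eigenvalue is at least $\frac12\lambda_{\min}(\Theta_{\mathbf{x},\mathbf{x}})$ for large $n$, and monotonicity of the linearized loss keeps the trajectory in $\mathcal{S}_0$. This gives the same radius $R$ without any continuation argument.
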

Denote by $L_{\mathcal{C}}$ and $L_{\nabla\mathcal{C}}$ the Lipschitz-constants of $\mathcal{C}$ and $\nabla \mathcal{C}$ on this compact set.
\begin{proof}
Using Lemma \ref{lemma: boundinitialloss} and the bound on the parameter-Jacobian from Lemma \ref{lemma: Jacobian Lipschitz Hessian bounded}, as well as the bound on the initial function values from Lemma \ref{lemma: initial function in compact set}:
\begin{align}
    \lVert \mathbf{f}_{\theta_t} \rVert_2
    &\le \lVert \mathbf{f}_{\theta_t} - \mathbf{f}_{\theta_0} \rVert_2
    + \lVert \mathbf{f}_{\theta_0} \rVert_2  \\
    &\le \sup_{u\in [0,1]} \lVert \mathbf{J}_{\theta_0 + u(\theta_t-\theta_0)} \rVert_{2,2} \lVert \theta_t - \theta_0 \rVert_2 + \sqrt{N}K_0' \\
    &\le \sqrt{N}K_1' R + \sqrt{N}K_0'.
\end{align}
\end{proof}

\begin{lemma}
\label{lemma: fflindistancesametheta}
Let $\beta \ge 0$. Assume that equation (\ref{eq: lazytraining}) holds (lazy training). Then,
\begin{equation}
\forall x\in M^d,t\ge 0:
\quad
    \left\lVert f_{\theta_t}(x) - f^{\lin}_{\theta_t}(x) \right\rVert_2
    \le \frac{(\log n)^c}{\sqrt{n}}K'_2 R^2 .
\end{equation}
\end{lemma}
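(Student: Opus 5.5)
The plan is a second-order Taylor expansion of $\theta\mapsto f_\theta(x)$ around $\theta_0$ along the segment to $\theta_t$, with the remainder controlled by the Hessian bound of Lemma \ref{lemma: Jacobian Lipschitz Hessian bounded}. Fix $x\in M^d$ and $t\ge 0$, and set $\Delta:=\theta_t-\theta_0$. By the lazy training property (\ref{eq: lazytraining}), $\lVert \Delta\rVert_2\le R$. Since the ball $B(\theta_0,R)$ is convex, the whole segment $\theta_0+s\Delta$, $s\in[0,1]$, lies in it. We work on the event of probability $1-\delta_0$ on which Lemma \ref{lemma: Jacobian Lipschitz Hessian bounded} holds with this radius $R$. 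This requires taking $n$ large enough for that $R$, which is legitimate because $R$ is fixed independently of $n$ and $K_2'$ does not depend on $R$.

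Apply the integral form of Taylor's theorem componentwise to the $C^2$ map $s\mapsto f_{\theta_0+s\Delta}(x)\in\mathbb{R}^K$:
\begin{equation}
    f_{\theta_t}(x) - f_{\theta_0}(x) - J_{\theta_0}(x)\Delta
    = \int_0^1 (1-s)\, H_{\theta_0+s\Delta}(x)[\Delta,\Delta]\, ds,
\end{equation}
where $H_\theta(x)[\Delta,\Delta]\in\mathbb{R}^K$ denotes the vector with entries $\Delta^\top \nabla^2_{\theta\theta} f_\theta(x)_k \Delta$. The left-hand side is exactly $f_{\theta_t}(x)-f^{\lin}_{\theta_t}(x)$. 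We then bound the Euclidean norm of the integrand. For each output coordinate $k$, $|\Delta^\top \nabla^2_{\theta\theta} f_\theta(x)_k\Delta|\le \lVert \nabla^2_{\theta\theta} f_\theta(x)_k\rVert_{\op}\lVert\Delta\rVert_2^2$. Taking the Euclidean norm over $k$ gives $\lVert H_\theta(x)[\Delta,\Delta]\rVert_2\le \lVert H_\theta(x)\rVert_{\op,2}\lVert \Delta\rVert_2^2$, which matches the mixed norm used in Lemma \ref{lemma: Jacobian Lipschitz Hessian bounded}.

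Combining these, the integrand is bounded by $(1-s)\frac{(\log n)^c}{\sqrt n}K_2' R^2$ uniformly in $x$ and $s$. Integrating yields $\frac12\frac{(\log n)^c}{\sqrt n}K_2'R^2$, which is even better than the claimed bound. The bound is uniform in $x$ and $t$ because the Hessian estimate is uniform over $x\in M^d$ and over the ball, and the lazy training bound is uniform in $t$.

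The only delicate point is the bookkeeping of events and of the order of quantifiers. $R$ must be fixed first, from the lazy-training theorem with constants $K_1,K_1'$ that do not depend on $R$. Only then is $n$ chosen large enough that Lemma \ref{lemma: Jacobian Lipschitz Hessian bounded} applies on $B(\theta_0,R)$, intersecting with the lazy-training event and splitting $\delta_0$ as needed. Note that no property of the loss or of $\beta$ enters beyond (\ref{eq: lazytraining}), which is why the lemma holds for all $\beta\ge 0$.
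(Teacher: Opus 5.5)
Your proof is correct, but it takes a different route from the paper's.

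The paper argues dynamically along the training trajectory. It differentiates the linearization error in time,
\[
\frac{d}{dt}\bigl(f_{\theta_t}(x) - f^{\lin}_{\theta_t}(x)\bigr) = \bigl(J_{\theta_t}(x) - J_{\theta_0}(x)\bigr)\frac{d\theta_t}{dt},
\]
and bounds the Jacobian drift by $\frac{(\log n)^c}{\sqrt{n}}K_2'R$. It then integrates this against the exponentially decaying velocity bound $\lVert \frac{d}{dt}\theta_t\rVert_2 \le \eta_0 c_\beta R e^{-c_\beta \eta_0 t}$. In effect it multiplies a Lipschitz constant of the Jacobian by the path length of the trajectory.

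Your second-order Taylor expansion along the straight segment from $\theta_0$ to $\theta_t$ needs only the displacement half of (\ref{eq: lazytraining}), namely $\lVert\theta_t-\theta_0\rVert_2\le R$. It does not use the velocity decay, and it yields the sharper constant $\frac12 K_2'R^2$. It also shows that the linearization error depends only on where $\theta_t$ lies, not on how the flow got there. Since displacement never exceeds path length, your bound is never worse than the paper's. The paper's version mainly buys stylistic consistency with the time-integrated Gronwall arguments in the subsequent lemmas; nothing is lost here by your route.

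Your treatment of the mixed norm $\lVert\cdot\rVert_{\op,2}$ via coordinatewise operator-norm bounds is right. So is the order of quantifiers: $R$ is fixed before $n$, which is legitimate because $K_2'$ does not depend on $R$.
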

\begin{proof}
To bound $\left\lVert f_{\theta_t}(x) - f^{\lin}_{\theta_t}(x) \right\rVert_2$, we use Lemma \ref{lemma: Jacobian Lipschitz Hessian bounded} and equation (\ref{eq: lazytraining}).
\begin{align}
    \left\lVert \frac{d}{dt} \left( f_{\theta_t}(x) - f^{\lin}_{\theta_t}(x) \right)\right\rVert_2
    &= \left\lVert \left( J_{\theta_t}(x) - J_{\theta_0}(x) \right)\frac{d\theta_t}{dt}  \right\rVert_2 \\
    &\le \left\lVert J_{\theta_t}(x) - J_{\theta_0}(x) \right\rVert_2 \left\lVert\frac{d\theta_t}{dt} \right\rVert_2
    \le \frac{(\log n)^c}{\sqrt{n}}K'_2R\eta_0c_{\beta}R e^{-c_{\beta}\eta_0 t}.
\end{align}
By integrating it follows that
\begin{equation}
    \left\lVert f_{\theta_t}(x) - f^{\lin}_{\theta_t}(x) \right\rVert_2
    \le \frac{(\log n)^c}{\sqrt{n}} K'_2R^2\eta_0c_{\beta} \int_0^t e^{-c_{\beta}\eta_0 u}du
    \le \frac{(\log n)^c}{\sqrt{n}}K'_2 R^2 .
\end{equation}
\end{proof}
\begin{lemma}
\label{lemma: dthetaresgn}
Let $\beta \ge 0$. Assume that equation (\ref{eq: lazytraining}) holds (lazy training). We can write
\begin{equation}
    \frac{d}{dt} (\theta_t - \theta_t^{\lin})
    = -\eta_0 \Delta_t 
    - \eta_0 \left( \mathbf{J}_{\theta_0}^{\top} \overline{\mathbf{H}}_t \mathbf{J}_{\theta_0} + \beta I_p \right)(\theta_t - \theta_t^{\lin}),
\end{equation}
where we define the averaged function-space Gauss Newton
\begin{equation}
    \overline{\mathbf{H}}_t 
    := \int_0^1  \nabla_{\mathbf{z}\mathbf{z}}^2\mathcal{C}\left(\mathbf{f}_{\theta_t^{\lin}}^{\lin} + s (\mathbf{f}_{\theta_t}^{\lin} - \mathbf{f}_{\theta_t^{\lin}}^{\lin})\right) ds \in \mathbb{R}^{NK\times NK}.
\end{equation}
Moreover, the residual term $\Delta_t\in\mathbb{R}^p$ satisfies (for some $K^{\Delta}>0$) 
\begin{equation}
    \left\lVert \Delta_t \right\rVert_2 \le \frac{(\log n)^c}{\sqrt{n}} K^{\Delta}
    \quad
    \text{for all $t\ge 0$.}
\end{equation}
\end{lemma}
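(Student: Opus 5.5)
We write out both gradient flows and subtract. We have
\begin{equation}
\frac{d}{dt}\theta_t = -\eta_0\left(\mathbf{J}_{\theta_t}^{\top}\nabla_{\mathbf{z}}\mathcal{C}(\mathbf{f}_{\theta_t}) + \beta(\theta_t-\theta_0)\right),
\qquad
\frac{d}{dt}\theta_t^{\lin} = -\eta_0\left(\mathbf{J}_{\theta_0}^{\top}\nabla_{\mathbf{z}}\mathcal{C}(\mathbf{f}^{\lin}_{\theta_t^{\lin}}) + \beta(\theta_t^{\lin}-\theta_0)\right).
\end{equation}
The $\beta$-terms already give $-\eta_0\beta(\theta_t-\theta_t^{\lin})$. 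For the loss terms, we insert intermediate quantities and decompose
\begin{align}
\mathbf{J}_{\theta_t}^{\top}\nabla_{\mathbf{z}}\mathcal{C}(\mathbf{f}_{\theta_t}) - \mathbf{J}_{\theta_0}^{\top}\nabla_{\mathbf{z}}\mathcal{C}(\mathbf{f}^{\lin}_{\theta_t^{\lin}})
={}& (\mathbf{J}_{\theta_t}-\mathbf{J}_{\theta_0})^{\top}\nabla_{\mathbf{z}}\mathcal{C}(\mathbf{f}_{\theta_t})
+ \mathbf{J}_{\theta_0}^{\top}\left(\nabla_{\mathbf{z}}\mathcal{C}(\mathbf{f}_{\theta_t}) - \nabla_{\mathbf{z}}\mathcal{C}(\mathbf{f}^{\lin}_{\theta_t})\right) \\
&+ \mathbf{J}_{\theta_0}^{\top}\left(\nabla_{\mathbf{z}}\mathcal{C}(\mathbf{f}^{\lin}_{\theta_t}) - \nabla_{\mathbf{z}}\mathcal{C}(\mathbf{f}^{\lin}_{\theta_t^{\lin}})\right).
\end{align}
The first two terms will form $\Delta_t$. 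For the last term, the fundamental theorem of calculus along the segment from $\mathbf{f}^{\lin}_{\theta_t^{\lin}}$ to $\mathbf{f}^{\lin}_{\theta_t}$ gives exactly $\overline{\mathbf{H}}_t(\mathbf{f}^{\lin}_{\theta_t}-\mathbf{f}^{\lin}_{\theta_t^{\lin}})$. Since $\mathbf{f}^{\lin}_{\theta_t}-\mathbf{f}^{\lin}_{\theta_t^{\lin}} = \mathbf{J}_{\theta_0}(\theta_t-\theta_t^{\lin})$ by linearity, this yields the stated form $\mathbf{J}_{\theta_0}^{\top}\overline{\mathbf{H}}_t\mathbf{J}_{\theta_0}(\theta_t-\theta_t^{\lin})$.

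Next we bound $\Delta_t$. For the first term, lazy training gives $\theta_t\in B(\theta_0,R)$. By Lemma \ref{lemma: Jacobian Lipschitz Hessian bounded} (mean value along the segment, applied to each of the $N$ training points), $\lVert\mathbf{J}_{\theta_t}-\mathbf{J}_{\theta_0}\rVert \le \sqrt{N}\frac{(\log n)^c}{\sqrt{n}}K_2'R$. The gradient $\nabla_{\mathbf{z}}\mathcal{C}(\mathbf{f}_{\theta_t})$ is bounded because $\mathbf{f}_{\theta_t}$ lies in the compact ball of Lemma \ref{lemma: trajectoriescompactset} and $\nabla\mathcal{C}$ is continuous there; equivalently, Assumption \ref{ass: Frechet bounded} applies when $\beta>0$ via Lemma \ref{lemma: boundinitialloss}. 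This makes the first term $O((\log n)^c/\sqrt{n})$. For the second term we use $\lVert\mathbf{J}_{\theta_0}\rVert\le\sqrt{N}K_1'$ together with the Lipschitz constant $L_{\nabla\mathcal{C}}$ on that compact ball. Applying Lemma \ref{lemma: fflindistancesametheta} at each training point gives $\lVert\mathbf{f}_{\theta_t}-\mathbf{f}^{\lin}_{\theta_t}\rVert_2\le\sqrt{N}\frac{(\log n)^c}{\sqrt{n}}K_2'R^2$. Collecting terms, we obtain $K^{\Delta} = \sqrt{N}K_2'R\sup\lVert\nabla\mathcal{C}\rVert + N K_1' L_{\nabla\mathcal{C}}K_2'R^2$.

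The main subtlety is ensuring that every point where $\nabla\mathcal{C}$ and $\nabla^2\mathcal{C}$ are evaluated lies in one fixed compact set, so that $L_{\nabla\mathcal{C}}$ is a genuine constant independent of $n$. Lemma \ref{lemma: trajectoriescompactset} covers $\mathbf{f}_{\theta_t}$ and $\mathbf{f}^{\lin}_{\theta_t}$. For the segment defining $\overline{\mathbf{H}}_t$, its endpoints lie in the convex ball $B(0;\sqrt{N}(K_1'R+K_0'))$, so the whole segment does too. Lemma \ref{lemma: trajectoriescompactset} also claims the linearized trajectory stays in this ball, which requires $\lVert\theta_t^{\lin}-\theta_0\rVert\le R$. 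That bound holds for the linearized flow because its parameter Hessian is the exact Gauss–Newton matrix plus $\beta I_p$, so the exponential-decay argument applies verbatim; for $\beta=0$ we rely on the analogous lazy-training result. This is the step I expect to require the most care, and I would invoke Lemma \ref{lemma: trajectoriescompactset} as given. Finally, all constants hold simultaneously on the high-probability event of Lemmas \ref{lemma: Jacobian Lipschitz Hessian bounded}, \ref{lemma: initial function in compact set} and the lazy-training theorem, which is arranged by splitting $\delta_0$ among these events.
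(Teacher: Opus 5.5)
Your proposal is correct and follows the paper's proof essentially line for line. You use the same insertion of $\mathbf{J}_{\theta_0}$ and $\mathbf{f}^{\lin}_{\theta_t}$, the fundamental theorem of calculus to produce $\mathbf{J}_{\theta_0}^{\top}\overline{\mathbf{H}}_t\mathbf{J}_{\theta_0}$, and the same two bounds on $\Delta_t$, arriving at the paper's constant $K^{\Delta}$ (your $\sup\lVert\nabla_{\mathbf{z}}\mathcal{C}\rVert$ plays the role of the paper's $L_{\mathcal{C}}$). Your extra care about keeping $\theta_t^{\lin}$ in $B(\theta_0,R)$ is not needed for this lemma: the identity is exact for $C^2$ losses, and the bound on $\Delta_t$ only involves $\mathbf{f}_{\theta_t}$ and $\mathbf{f}^{\lin}_{\theta_t}$, which $\lVert\theta_t-\theta_0\rVert_2\le R$ already controls; that point only matters later, when $\overline{\mathbf{H}}_t$ is bounded below.
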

\begin{proof}
We can write
\begin{align}
    \frac{d}{dt} \left(\theta_t - \theta_t^{\lin} \right)
    =& -\eta_0 \left(\mathbf{J}_{\theta_t}^{\top}\nabla_{\mathbf{z}}\mathcal{C}(\mathbf{f}_{\theta_t}) - \mathbf{J}_{\theta_0}^{\top}\nabla_{\mathbf{z}}\mathcal{C}\left(\mathbf{f}^{\lin}_{\theta_t^{\lin}}\right)  \right) \\
    =& -\eta_0 \left( \mathbf{J}_{\theta_t} - \mathbf{J}_{\theta_0} \right)^{\top} \nabla_{\mathbf{z}}\mathcal{C}(\mathbf{f}_{\theta_t})  \label{lincloseterm1} \\
    &-\eta_0 \mathbf{J}_{\theta_0}^{\top}\left(\nabla_{\mathbf{z}}\mathcal{C}(\mathbf{f}_{\theta_t}) - \nabla_{\mathbf{z}}\mathcal{C}\left(\mathbf{f}^{\lin}_{\theta_t}\right)\right) 
    \label{lincloseterm2}
    \\
    & - \eta_0 \mathbf{J}_{\theta_0}^{\top}\left( \nabla_{\mathbf{z}} \mathcal{C}\left(\mathbf{f}^{\lin}_{\theta_t}\right) - \nabla_{\mathbf{z}} \mathcal{C}\left(\mathbf{f}^{\lin}_{\theta_t^{\lin}}\right) \right)  \label{lincloseterm3} \\
    & - \beta (\theta_t - \theta_t^{\lin}).
\end{align}
We summarize the first two terms (\ref{lincloseterm1}) and (\ref{lincloseterm2}) as
\begin{equation}
\Delta_t 
:= \left(\mathbf{J}_{\theta_t} - \mathbf{J}_{\theta_0} \right)^{\top} \nabla_{\mathbf{z}}\mathcal{C}(\mathbf{f}_{\theta_t}) 
+ \mathbf{J}_{\theta_0}^{\top}\left(\nabla_{\mathbf{z}}\mathcal{C}(\mathbf{f}_{\theta_t}) - \nabla_{\mathbf{z}}\mathcal{C}(\mathbf{f}_{\theta_t}^{\lin})\right) \in \mathbb{R}^{p}.
\end{equation}
Using the Fundamental Theorem of Calculus, we write the third term (\ref{lincloseterm3}) as
\begin{align}
    \mathbf{J}_{\theta_0}^{\top}\left( \nabla_{\mathbf{z}} \mathcal{C}(\mathbf{f}_{\theta_t}^{\lin}) - \nabla_{\mathbf{z}} \mathcal{C}\left(\mathbf{f}_{\theta_t^{\lin}}^{\lin}\right) \right)
    &= \mathbf{J}_{\theta_0}^{\top} \int_0^1 \nabla_{\mathbf{z}\mathbf{z}}^2 \mathcal{C}\left(\mathbf{f}_{\theta_t^{\lin}}^{\lin} + s (\mathbf{f}_{\theta_t}^{\lin} - \mathbf{f}_{\theta_t^{\lin}}^{\lin})\right)\left(\mathbf{f}_{\theta_t}^{\lin} - \mathbf{f}_{\theta_t^{\lin}}^{\lin}\right) ds \\
    &= \mathbf{J}_{\theta_0}^{\top} \int_0^1 \nabla_{\mathbf{z}\mathbf{z}}^2 \mathcal{C}\left(\mathbf{f}_{\theta_t^{\lin}}^{\lin} + s (\mathbf{f}_{\theta_t}^{\lin} - \mathbf{f}_{\theta_t^{\lin}}^{\lin})\right) ds \mathbf{J}_{\theta_0}(\theta_t - \theta_t^{\lin}) \\
    &= \mathbf{J}_{\theta_0}^{\top} \overline{\mathbf{H}}_t \mathbf{J}_{\theta_0}(\theta_t - \theta_t^{\lin}).
\end{align}
To bound the first term (\ref{lincloseterm1}) of $\Delta_t$, we use Lemma \ref{lemma: trajectoriescompactset} and Lemma \ref{lemma: Jacobian Lipschitz Hessian bounded}:
\begin{equation}
    \left\lVert \left(\mathbf{J}_{\theta_t} - \mathbf{J}_{\theta_0} \right)^{\top} \nabla_{\mathbf{z}}\mathcal{C}(\mathbf{f}_{\theta_t})  \right\rVert_2
    \le \left\lVert \nabla_{\mathbf{z}} \mathcal{C}(\mathbf{f}_{\theta_t}) \right\rVert_{2} \left\lVert \mathbf{J}_{\theta_t} - \mathbf{J}_{\theta_0} \right\rVert_{2,2}
    \le L_{\mathcal{C}} \sqrt{N} \frac{(\log n)^c}{\sqrt{n}} K'_2R.
\end{equation}
To bound the second term (\ref{lincloseterm2}) of $\Delta_t$, we use Lemma \ref{lemma: trajectoriescompactset}, Lemma \ref{lemma: fflindistancesametheta}, and Lemma \ref{lemma: Jacobian Lipschitz Hessian bounded}:
\begin{align}
    \left\lVert \mathbf{J}_{\theta_0}^{\top} \left( \nabla_{\mathbf{z}}\mathcal{C}(\mathbf{f}_{\theta_t}) - \nabla_{\mathbf{z}}\mathcal{C}(\mathbf{f}_{\theta_t}^{\lin})\right) \right\rVert_2
    &\le \left\lVert \nabla_{\mathbf{z}}\mathcal{C}(\mathbf{f}_{\theta_t}) - \nabla_{\mathbf{z}}\mathcal{C}(\mathbf{f}_{\theta_t}^{\lin}) \right\rVert_{2} \left\lVert \mathbf{J}_{\theta_0}\right\rVert_{2,2} \\
    &\le L_{\nabla \mathcal{C}} \left\lVert \mathbf{f}_{\theta_t} - \mathbf{f}_{\theta_t}^{\lin} \right\rVert_{2} \sqrt{N} K'_1
    \le \frac{(\log n)^c}{\sqrt{n}}L_{\nabla \mathcal{C}} K'_1 K'_2 N R^2.
\end{align}
Defining $K^{\Delta} := L_{\mathcal{C}} \sqrt{N} K'_2R + L_{\nabla \mathcal{C}} K'_1 K'_2 N R^2$ we get the $O((\log n)^c/\sqrt{n})$-bound on $\sup_{t\ge 0} \lVert \Delta_t \rVert_2$.
\end{proof}

\begin{proof}[Proof of Theorem \ref{thm: closenesslinpositivebeta}]
Write
\begin{equation}
    \left\lVert f_{\theta_t}(x) - f^{\lin}_{\theta_t^{\lin}}(x) \right\rVert_2
    \le \left\lVert f_{\theta_t}(x) - f^{\lin}_{\theta_t}(x) \right\rVert_2 + \left\lVert J_{\theta_0}(x) \right\rVert_2 \left\lVert \theta_t - \theta_t^{\lin} \right\rVert_2.
\end{equation}
Lemma \ref{lemma: fflindistancesametheta} gives
\begin{equation}
    \forall x\in M^d,t\ge 0:
    \quad
    \left\lVert f_{\theta_t}(x) - f^{\lin}_{\theta_t}(x) \right\rVert_2
    \le \frac{(\log n)^c}{\sqrt{n}} K_2' R^2.
\end{equation}
Using Lemma \ref{lemma: dthetaresgn} and $\lambda_{\min}\left( \mathbf{J}_{\theta_0}^{\top} \overline{\mathbf{H}}_t \mathbf{J}_{\theta_0} + \beta I_p\right) \ge \beta$,
\begin{align}
    \frac{d}{dt} \frac{1}{2} \lVert \theta_t - \theta_t^{\lin} \rVert_2^2
    &= (\theta_t - \theta_t^{\lin})^{\top} \frac{d}{dt} (\theta_t - \theta_t^{\lin}) \\
    &= -\eta_0 (\theta_t - \theta_t^{\lin})^{\top} \Delta_t 
    - \eta_0 (\theta_t - \theta_t^{\lin})^{\top} \left(\mathbf{J}_{\theta_0}^{\top} \overline{\mathbf{H}}_t \mathbf{J}_{\theta_0} + \beta I_p \right)(\theta_t - \theta_t^{\lin}) \\
    &\le \eta_0 \lVert \theta_t - \theta_t^{\lin} \rVert_2 K^{\Delta} \frac{(\log n)^c}{\sqrt{n}}
    - \eta_0 \beta \lVert \theta_t - \theta_t^{\lin} \rVert_2^2.
\end{align}
As we also have $\frac{d}{dt} \frac{1}{2} \left\lVert \theta_t - \theta_t^{\lin} \right\rVert_2^2 = \left\lVert \theta_t - \theta_t^{\lin} \right\rVert_2 \frac{d}{dt} \left\lVert \theta_t - \theta_t^{\lin} \right\rVert_2 $, we get by Gronwall's inequality,
\begin{equation}
    \lVert \theta_t - \theta_t^{\lin} \rVert_2
    \le \frac{(\log n)^c}{\sqrt{n}} \frac{K^{\Delta}}{\beta} \left(1 - e^{-\beta \eta_0 t} \right)
    \le \frac{(\log n)^c}{\sqrt{n}} \frac{K^{\Delta}}{\beta}.
\end{equation}
This concludes the proof with
\begin{equation}
    C_2 
    := K_2'R^2 + K_1'\frac{K^{\Delta}}{\beta}.
\end{equation}
\end{proof}

\section{Proofs for zero parameter regularizer}
\label{appendix: exponentialdecayPL}

\subsection{Exponential decay of the loss-gradient norm}

\begin{theorem}
\label{thm: expdecaybetazero}
Let $\beta=0$. Consider a function-space loss that satisfies Assumptions \ref{assumption: upperboundedfunctionspacehessian} and \ref{assumption: functionspaceplcondition}. Then there are $R,c_0>0$ such that for large enough layer width $n$: 
With probability $1-\delta_0$ over random initialization $\theta_0$, for all $t\ge 0$,
\begin{equation}
    \left\lVert \frac{d}{dt}\theta_t \right\rVert_2
    \le \eta_0 c_{0}R e^{-c_{0}\eta_0t}
    \quad 
    \text{and thus} 
    \quad
    \left\lVert \theta_t - \theta_0 \right\rVert_2 \le R.
\end{equation}
\end{theorem}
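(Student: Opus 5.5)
We follow the standard PL-based lazy-training argument of \citet{liu2022loss}, transferred to function-space. The idea is a bootstrap: as long as $\theta_t$ stays in $B(\theta_0,R)$, the empirical NTK stays well-conditioned, which forces exponential decay of the loss. Exponential decay in turn keeps $\theta_t$ inside the ball.

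\textbf{Step 1: conditioning of the empirical NTK.} By \citet{carvalho2025positivity}, $\Theta_{\mathbf{x},\mathbf{x}}$ is strictly positive definite with $\lambda_0:=\lambda_{\min}(\Theta_{\mathbf{x},\mathbf{x}})>0$. Since $\hat\Theta_{\theta_0}(\mathbf{x},\mathbf{x})\to\Theta_{\mathbf{x},\mathbf{x}}$ in probability, for large $n$ we get $\lambda_{\min}(\hat\Theta_{\theta_0})\ge \frac{3}{4}\lambda_0$ with probability $1-\delta_0/3$. For $\theta\in B(\theta_0,R)$, Lemma \ref{lemma: Jacobian Lipschitz Hessian bounded} gives
\[
\lVert \mathbf{J}_\theta-\mathbf{J}_{\theta_0}\rVert\le \sqrt N K_2' R(\log n)^c/\sqrt n ,
\]
and hence
\[
\lVert \hat\Theta_\theta-\hat\Theta_{\theta_0}\rVert_{\op}\le 2\sqrt N K_1'\cdot \sqrt N K_2'R(\log n)^c/\sqrt n .
\]
\textbf{Once $R$ is fixed} (independently of $n$), taking $n$ large gives $\lambda_{\min}(\hat\Theta_\theta)\ge \lambda_0/2$ on the whole ball. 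We also intersect with the events of Lemmas \ref{lemma: boundinitialloss} and \ref{lemma: Jacobian Lipschitz Hessian bounded}, so that $\mathbf{f}_{\theta_t}\in\mathcal S_0$ for all $t$.

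\textbf{Step 2: exponential loss decay inside the ball.} Set $\ell_t:=\mathcal C(\mathbf{f}_{\theta_t})-\inf\mathcal C\ge 0$ and let $T:=\sup\{t:\theta_s\in B(\theta_0,R)\ \forall s\le t\}$. For $t<T$, using Assumption \ref{assumption: functionspaceplcondition},
\[
\frac{d}{dt}\ell_t=-\eta_0\lVert \mathbf{J}_{\theta_t}^\top\nabla_{\mathbf z}\mathcal C\rVert_2^2\le-\eta_0\tfrac{\lambda_0}{2}\lVert\nabla_{\mathbf z}\mathcal C\rVert_2^2\le -\eta_0\lambda_0\mu_{\mathcal C}\,\ell_t .
\]
Gronwall then gives $\ell_t\le \ell_0 e^{-2c_0\eta_0 t}$ with $c_0:=\lambda_0\mu_{\mathcal C}/2$. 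Lemma \ref{lemma: boundinitialloss} bounds $\ell_0\le K_0-\inf\mathcal C$. Combining Assumption \ref{assumption: upperboundedfunctionspacehessian} with the Jacobian bound yields
\[
\Big\lVert\frac{d\theta_t}{dt}\Big\rVert_2\le \eta_0\sqrt N K_1'\sqrt{2K_2\ell_t}\le \eta_0\sqrt N K_1'\sqrt{2K_2(K_0-\inf\mathcal C)}\,e^{-c_0\eta_0 t}.
\]

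\textbf{Step 3: closing the bootstrap.} We choose
\[
R:=\frac{\sqrt N K_1'\sqrt{2K_2(K_0-\inf\mathcal C)}}{c_0}+1,
\]
which does not depend on $n$ because $K_1',K_0,K_2,\mu_{\mathcal C},\lambda_0$ do not. Only then do we pick $n$ large enough for Step 1 with this $R$. Integrating Step 2 gives $\lVert\theta_t-\theta_0\rVert_2\le R-1<R$ for $t<T$. By continuity this rules out $T<T_{\max}$, and the blow-up criterion from Picard--Lindelöf rules out $T_{\max}<\infty$. Hence both bounds hold for all $t\ge0$; the derivative bound has the stated form $\eta_0 c_0 R e^{-c_0\eta_0 t}$ after absorbing constants, since $c_0R$ dominates the prefactor.

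The main obstacle is Step 1: the order of quantifiers. $R$ must be determined before $n$, which requires that all constants entering $R$ are independent of $R$ and $n$. Lemma \ref{lemma: Jacobian Lipschitz Hessian bounded} provides exactly this: $K_1',K_2'$ are independent of $R$, and only the width threshold depends on $R$. The other point needing care is the positive definiteness of the analytical NTK $\Theta_{\mathbf{x},\mathbf{x}}$, which is where $\beta=0$ genuinely differs from the regularized case.
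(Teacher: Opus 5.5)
Your proposal is correct and follows the paper's own bootstrap. The function-space PL condition, together with $\lambda_{\min}(\hat{\Theta}_{\theta}(\mathbf{x},\mathbf{x}))\ge \frac{1}{2}\lambda_{\min}(\Theta_{\mathbf{x},\mathbf{x}})$ on $B(\theta_0,R)$, gives decay of the loss gap at rate $2c_0\eta_0$ with $c_0=\mu_{\mathcal{C}}\lambda_{\min}(\Theta_{\mathbf{x},\mathbf{x}})/2$; bounded gradient growth then turns this into exponential decay of $\lVert \frac{d}{dt}\theta_t\rVert_2$, which keeps $\theta_t$ in the ball. You also make explicit points the paper leaves implicit: the Jacobian-Lipschitz perturbation bound for $\hat{\Theta}_{\theta}$ on the ball, fixing $R$ before choosing $n$, and ruling out $T_{\max}<\infty$; the extra $+1$ in $R$ is a harmless margin.
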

The proof closely follows \citet{oymak2019overparameterized, liu2022loss}, where it was presented for discrete-time gradient descent.

\begin{proof}
Recall that by Lemma \ref{lemma: boundinitialloss} with high probability over random initialization $\theta_0$,
\begin{equation}
    \mathcal{C}(\mathbf{f}_{\theta_0}) \le K_0.
\end{equation}

Define $c_0 := \mu_{\mathcal{C}} \lambda_{\min}(\Theta_{\mathbf{x},\mathbf{x}})/2$ and $R := \frac{\sqrt{2K_2 N K_1'^2(K_0 - \inf\mathcal{C})}}{c_0}$, using the constants $K_2, \mu_{\mathcal{C}}$ from Assumptions \ref{assumption: upperboundedfunctionspacehessian} and \ref{assumption: functionspaceplcondition}.
Consider $t\ge 0$ as long as $\left\lVert \theta_t - \theta_0 \right\rVert_2 \le R$.
Due to the convergence of the NTK at initialization, this implies that it is still sufficiently well conditioned at time $t$, i.e. there is $n$ large enough (only depending on $R$), such that $\lambda_{\min}(\hat{\Theta}_{\theta_t}(\mathbf{x},\mathbf{x})) \ge \frac{1}{2}\lambda_{\min}(\Theta_{\mathbf{x},\mathbf{x}})$.
Now, Assumption \ref{assumption: functionspaceplcondition} implies that the loss gap decays exponentially quick during training:
\begin{align}
    \frac{d}{dt}\mathcal{C}(\mathbf{f}_{\theta_t})
    = \nabla_{\theta}\mathcal{C}(\mathbf{f}_{\theta_t}) \frac{d}{dt}\theta_t
    &= - \eta_0 \left\lVert \nabla_{\theta}\mathcal{C}(\mathbf{f}_{\theta_t}) \right\rVert_2^2 \\
    &\le -\eta_0 \lambda_{\min}(\hat{\Theta}_{\theta_t}(\mathbf{x},\mathbf{x})) \left\lVert \nabla_{\mathbf{z}} \mathcal{C}(\mathbf{f}_{\theta_t}) \right\rVert_2^2  \\
    &\le -\eta_0 \frac{1}{2}\lambda_{\min}(\Theta_{\mathbf{x},\mathbf{x}}) 2\mu_{\mathcal{C}} \left(\mathcal{C}(\mathbf{f}_{\theta_t}) - \inf\mathcal{C} \right),
\end{align}
and thus Gronwall's inequality,
\begin{equation}
    \mathcal{C}(\mathbf{f}_{\theta_t}) - \inf\mathcal{C}
    \le \left(\mathcal{C}(\mathbf{f}_{\theta_0}) - \inf\mathcal{C} \right) e^{-2c_0\eta_0 t}
    \le (K_0 - \inf\mathcal{C}) e^{-2c_0\eta_0 t}. 
\end{equation}

Using Assumption \ref{assumption: upperboundedfunctionspacehessian}, one can now use the exponential decay of the loss gap, to show that the norm of the loss-gradient decays exponentially quick:
\begin{align}
    \left\lVert \frac{d}{dt}\theta_t\right\rVert_2
    = \eta_0 \left\lVert \nabla_{\theta} \mathcal{C}(\mathbf{f}_{\theta_t}) \right\rVert_2
    &\le \eta_0 \sqrt{N}K_1' \left\lVert \nabla_{\mathbf{z}} \mathcal{C}(\mathbf{f}_{\theta_t}) \right\rVert_2 \\
    &\le \eta_0 \sqrt{N} K_1' 
    \sqrt{2K_2\left(\mathcal{C}(\mathbf{f}_{\theta_t}) - \inf\mathcal{C} \right)} \\
    &\le \eta_0 \sqrt{2NK_1'^2K_2 (K_0 - \inf\mathcal{C})} e^{-c_0\eta_0 t}.
\end{align}
This also implies that
\begin{equation}
    \left\lVert \theta_t - \theta_0 \right\rVert_2 
    \le \int_0^t \left\lVert \frac{d}{du}\theta_u \right\rVert_2 du
    \le \eta_0 \sqrt{2NK_1'^2K_2 (K_0 - \inf\mathcal{C})} \int_0^t e^{-c_0\eta_0 u}du
    < R.
\end{equation}
Thus the above holds for all $t\ge 0$.
\end{proof}

\subsection{Closeness to linearization}
\begin{theorem}
\label{thm: closenesslinbetazero}
Let $\beta=0$. Consider a function-space loss that satisfies Assumptions \ref{assumption: upperboundedfunctionspacehessian}, \ref{assumption: functionspaceplcondition}, and has strictly positive definite Hessian. Then there is a constant $C_2>0$ such that for large enough layer width $n$: With probability $1-\delta_0$ over random initialization $\theta_0$,
\begin{equation}
    \forall x\in M^d, t\ge 0: 
    \quad
    \left\lVert f_{\theta_t}(x) - f_{\theta_t^{\lin}}^{\lin}(x)\right\rVert_2 
    \le C_2 \frac{(\log n)^c}{\sqrt{n}}.
\end{equation}
\end{theorem}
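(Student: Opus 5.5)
The plan is to reuse the decomposition from the $\beta>0$ case. That proof got contraction of $\theta_t-\theta_t^{\lin}$ from the term $\beta I_p$, which is absent here. Instead, I would obtain contraction in function space on the training points, where $\mathbf{J}_{\theta_0}\mathbf{J}_{\theta_0}^{\top}=\hat\Theta_{\theta_0}(\mathbf{x},\mathbf{x})$ is well conditioned and the function-space Hessian is uniformly positive definite. The exponential decay of gradients supplied by Theorem \ref{thm: expdecaybetazero} then makes all forcing terms integrable in time.

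\textbf{Setup.} Work on the high-probability event where the following all hold:
\begin{itemize}
\item Theorem \ref{thm: expdecaybetazero} holds for $\theta_t$, giving $\lVert\theta_t-\theta_0\rVert_2\le R$ and $\lVert\tfrac{d}{dt}\theta_t\rVert_2\le \eta_0c_0Re^{-c_0\eta_0t}$.
\item The same theorem holds for the linearized flow $\theta_t^{\lin}$, since the linearized model trivially satisfies Lemma \ref{lemma: Jacobian Lipschitz Hessian bounded} with $H\equiv0$.
\item Lemma \ref{lemma: Jacobian Lipschitz Hessian bounded} and Lemma \ref{lemma: initial function in compact set} hold.
\item $\lambda_{\min}(\hat\Theta_{\theta_0}(\mathbf{x},\mathbf{x}))\ge\tfrac12\lambda_{\min}(\Theta_{\mathbf{x},\mathbf{x}})=:\lambda_0$.
\end{itemize}
By Lemma \ref{lemma: trajectoriescompactset}, all training outputs stay in the ball $B:=B(0;\sqrt N(K_1'R+K_0'))$. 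The ball is convex, so every segment between two of these points also lies in $B$. Since the Hessian of $\mathcal{C}$ is continuous and strictly positive definite, there is $\mu_H>0$ with $\nabla^2_{\mathbf{z}\mathbf{z}}\mathcal{C}\succeq\mu_H I$ on $B$. In particular, any averaged Hessian $\overline{\mathbf{H}}_t$ along such segments satisfies $\overline{\mathbf{H}}_t\succeq\mu_H I$.

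\textbf{Step 1: function-space error on training points.} Let $\mathbf{d}_t:=\mathbf{f}_{\theta_t}-\mathbf{f}^{\lin}_{\theta_t^{\lin}}$, so $\mathbf{d}_0=0$. Differentiating gives
\begin{equation}
\tfrac{d}{dt}\mathbf{d}_t=-\eta_0\hat\Theta_{\theta_0}(\mathbf{x},\mathbf{x})\overline{\mathbf{H}}_t\mathbf{d}_t-\eta_0\bigl(\hat\Theta_{\theta_t}(\mathbf{x},\mathbf{x})-\hat\Theta_{\theta_0}(\mathbf{x},\mathbf{x})\bigr)\nabla_{\mathbf{z}}\mathcal{C}(\mathbf{f}_{\theta_t}),
\end{equation}
where $\overline{\mathbf{H}}_t$ is the Hessian averaged along the segment from $\mathbf{f}^{\lin}_{\theta_t^{\lin}}$ to $\mathbf{f}_{\theta_t}$.

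The forcing term is small and decays exponentially:
\begin{itemize}
\item $\lVert\hat\Theta_{\theta_t}-\hat\Theta_{\theta_0}\rVert\le 2\sqrt N K_1'\cdot\sqrt N K_2'R\,(\log n)^c/\sqrt n$ by Lemma \ref{lemma: Jacobian Lipschitz Hessian bounded}.
\item $\lVert\nabla_{\mathbf{z}}\mathcal{C}(\mathbf{f}_{\theta_t})\rVert_2\le\sqrt{2K_2(K_0-\inf\mathcal{C})}\,e^{-c_0\eta_0t}$ by Assumption \ref{assumption: upperboundedfunctionspacehessian} and the loss-gap decay in Theorem \ref{thm: expdecaybetazero}.
\end{itemize}
The matrix $\hat\Theta_{\theta_0}\overline{\mathbf{H}}_t$ is not symmetric, so I would use the weighted energy $V_t:=\mathbf{d}_t^{\top}\hat\Theta_{\theta_0}(\mathbf{x},\mathbf{x})^{-1}\mathbf{d}_t$. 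Its derivative is
\begin{equation}
\tfrac{d}{dt}V_t=-2\eta_0\,\mathbf{d}_t^{\top}\overline{\mathbf{H}}_t\mathbf{d}_t+(\text{forcing}).
\end{equation}
The first term is at most $-2\eta_0\mu_H\lVert\mathbf{d}_t\rVert^2\le-2\eta_0\mu_H\lambda_0V_t$. The forcing contribution is bounded by $\lambda_0^{-1/2}\sqrt{V_t}\cdot O((\log n)^c/\sqrt n)\,e^{-c_0\eta_0t}$. Gronwall applied to $\sqrt{V_t}$ then gives
\begin{equation}
\lVert\mathbf{d}_t\rVert_2\le C\frac{(\log n)^c}{\sqrt n}\,(1+t)\,e^{-\min(c_0,\mu_H\lambda_0)\eta_0t},
\end{equation}
which is integrable in $t$ with integral $O((\log n)^c/\sqrt n)$.

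\textbf{Step 2: parameter error.} Write
\begin{equation}
\tfrac{d}{dt}(\theta_t-\theta_t^{\lin})=-\eta_0(\mathbf{J}_{\theta_t}-\mathbf{J}_{\theta_0})^{\top}\nabla_{\mathbf{z}}\mathcal{C}(\mathbf{f}_{\theta_t})-\eta_0\mathbf{J}_{\theta_0}^{\top}\overline{\mathbf{H}}_t\mathbf{d}_t.
\end{equation}
The first term is $O((\log n)^c/\sqrt n)\,e^{-c_0\eta_0t}$ by Lemma \ref{lemma: Jacobian Lipschitz Hessian bounded} and the gradient decay above. The second term is bounded by $\sqrt N K_1'L_{\nabla\mathcal{C}}\lVert\mathbf{d}_t\rVert_2$. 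Integrating both over $[0,\infty)$ gives $\sup_t\lVert\theta_t-\theta_t^{\lin}\rVert_2\le C'(\log n)^c/\sqrt n$.

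\textbf{Conclusion and main obstacle.} As in the proof of Theorem \ref{thm: closenesslinpositivebeta}, use the triangle inequality
\begin{equation}
\lVert f_{\theta_t}(x)-f^{\lin}_{\theta_t^{\lin}}(x)\rVert_2\le\lVert f_{\theta_t}(x)-f^{\lin}_{\theta_t}(x)\rVert_2+K_1'\lVert\theta_t-\theta_t^{\lin}\rVert_2 .
\end{equation}
The first term is bounded by Lemma \ref{lemma: fflindistancesametheta}, which applies for $\beta=0$ with $c_0$ in place of $c_\beta$. This yields the claim with $C_2=K_2'R^2+K_1'C'$.

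The main obstacle is Step 1. Without $\beta$ there is no contraction in parameter space, because the kernel of $\mathbf{J}_{\theta_0}$ is huge. Moreover, the naive residual $\Delta_t$ of Lemma \ref{lemma: dthetaresgn} is only bounded, not decaying, so integrating it over infinite time would fail. The fix is twofold. First, compare $\mathbf{f}_{\theta_t}$ directly with $\mathbf{f}^{\lin}_{\theta_t^{\lin}}$, so that the forcing is the kernel drift multiplied by an exponentially decaying gradient. Second, use the $\hat\Theta_{\theta_0}^{-1}$-weighted norm together with the uniform Hessian lower bound on the compact ball to get contraction despite the non-symmetric product $\hat\Theta_{\theta_0}\overline{\mathbf{H}}_t$.
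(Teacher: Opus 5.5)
Your proof is correct, and it uses a different decomposition from the paper's.

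\textbf{The paper's route.} It splits $f_{\theta_t}-f^{\lin}_{\theta_t^{\lin}}=(f_{\theta_t}-f^{\lin}_{\theta_t})+J_{\theta_0}(\theta_t-\theta_t^{\lin})$ and reuses the residual $\Delta_t$ of Lemma~\ref{lemma: dthetaresgn}. It runs the $\hat{\mathbf{\Theta}}_0^{-1}$-weighted Gronwall argument on $\mathbf{f}^{\lin}_{\theta_t}-\mathbf{f}^{\lin}_{\theta_t^{\lin}}$, which gives a uniform-in-time $O((\log n)^c/\sqrt{n})$ bound on the training points. It then states that arbitrary test points follow ``by integrating'' the test-point ODE.

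\textbf{Your route.} You run the same weighted-energy contraction on $\mathbf{d}_t=\mathbf{f}_{\theta_t}-\mathbf{f}^{\lin}_{\theta_t^{\lin}}$ instead. Its forcing $(\hat\Theta_{\theta_t}-\hat\Theta_{\theta_0})\nabla_{\mathbf{z}}\mathcal{C}(\mathbf{f}_{\theta_t})$ is small and also decays exponentially, so $\mathbf{d}_t$ is integrable in time. Integrating the parameter ODE then gives $\sup_t\lVert\theta_t-\theta_t^{\lin}\rVert_2=O((\log n)^c/\sqrt{n})$. This transfers to every $x\in M^d$ through $\lVert J_{\theta_0}(x)\rVert\le K_1'$.

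\textbf{What your route buys.} It settles exactly the test-point step. In the paper's version, neither $J_{\theta_0}(x)\Delta_t$ nor $J_{\theta_0}(x)\mathbf{J}_{\theta_0}^{\top}\overline{\mathbf{H}}_t(\mathbf{f}^{\lin}_{\theta_t}-\mathbf{f}^{\lin}_{\theta_t^{\lin}})$ decays, because $\mathbf{f}_{\theta_t}-\mathbf{f}^{\lin}_{\theta_t}$ generally converges to a nonzero limit. Direct integration therefore only gives $O(t(\log n)^c/\sqrt{n})$, the kind of drift the paper itself flags in the cross-entropy extension. Making that bound uniform in $t$ needs an extra step, for example separating the row space of $\mathbf{J}_{\theta_0}$ (where the training-point bound applies) from its complement (where only the decaying term $(\mathbf{J}_{\theta_t}-\mathbf{J}_{\theta_0})^{\top}\nabla_{\mathbf{z}}\mathcal{C}(\mathbf{f}_{\theta_t})$ acts). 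You also explicitly check that $\theta_t^{\lin}$ stays in $B(\theta_0,R)$ by applying Theorem~\ref{thm: expdecaybetazero} to the linearized model. Lemma~\ref{lemma: trajectoriescompactset} asserts this but only proves it for $\mathbf{f}_{\theta_t}$.

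\textbf{What the paper's route buys.} It is more modular: it recycles Lemma~\ref{lemma: dthetaresgn} from the $\beta>0$ case. Its training-point bound needs only boundedness of $\Delta_t$, not the exponential decay of the function-space gradient that your forcing estimate relies on.
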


\begin{proof}
We will follow the proof of Theorem \ref{thm: closenesslinpositivebeta} in function-space instead of parameter-space, using Lemma \ref{lemma: fflindistancesametheta} and Lemma \ref{lemma: dthetaresgn} for $\beta=0$.
Write
\begin{equation}
    \left\lVert f_{\theta_t}(x) - f^{\lin}_{\theta_t^{\lin}}(x) \right\rVert_2
    \le \left\lVert f_{\theta_t}(x) - f^{\lin}_{\theta_t}(x) \right\rVert_2 + \left\lVert f_{\theta_t}^{\lin}(x) - f^{\lin}_{\theta_t^{\lin}}(x) \right\rVert_2.
\end{equation}
Lemma \ref{lemma: fflindistancesametheta} gives
\begin{equation}
    \forall x\in M^d,t\ge 0:
    \quad
    \left\lVert f_{\theta_t}(x) - f^{\lin}_{\theta_t}(x) \right\rVert_2
    \le \frac{(\log n)^c}{\sqrt{n}} K_2' R^2.
\end{equation}
Using Lemma \ref{lemma: dthetaresgn} gives (after multiplying both sides with $J_{\theta_0}(x)$ to get into function-space)
\begin{equation}
\label{eq: functionspacedeltasplitnormal}
    \frac{d}{dt} \left( f_{\theta_t}^{\lin}(x) - f_{\theta_t^{\lin}}^{\lin}(x)  \right)
    = -\eta_0 J_{\theta_0}(x)\Delta_t
    - \eta_0 J_{\theta_0}(x)\mathbf{J}_{\theta_0}^{\top} \overline{\mathbf{H}}_t (\mathbf{f}_{\theta_t}^{\lin} - \mathbf{f}_{\theta_t^{\lin}}^{\lin}).
\end{equation}
In particular, in the training points:
\begin{equation}
    \frac{d}{dt} (\mathbf{f}_{\theta_t}^{\lin} - \mathbf{f}_{\theta_t^{\lin}}^{\lin})
    = -\eta_0 \mathbf{J}_{\theta_0} \Delta_t
    -\eta_0 \hat{\mathbf{\Theta}}_0 \overline{\mathbf{H}}_t (\mathbf{f}_{\theta_t}^{\lin} - \mathbf{f}_{\theta_t^{\lin}}^{\lin}).
\end{equation}

We will now give a lower bound on the lowest eigenvalue of $\overline{\mathbf{H}}_t$, to conclude with a bound on $\left\lVert \mathbf{f}_{\theta_t}^{\lin} - \mathbf{f}^{\lin}_{\theta_t^{\lin}} \right\rVert_2$ via Gronwall's inequality. By Lemma \ref{lemma: trajectoriescompactset}, the values $\mathbf{f}_{\theta_t^{\lin}}^{\lin} + s \left(\mathbf{f}_{\theta_t}^{\lin} - \mathbf{f}_{\theta_t^{\lin}}^{\lin}\right)$ are in the compact set $B(0;\sqrt{N} (K_0' + K_1'R))$.
Define
\begin{equation}
    a_{\mathcal{C}} := \min_{\mathbf{z} \in B(0;\sqrt{N} (K_0' + K_1'R))} \lambda_{\min}(\nabla_{\mathbf{z}\mathbf{z}}^2 \mathcal{C}(\mathbf{z})) > 0.
\end{equation}
Then for any $\mathbf{z}\in\mathbb{R}^{NK}$ and $t\ge 0$,
\begin{equation}
    \mathbf{z}^{\top} \overline{\mathbf{H}}_t \mathbf{z}
    = \int_0^1 \mathbf{z}^{\top}\nabla_{\mathbf{z}\mathbf{z}}^2 \mathcal{C}\left(\mathbf{f}_{\theta_t^{\lin}}^{\lin} + s \left(\mathbf{f}_{\theta_t}^{\lin} - \mathbf{f}_{\theta_t^{\lin}}^{\lin}\right)\right) \mathbf{z} ds \\
    \ge a_{\mathcal{C}} \left\lVert \mathbf{z} \right\rVert_{2}^2.
\end{equation}
Now,
\begin{align}
    \frac{d}{dt} \frac{1}{2}\lVert \mathbf{f}_{\theta_t}^{\lin} - \mathbf{f}_{\theta_t^{\lin}}^{\lin} \rVert_{\hat{\mathbf{\Theta}}_0^{-1}}^2 
    =& -\eta_0 \left( \mathbf{f}_{\theta_t}^{\lin} - \mathbf{f}_{\theta_t^{\lin}}^{\lin} \right)^{\top} \hat{\mathbf{\Theta}}_0^{-1} \mathbf{J}_{\theta_0} \Delta_t \\
    &- \eta_0 \left( \mathbf{f}_{\theta_t}^{\lin} - \mathbf{f}_{\theta_t^{\lin}}^{\lin} \right)^{\top} \hat{\mathbf{\Theta}}_0^{-1}  \hat{\mathbf{\Theta}}_0 \overline{\mathbf{H}}_t (\mathbf{f}_{\theta_t}^{\lin} - \mathbf{f}_{\theta_t^{\lin}}^{\lin}).
\end{align}
We can bound the first term using Cauchy-Schwarz and $\lVert  \mathbf{J}_{\theta_0} \Delta_t \rVert_{\hat{\mathbf{\Theta}}_0^{-1}} \le \lVert \Delta_t \rVert_2$:
\begin{align}
    -\eta_0 \left(  \mathbf{f}_{\theta_t}^{\lin} - \mathbf{f}_{\theta_t^{\lin}}^{\lin} \right)^{\top} \hat{\mathbf{\Theta}}_0^{-1}  \mathbf{J}_{\theta_0} \Delta_t
    &\le \eta_0 \lVert \mathbf{f}_{\theta_t}^{\lin} - \mathbf{f}_{\theta_t^{\lin}}^{\lin} \rVert_{\hat{\mathbf{\Theta}}_0^{-1}} 
    \lVert  \mathbf{J}_{\theta_0} \Delta_t \rVert_{\hat{\mathbf{\Theta}}_0^{-1}} \\ 
    &\le \eta_0 \lVert \mathbf{f}_{\theta_t}^{\lin} - \mathbf{f}_{\theta_t^{\lin}}^{\lin} \rVert_{\hat{\mathbf{\Theta}}_0^{-1}} K^{\Delta} \frac{(\log n)^c}{\sqrt{n}}.
\end{align}
For the second term, using $a_{\mathcal{C}}>0$,
\begin{equation}
\left( \mathbf{f}_{\theta_t}^{\lin} - \mathbf{f}_{\theta_t^{\lin}}^{\lin}\right)^{\top}  \overline{\mathbf{H}}_t \left(\mathbf{f}_{\theta_t}^{\lin} - \mathbf{f}_{\theta_t^{\lin}}^{\lin}\right) 
    \ge a_{\mathcal{C}} \lVert \mathbf{f}_{\theta_t}^{\lin} - \mathbf{f}_{\theta_t^{\lin}}^{\lin} \rVert_2^2 
    \ge a_{\mathcal{C}} \frac{\lambda_{\min}(\mathbf{\Theta})}{2} \lVert \mathbf{f}_{\theta_t}^{\lin} - \mathbf{f}_{\theta_t^{\lin}}^{\lin} \rVert_{\hat{\mathbf{\Theta}}_0^{-1}}^2 .
\end{equation}
In total we get
\begin{equation}
    \frac{d}{dt} \lVert \mathbf{f}_{\theta_t}^{\lin} - \mathbf{f}_{\theta_t^{\lin}}^{\lin} \rVert_{\hat{\mathbf{\Theta}}_0^{-1}}
    \le \eta_0 K^{\Delta} \frac{(\log n)^c}{\sqrt{n}}
    - \eta_0 a_{\mathcal{C}} \frac{\lambda_{\min}(\mathbf{\Theta})}{2} \lVert \mathbf{f}_{\theta_t}^{\lin} - \mathbf{f}_{\theta_t^{\lin}}^{\lin} \rVert_{\hat{\mathbf{\Theta}}_0^{-1}},
\end{equation}
and thus by Gronwall's inequality,
\begin{equation}
    \lVert  \mathbf{f}_{\theta_t}^{\lin} - \mathbf{f}_{\theta_t^{\lin}}^{\lin} \rVert_{\hat{\mathbf{\Theta}}_0^{-1}}
    \le \frac{(\log n)^c}{\sqrt{n}} \frac{2K^{\Delta}}{a_{\mathcal{C}} \lambda_{\min}(\mathbf{\Theta})}(1-e^{-\frac12 a_{\mathcal{C}} \lambda_{\min}(\mathbf{\Theta})\eta_0 t}).
\end{equation}
Thus,
\begin{equation}
    \lVert \mathbf{f}_{\theta_t}^{\lin} - \mathbf{f}_{\theta_t^{\lin}}^{\lin} \rVert_2
     \le \sqrt{2\lambda_{\max}(\mathbf{\Theta})} \frac{(\log n)^c}{\sqrt{n}} \frac{2K^{\Delta}}{a_{\mathcal{C}} \lambda_{\min}(\mathbf{\Theta})}.
\end{equation}
The bound for arbitrary test points follows by integrating (\ref{eq: functionspacedeltasplitnormal}).
\end{proof}

\subsection{Extension for cross-entropy without a reference class}
\label{appendix: extensionsoftmaxnoreference}
In this section we consider $\beta=0$ and the standard cross-entropy loss without a reference class, $\mathcal{C} = \mathcal{C}_{\CE}$, when the target probabilities have full support. It satisfies Assumptions \ref{ass: Frechet bounded}, \ref{assumption: upperboundedfunctionspacehessian} and \ref{assumption: functionspaceplcondition} (see Appendix \ref{appx: cce}). Thus it satisfies the lazy training property from Theorem \ref{thm: expdecaybetazero}. However, as the function-space loss Hessian vanishes on the $N$-dimensional subspace $\mathrm{range}(I_N \otimes 1_K)$ we cannot directly apply Theorem \ref{thm: closenesslinbetazero}. We can however show that we are close to the linearized version when projecting both to a subspace along which the Hessian does have positive eigenvalues.

Define $U := I_N \otimes 1_K \in \mathbb{R}^{NK\times N}$ and the block-centering projection
\begin{equation}
    \mathbf{P} 
    := I_{NK} - U (U^{\top} U)^{-1} U^{\top}
    = I_N \otimes \left(I_K - \frac{1}{K}1_K1_K^{\top} \right) \in \mathbb{R}^{NK\times NK}.
\end{equation}
This is a block diagonal whose $N$ blocks are the projection $P:= I_K - \frac{1}{K}1_K1_K^{\top} \in \mathbb{R}^{K\times K} $ onto $1^{\perp}$.
For standard cross-entropy loss, $\mathcal{C}(\mathbf{z}) = \mathcal{C}(\mathbf{P}\mathbf{z})$, and thus $\nabla_{\mathbf{z}}\mathcal{C}(\mathbf{z}) 
= \mathbf{P} \nabla_{\mathbf{z}}\mathcal{C}(\mathbf{z}) 
= \nabla_{\mathbf{z}}\mathcal{C}(\mathbf{P}\mathbf{z})$, and $\nabla_{\mathbf{z}\mathbf{z}}^2 \mathcal{C}(\mathbf{z}) = \mathbf{P} \nabla_{\mathbf{z}\mathbf{z}}^2 \mathcal{C}(\mathbf{z}) \mathbf{P}$.

\begin{theorem}
Let $\beta = 0$. Consider the function-space cross-entropy loss without a reference class, and assume the target probabilities have full support, i.e. $\mathbf{p}_{ik} \in (0,1)$. 
There is a constant $C_2>0$ such that for large enough layer width $n$: With probability $1-\delta_0$ over random initialization $\theta_0$,
\begin{equation}
    \forall x\in M^d, t\ge 0: 
    \quad
    \left\lVert P \left(f_{\theta_t}(x) - f_{\theta_t^{\lin}}^{\lin}(x) \right) \right\rVert_2 
    \le C_2 \frac{(\log n)^c}{\sqrt{n}}.
\end{equation}
\end{theorem}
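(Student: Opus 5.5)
We follow the proof of Theorem \ref{thm: closenesslinbetazero}, working throughout with the centered quantities. Lazy training still holds: $\mathcal{C}_{\CE}$ satisfies Assumptions \ref{assumption: upperboundedfunctionspacehessian} and \ref{assumption: functionspaceplcondition} when the targets have full support. Hence Theorem \ref{thm: expdecaybetazero} gives $\lVert\theta_t-\theta_0\rVert_2\le R$, and Lemmas \ref{lemma: trajectoriescompactset}, \ref{lemma: fflindistancesametheta} and \ref{lemma: dthetaresgn} apply with $\beta=0$. Since $\lVert P\rVert_{\op}\le1$, Lemma \ref{lemma: fflindistancesametheta} already bounds $\lVert P(f_{\theta_t}(x)-f^{\lin}_{\theta_t}(x))\rVert_2$. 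It therefore remains to control $P(f^{\lin}_{\theta_t}(x)-f^{\lin}_{\theta_t^{\lin}}(x))$.

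Write $\mathbf{e}_t:=\mathbf{f}^{\lin}_{\theta_t}-\mathbf{f}^{\lin}_{\theta_t^{\lin}}$. Lemma \ref{lemma: dthetaresgn} gives
$\frac{d}{dt}\mathbf{e}_t=-\eta_0\mathbf{J}_{\theta_0}\Delta_t-\eta_0\hat{\mathbf{\Theta}}_0\overline{\mathbf{H}}_t\mathbf{e}_t$.
Two facts about $\mathcal{C}_{\CE}$ simplify this. First, $\overline{\mathbf{H}}_t=\mathbf{P}\overline{\mathbf{H}}_t\mathbf{P}$, so $\overline{\mathbf{H}}_t\mathbf{e}_t=\overline{\mathbf{H}}_t\mathbf{P}\mathbf{e}_t$. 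Second, on $\mathrm{range}(\mathbf{P})$ each block $\diag(s)-ss^{\top}$ has a positive lower eigenvalue bound. On the compact set of Lemma \ref{lemma: trajectoriescompactset} the softmax entries are bounded below by some $s_{\min}>0$, and there $v^{\top}(\diag(s)-ss^\top)v=\mathrm{Var}_s(v)\ge s_{\min}\lVert v\rVert_2^2$ for $v\perp 1$, since the variance under $s$ dominates $s_{\min}$ times the uniform-weighted sum of squared deviations. This yields $a_{\mathcal{C}}>0$ with $\mathbf{w}^\top\overline{\mathbf{H}}_t\mathbf{w}\ge a_{\mathcal{C}}\lVert\mathbf{P}\mathbf{w}\rVert_2^2$.

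\textbf{The main obstacle} is that the natural energy from Theorem \ref{thm: closenesslinbetazero}, namely $\lVert\mathbf{e}_t\rVert^2_{\hat{\mathbf{\Theta}}_0^{-1}}$, contains the uncentered component. That component is not damped, and $\hat{\mathbf{\Theta}}_0^{-1}$ does not commute with $\mathbf{P}$. We expect the centered part to be damped but cannot show it with $\mathbf{P}\mathbf{e}_t$ directly. We would therefore work in parameter space, with $\mathbf{d}_t:=\theta_t-\theta_t^{\lin}$, which satisfies $\frac{d}{dt}\mathbf{d}_t=-\eta_0\Delta_t-\eta_0\mathbf{J}_{\theta_0}^\top\overline{\mathbf{H}}_t\mathbf{J}_{\theta_0}\mathbf{d}_t$. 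Decompose $\mathbf{d}_t$ orthogonally in $\mathbb{R}^p$. Let $V:=\mathrm{range}(\mathbf{J}_{\theta_0}^\top\mathbf{P})$, and put everything else, including $\ker(\mathbf{P}\mathbf{J}_{\theta_0})$, into $V^\perp$. The dissipative term only acts on and lies in $V$, so it leaves the $V^\perp$-part untouched. Only $\Pi_V\mathbf{d}_t$ matters, since $P J_{\theta_0}(x)\mathbf{d}_t$ must be bounded for test points. Here one should use that the centered NTK $\mathbf{P}\Theta\mathbf{P}$ is positive definite on $\mathrm{range}(\mathbf{P})$, inherited from strict positive definiteness of $\Theta$. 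Then
\begin{equation}
\frac{d}{dt}\tfrac12\lVert\Pi_V\mathbf{d}_t\rVert_2^2\le\eta_0\lVert\Pi_V\mathbf{d}_t\rVert_2\lVert\Delta_t\rVert_2-\eta_0 a_{\mathcal{C}}\lVert\mathbf{P}\mathbf{J}_{\theta_0}\mathbf{d}_t\rVert_2^2 .
\end{equation}
On $V$ we have $\lVert\mathbf{P}\mathbf{J}_{\theta_0}v\rVert_2^2\ge\frac12\lambda_{\min}^+(\mathbf{P}\Theta_{\mathbf{x},\mathbf{x}}\mathbf{P})\lVert v\rVert_2^2$ for large $n$, by NTK convergence. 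Gronwall then gives $\lVert\Pi_V\mathbf{d}_t\rVert_2=O((\log n)^c/\sqrt n)$ uniformly in $t$, using $\lVert\Delta_t\rVert_2\le K^\Delta(\log n)^c/\sqrt n$. The uncentered part is harmless but must be handled with care.

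For a test point $x$, however, $PJ_{\theta_0}(x)$ does not annihilate $V^\perp$. So we instead integrate the function-space equation (\ref{eq: functionspacedeltasplitnormal}) projected by $P$. Its drift term is $\eta_0 PJ_{\theta_0}(x)\mathbf{J}_{\theta_0}^\top\overline{\mathbf{H}}_t\mathbf{P}\mathbf{e}_t$, bounded by a constant times $\lVert\mathbf{P}\mathbf{e}_t\rVert_2$. The key point is that $\mathbf{P}\mathbf{e}_t=\mathbf{P}\mathbf{J}_{\theta_0}\Pi_V\mathbf{d}_t$ decays. Gronwall with the damping should give an exponentially weighted bound on the forcing, of the form $\int_0^t e^{-c(t-u)}\,du\cdot O((\log n)^c/\sqrt n)$, so this integral is uniformly bounded. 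The $\Delta_t$ contribution, however, integrates without damping. To control it we would use that $\lVert\Delta_t\rVert_2$ in fact decays like $e^{-c_0\eta_0 t}$: both terms of $\Delta_t$ contain $\nabla_{\mathbf{z}}\mathcal{C}(\mathbf{f}_{\theta_t})$ or $\mathbf{f}_{\theta_t}-\mathbf{f}_{\theta_t}^{\lin}$, whose derivative decays exponentially by Theorem \ref{thm: expdecaybetazero}. We would redo Lemma \ref{lemma: dthetaresgn} tracking this decay, which should be the technically delicate step. Collecting the bounds then gives $C_2$.
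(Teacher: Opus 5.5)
Your training-point estimate is sound, and it is the paper's argument in parameter coordinates. The map $a\mapsto\mathbf{J}_{\theta_0}^{\top}\hat{\mathbf{\Theta}}_0^{-1}a$ is an isometry from $(\mathbb{R}^{NK},\lVert\cdot\rVert_{\hat{\mathbf{\Theta}}_0^{-1}})$ onto $\mathrm{range}(\mathbf{J}_{\theta_0}^{\top})$, and it maps $\mathrm{range}(\hat{\mathbf{\Pi}})=\hat{\mathbf{\Theta}}_0\,\mathrm{range}(\mathbf{P})$ onto $V$, so $\lVert\Pi_V\mathbf{d}_t\rVert_2=\lVert\hat{\mathbf{\Pi}}\mathbf{e}_t\rVert_{\hat{\mathbf{\Theta}}_0^{-1}}$.

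The gap is the test-point step, which hinges on the claim that $\lVert\Delta_t\rVert_2$ decays like $e^{-c_0\eta_0 t}$. That claim is false. Only the first term of $\Delta_t$ carries the decaying factor $\nabla_{\mathbf{z}}\mathcal{C}(\mathbf{f}_{\theta_t})$. The second term involves $\mathbf{f}_{\theta_t}-\mathbf{f}^{\lin}_{\theta_t}$, and only its \emph{derivative} decays. It converges to the Taylor remainder $\mathbf{f}_{\theta_\infty}-\mathbf{f}^{\lin}_{\theta_\infty}$, which is $O((\log n)^c/\sqrt{n})$ but generically nonzero. Since $\nabla_{\mathbf{z}}\mathcal{C}(\mathbf{f}_{\theta_\infty})=0$, you get $\Delta_t\to-\mathbf{J}_{\theta_0}^{\top}\nabla_{\mathbf{z}}\mathcal{C}(\mathbf{f}^{\lin}_{\theta_\infty})\neq 0$.

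For the same reason $\mathbf{P}\mathbf{e}_t$ does not decay either. Your Gronwall bound is $O\bigl((1-e^{-ct})(\log n)^c/\sqrt{n}\bigr)$, which is bounded but does not vanish. In fact $\mathbf{P}\mathbf{e}_t\to\mathbf{P}(\mathbf{f}^{\lin}_{\theta_\infty}-\mathbf{f}_{\theta_\infty})$, because $\mathbf{P}\mathbf{f}_{\theta_\infty}$ and $\mathbf{P}\mathbf{f}^{\lin}_{\theta^{\lin}_\infty}$ both equal the centered minimizer. So both forcing terms in the $P$-projected test-point equation stay of size $(\log n)^c/\sqrt{n}$ for all time. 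They cancel only asymptotically, and estimating them separately gives $O\bigl((1+t)(\log n)^c/\sqrt{n}\bigr)$, not a bound uniform in $t$.

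The missing idea is to split $\Delta_t=A_t+B_t$ with
\[
A_t:=(\mathbf{J}_{\theta_t}-\mathbf{J}_{\theta_0})^{\top}\nabla_{\mathbf{z}}\mathcal{C}(\mathbf{f}_{\theta_t}),
\qquad
B_t:=\mathbf{J}_{\theta_0}^{\top}\bigl(\nabla_{\mathbf{z}}\mathcal{C}(\mathbf{f}_{\theta_t})-\nabla_{\mathbf{z}}\mathcal{C}(\mathbf{f}^{\lin}_{\theta_t})\bigr).
\]
You should then use $\nabla_{\mathbf{z}}\mathcal{C}_{\CE}=\mathbf{P}\nabla_{\mathbf{z}}\mathcal{C}_{\CE}$ on the gradient, not only $\overline{\mathbf{H}}_t=\mathbf{P}\overline{\mathbf{H}}_t\mathbf{P}$ on the Hessian. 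With this split:
\begin{itemize}
\item $\lVert A_t\rVert_2\le C\frac{(\log n)^c}{\sqrt{n}}e^{-c_0\eta_0 t}$, by Assumption~\ref{assumption: upperboundedfunctionspacehessian} and the loss-gap decay of Theorem~\ref{thm: expdecaybetazero}.
\item The non-decaying $B_t$ lies in $V=\mathrm{range}(\mathbf{J}_{\theta_0}^{\top}\mathbf{P})$, which is exactly where the dissipation acts.
\item Hence $\frac{d}{dt}\Pi_{V^\perp}\mathbf{d}_t=-\eta_0\Pi_{V^\perp}A_t$, which integrates to $O((\log n)^c/\sqrt{n})$ uniformly in $t$.
\item Your damped estimate still handles $\Pi_V\mathbf{d}_t$.
\end{itemize}
Together these give $\sup_{t\ge 0}\lVert\theta_t-\theta_t^{\lin}\rVert_2=O((\log n)^c/\sqrt{n})$. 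The test-point bound then follows directly from $\lVert J_{\theta_0}(x)\rVert\le K_1'$, with no need to integrate the test-point equation, and it even holds without $P$.

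In the paper's notation this is the identity $(I-\hat{\mathbf{\Pi}})\mathbf{J}_{\theta_0}B_t=0$. So the component that the paper says ``may drift like $O((\log n)^c/\sqrt{n})\,t$'' is in fact driven only by the integrable $A_t$. The paper's own ``follows by integrating'' passes over exactly this point. You were right that the test-point step needs a real argument, but exponential decay of $\Delta_t$ is not what supplies it.
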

Due to $\softmax(z) = \softmax(Pz)$, this implies that 
\begin{equation}
    \forall x \in M^d,t\ge 0:
    \quad
    \left\lVert \softmax\left( f_{\theta_t}(x) \right) - \softmax\left( f_{\theta_t^{\lin}}^{\lin}(x) \right) \right\rVert_2 \le \frac{1}{2} C_2 \frac{(\log n)^c}{\sqrt{n}}.
\end{equation}
\begin{proof}
Intuitively the theorem follows from considering the projected network $\tilde{f} := P f$, whose gradient flow is driven by the projected NTK $\tilde{\Theta} := P \Theta P$. This is not invertible, but has the pseudo inverse $\tilde{\Theta}^{\dagger} = \Theta^{-1} (I_{NK} - U (U^{\top} \Theta^{-1}U)^{-1} U^{\top} \Theta^{-1})$. For completeness, we present the entire proof.
As in the proof of Theorem \ref{thm: closenesslinbetazero}, use the triangle inequality to write
\begin{equation}
\left\lVert P \left(f_{\theta_t}(x) - f_{\theta_t^{\lin}}^{\lin}(x) \right) \right\rVert_2
\le \left\lVert P \left(f_{\theta_t}(x) - f_{\theta_t}^{\lin}(x) \right) \right\rVert_2
+ \left\lVert P \left(f_{\theta_t}^{\lin}(x) - f_{\theta^{\lin}_t}^{\lin}(x) \right) \right\rVert_2
\end{equation}
By Lemma \ref{lemma: fflindistancesametheta} we can bound the first term:
\begin{equation}
    \left\lVert P \left(f_{\theta_t}(x) - f_{\theta_t}^{\lin}(x) \right) \right\rVert_2
    \le \left\lVert f_{\theta_t}(x) - f_{\theta_t}^{\lin}(x) \right\rVert_2
    \le \frac{(\log n)^c}{\sqrt{n}} K_2' R^2.
\end{equation}
Using Lemma \ref{lemma: dthetaresgn} gives (after multiplying both sides with $J_{\theta_0}(x)$ to get into function-space)
\begin{equation}
\label{eq: functionspacedeltasplit}
    \frac{d}{dt} \left( f_{\theta_t}^{\lin}(x) - f_{\theta_t^{\lin}}^{\lin}(x)  \right)
    = -\eta_0 J_{\theta_0}(x)\Delta_t
    - \eta_0 J_{\theta_0}(x)\mathbf{J}_{\theta_0}^{\top} \overline{\mathbf{H}}_t (\mathbf{f}_{\theta_t}^{\lin} - \mathbf{f}_{\theta_t^{\lin}}^{\lin}).
\end{equation}
In particular, in the training points:
\begin{equation}
    \frac{d}{dt} (\mathbf{f}_{\theta_t}^{\lin} - \mathbf{f}_{\theta_t^{\lin}}^{\lin})
    = -\eta_0 \mathbf{J}_{\theta_0} \Delta_t
    -\eta_0 \hat{\mathbf{\Theta}}_0 \overline{\mathbf{H}}_t (\mathbf{f}_{\theta_t}^{\lin} - \mathbf{f}_{\theta_t^{\lin}}^{\lin}).
\end{equation}
Define the oblique projection 
\begin{equation}
    \hat{\mathbf{\Pi}} 
    := I_{NK} - U (U^{\top} \hat{\mathbf{\Theta}}^{-1} U)^{-1} U^{\top} \hat{\mathbf{\Theta}}^{-1}
    \in \mathbb{R}^{NK\times NK}.
\end{equation}
We get:
\begin{align}
    \frac{d}{dt} \frac{1}{2} \lVert \hat{\mathbf{\Pi}} (\mathbf{f}_{\theta_t}^{\lin} - \mathbf{f}_{\theta_t^{\lin}}^{\lin}) \rVert_{\hat{\mathbf{\Theta}}_0^{-1}}^2 
    =& -\eta_0 \left( \hat{\mathbf{\Pi}} (\mathbf{f}_{\theta_t}^{\lin} - \mathbf{f}_{\theta_t^{\lin}}^{\lin}) \right)^{\top} \hat{\mathbf{\Theta}}_0^{-1} \hat{\mathbf{\Pi}} \mathbf{J}_{\theta_0} \Delta_t \\
    &- \eta_0 \left( \hat{\mathbf{\Pi}} (\mathbf{f}_{\theta_t}^{\lin} - \mathbf{f}_{\theta_t^{\lin}}^{\lin}) \right)^{\top} \hat{\mathbf{\Theta}}_0^{-1} \hat{\mathbf{\Pi}}  \hat{\mathbf{\Theta}}_0 \overline{\mathbf{H}}_t (\mathbf{f}_{\theta_t}^{\lin} - \mathbf{f}_{\theta_t^{\lin}}^{\lin}).
\end{align}
We can bound the first term using Cauchy-Schwarz:
\begin{align}
    -\eta_0 \left( \hat{\mathbf{\Pi}} (\mathbf{f}_{\theta_t}^{\lin} - \mathbf{f}_{\theta_t^{\lin}}^{\lin}) \right)^{\top} \hat{\mathbf{\Theta}}_0^{-1} \hat{\mathbf{\Pi}} \mathbf{J}_{\theta_0} \Delta_t
    &\le \eta_0 \lVert \hat{\mathbf{\Pi}}  (\mathbf{f}_{\theta_t}^{\lin} - \mathbf{f}_{\theta_t^{\lin}}^{\lin}) \rVert_{\hat{\mathbf{\Theta}}_0^{-1}} 
    \lVert \hat{\mathbf{\Pi}} \mathbf{J}_{\theta_0} \Delta_t \rVert_{\hat{\mathbf{\Theta}}_0^{-1}} \\ 
    &\le \eta_0 \lVert \hat{\mathbf{\Pi}}  (\mathbf{f}_{\theta_t}^{\lin} - \mathbf{f}_{\theta_t^{\lin}}^{\lin}) \rVert_{\hat{\mathbf{\Theta}}_0^{-1}} K^{\Delta}\frac{(\log n)^c}{\sqrt{n}}.
\end{align}
For the second term, we note that by Lemma \ref{lemma: trajectoriescompactset} the trajectories of $\mathbf{f}_{\theta_t}^{\lin}$ and $\mathbf{f}^{\lin}_{\theta_t^{\lin}}$ are in a compact set (with high probability over initialization). Define $a_{\mathcal{C}}^+>0$ as the minimum of the minimum eigenvalue of $\nabla_{\mathbf{z}\mathbf{z}}^2 \mathcal{C}(\mathbf{z})$ on $\mathrm{range}(U)^{\perp}$ for $\mathbf{z}$ in this compact set.
We use $\hat{\mathbf{\Theta}}_0^{-1} \hat{\mathbf{\Pi}} \hat{\mathbf{\Theta}}_0
= \hat{\mathbf{\Pi}}^{\top}$, $\overline{H}_t = \mathbf{P} \overline{H}_t \mathbf{P}$, and $\mathbf{P}\hat{\mathbf{\Pi}} = \mathbf{P}$ to write
\begin{align}
    \left( \hat{\mathbf{\Pi}} (\mathbf{f}_{\theta_t}^{\lin} - \mathbf{f}_{\theta_t^{\lin}}^{\lin}) \right)^{\top} \hat{\mathbf{\Theta}}_0^{-1} \hat{\mathbf{\Pi}}  \hat{\mathbf{\Theta}}_0 \overline{\mathbf{H}}_t (\mathbf{f}_{\theta_t}^{\lin} - \mathbf{f}_{\theta_t^{\lin}}^{\lin})
    &=  \left( \mathbf{P} (\mathbf{f}_{\theta_t}^{\lin} - \mathbf{f}_{\theta_t^{\lin}}^{\lin}) \right)^{\top}  \overline{\mathbf{H}}_t \left( \mathbf{P}(\mathbf{f}_{\theta_t}^{\lin} - \mathbf{f}_{\theta_t^{\lin}}^{\lin})\right) \\
    &\ge a_{\mathcal{C}}^+ \lVert \mathbf{P}(\mathbf{f}_{\theta_t}^{\lin} - \mathbf{f}_{\theta_t^{\lin}}^{\lin}) \rVert_2^2  \\
    &\ge a_{\mathcal{C}}^+ \frac{\lambda_{\min}(\mathbf{\Theta})}{2} \lVert \mathbf{P}(\mathbf{f}_{\theta_t}^{\lin} - \mathbf{f}_{\theta_t^{\lin}}^{\lin}) \rVert_{\hat{\mathbf{\Theta}}_0^{-1}}^2 \\
    &\ge a_{\mathcal{C}}^+ \frac{\lambda_{\min}(\mathbf{\Theta})}{2} \lVert \hat{\mathbf{\Pi}} (\mathbf{f}_{\theta_t}^{\lin} - \mathbf{f}_{\theta_t^{\lin}}^{\lin}) \rVert_{\hat{\mathbf{\Theta}}_0^{-1}}^2.
\end{align}
In total we get
\begin{equation}
    \frac{d}{dt}\lVert \hat{\mathbf{\Pi}} (\mathbf{f}_{\theta_t}^{\lin} - \mathbf{f}_{\theta_t^{\lin}}^{\lin}) \rVert_{\hat{\mathbf{\Theta}}_0^{-1}}
    \le \eta_0 K^{\Delta}\frac{(\log n)^c}{\sqrt{n}}
    - \eta_0 a_{\mathcal{C}}^+ \frac{\lambda_{\min}(\mathbf{\Theta})}{2} \lVert \hat{\mathbf{\Pi}} (\mathbf{f}_{\theta_t}^{\lin} - \mathbf{f}_{\theta_t^{\lin}}^{\lin}) \rVert_{\hat{\mathbf{\Theta}}_0^{-1}},
\end{equation}
and thus by Gronwall's inequality,
\begin{equation}
    \lVert  \hat{\mathbf{\Pi}}  (\mathbf{f}_{\theta_t}^{\lin} - \mathbf{f}_{\theta_t^{\lin}}^{\lin}) \rVert_{\hat{\mathbf{\Theta}}_0^{-1}}
    \le \frac{(\log n)^c}{\sqrt{n}} \frac{2K^{\Delta}}{a_{\mathcal{C}}^+ \lambda_{\min}(\mathbf{\Theta})}(1-e^{-\frac12 a_{\mathcal{C}}^+ \lambda_{\min}(\mathbf{\Theta})\eta_0 t}).
\end{equation}
Thus, using $\mathbf{P} = \mathbf{P}\hat{\mathbf{\Pi}}$
\begin{equation}
    \lVert \mathbf{P} (\mathbf{f}_{\theta_t}^{\lin} - \mathbf{f}_{\theta_t^{\lin}}^{\lin}) \rVert_2
    \le \lVert \hat{\mathbf{\Pi}} (\mathbf{f}_{\theta_t}^{\lin} - \mathbf{f}_{\theta_t^{\lin}}^{\lin}) \rVert_2
     \le \sqrt{2\lambda_{\max}(\mathbf{\Theta})} \frac{(\log n)^c}{\sqrt{n}} \frac{2K^{\Delta}}{a_{\mathcal{C}}^+ \lambda_{\min}(\mathbf{\Theta})}.
\end{equation}
The bound for arbitrary test points follows by integrating (\ref{eq: functionspacedeltasplit}). Note that we have
\begin{equation}
    (I-\hat{\mathbf{\Pi}}) \frac{d}{dt} (\mathbf{f}_{\theta_t}^{\lin} - \mathbf{f}_{\theta_t^{\lin}}^{\lin})
    = - \eta_0 (I-\hat{\mathbf{\Pi}}) \mathbf{J}_{\theta_0} \Delta_t
    - \eta_0 (I-\hat{\mathbf{\Pi}}) \hat{\mathbf{\Theta}}_0 \overline{H}_t (\mathbf{f}_{\theta_t}^{\lin} - \mathbf{f}_{\theta_t^{\lin}}^{\lin})
    = - \eta_0 (I-\hat{\mathbf{\Pi}}) \mathbf{J}_{\theta_0} \Delta_t,
\end{equation}
which may drift like $O((\log n)^c/\sqrt{n})t$.
\end{proof}

\section{Proof of lemmas for the linearized dynamics}
\label{appendix: proofslinearizeddynamics}
\linearizedpredinvertibleNTK*
\begin{proof}[Proof of Lemma \ref{lemma: linearizedpredinvertibleNTK}]
In the training points
\begin{equation}
    \frac{d}{dt} \mathbf{g}_t 
    = -\eta_0 \left(\Theta_{\mathbf{x},\mathbf{x}} \nabla_{\mathbf{z}}\mathcal{C}(\mathbf{g}_t) + \beta (\mathbf{g}_t - \mathbf{g}_0) \right).
\end{equation}
Thus,
\begin{equation}
    -\eta_0 \nabla_{z}\mathcal{C}(\mathbf{g}_t) =  \Theta_{\mathbf{x},\mathbf{x}}^{-1} \left(\frac{d}{dt} \mathbf{g}_t  + \eta_0\beta (\mathbf{g}_t - \mathbf{g}_0)  \right).
\end{equation}
Plugging this in for any test point,
\begin{equation}
    \frac{d}{dt} (g_t(x) -g_0(x)) 
    = \Theta_{x,\mathbf{x}}  \Theta_{\mathbf{x},\mathbf{x}}^{-1} \left(\frac{d}{dt} \mathbf{g}_t  + \eta_0\beta (\mathbf{g}_t - \mathbf{g}_0)  \right) -\eta_0 \beta (g_t(x)-g_0(x)).
\end{equation}
Hence,
\begin{equation}
    \frac{d}{dt} \left(e^{\eta_0 \beta t} (g_t(x) -g_0(x))  \right)
    = \Theta_{x,\mathbf{x}}  \Theta_{\mathbf{x},\mathbf{x}}^{-1} \frac{d}{dt} \left( e^{\eta_0 \beta t} (\mathbf{g}_t - \mathbf{g}_0) \right).
\end{equation}
Integrating gives
\begin{equation}
    g_t(x) -g_0(x)
    = \Theta_{x,\mathbf{x}}  \Theta_{\mathbf{x},\mathbf{x}}^{-1} (\mathbf{g}_t - \mathbf{g}_0).
\end{equation}
Thus, using $\mathbf{g}_{\infty} = \Phi^{-1}(\beta f_{\theta_0}(\mathbf{x}))$ gives the first expression.

Moreover, consider $\beta>0$. Then, rearranging the function-space stationary point equation
\begin{equation}
    \Theta_{x,\mathbf{x}} \nabla_{\mathbf{z}} \mathcal{C}\left(\mathbf{g}_{\infty}\right) + \beta g_{\infty}(x)
    = \beta f_{\theta_0}(x),
\end{equation}
gives
\begin{equation}
    f^{\lin}_{\theta_{\infty}^{\lin}}(x) = f_{\theta_0}(x) - \frac{1}{\beta} \Theta_{x,\mathbf{x}} \nabla_{\mathbf{z}} \mathcal{C}\left(\mathbf{g}_{\infty}\right).
\end{equation}
The second expression follows by again plugging in $\mathbf{g}_{\infty}
= \Phi^{-1}(\beta f_{\theta_0}(\mathbf{x}))$.
\end{proof}

\end{document}